\documentclass{article}


\usepackage[preprint]{neurips_2026}

\usepackage[utf8]{inputenc} 
\usepackage[T1]{fontenc}    
\usepackage[colorlinks, linkcolor=red, urlcolor=cyan, citecolor=blue]{hyperref}
\usepackage{url}            
\usepackage{booktabs}       
\usepackage{amsfonts}       
\usepackage{nicefrac}       
\usepackage{microtype}      
\usepackage{xcolor}         

\usepackage{wrapfig}
\usepackage{dsfont}
\usepackage[noend]{algorithmic}
\usepackage{algorithm}

\usepackage{multirow}
\usepackage{amsmath,amsfonts,bm}
\usepackage{amssymb}
\usepackage{mathtools}
\usepackage{amsthm}
\usepackage[capitalize,noabbrev]{cleveref}

\usepackage{pifont}
\usepackage[normalem]{ulem}

\theoremstyle{plain}
\newtheorem{theorem}{Theorem}[section]

\newtheorem{lemma}[theorem]{Lemma}
\newtheorem{corollary}[theorem]{Corollary}
\theoremstyle{definition}
\newtheorem{definition}[theorem]{Definition}

\theoremstyle{remark}

\newcommand{\smodel}{TreeX}

\newcommand{\R}{\mathbb{R}}

\def\rmA{{\mathbf{A}}}

\def\rmX{{\mathbf{X}}}


\def\va{{\bm{a}}}

\def\vh{{\bm{h}}}

\def\vm{{\bm{m}}}

\def\vo{{\bm{o}}}
\def\vp{{\bm{p}}}

\def\vz{{\bm{z}}}


\def\gC{{\mathcal{C}}}
\def\gD{{\mathcal{D}}}
\def\gE{{\mathcal{E}}}

\def\gL{{\mathcal{L}}}

\def\gN{{\mathcal{N}}}

\def\gX{{\mathcal{X}}}

\usepackage{xcolor}
\usepackage{tcolorbox}
\tcbuselibrary{listings,breakable}

\newtcblisting{rulelogbox}{
  colback=gray!4,
  colframe=gray!45,
  listing only,
  listing options={
    basicstyle=\ttfamily\scriptsize,
    breaklines=true,
    columns=fullflexible,
    keepspaces=true,
    showstringspaces=false
  },
  left=1mm,
  right=1mm,
  top=1mm,
  bottom=1mm,
  arc=1mm,
  boxrule=0.4pt
}

\usepackage{tikz}
\newcommand{\bcircled}[1]{%
  \tikz[baseline=-0.65ex]{
    \node[
      shape=circle,
      fill=black,
      text=white,
      inner sep=0.6pt,
      minimum size=0.7em,
      font=\scriptsize\bfseries
    ] (char) {#1};
  }%
}

\newcommand{\mymethod}{TreeX}

\title{Model-Level GNN Explanations via Rule-to-Graph Readout for Logit Reconstruction}

%

\author{
  Shengyao Lu\\{University of Victoria} \\
  \texttt{shengyaolu@uvic.ca} \\
  \And 
  Jiuding Yang\\{University of Alberta}\\
  \texttt{jiuding@ualberta.ca}\\
  \And 
  Aedan J. DeFrates\\LSU ATHENA Lab\\
  \texttt{adefra5@lsu.edu}
  \And
  Keith G. Mills\\LSU ATHENA Lab \\
  \texttt{keith.mills@lsu.edu}
  \And 
  Baochun Li\\ University of Toronto \\
  \texttt{bli@ece.toronto.edu}
  \And 
  Di Niu\\{University of Alberta}\\
  \texttt{dniu@ualberta.ca}\\
}

\begin{document}

\maketitle

\begin{abstract}
We propose {\mymethod}, a novel model-level GNN explanation framework that shifts the explanation target from class-wise rule extraction to rule-based logit reconstruction. 
{\mymethod} recasts the graph-level readout of a pretrained GNN as a weighted rule-level readout: grounded subgraph concepts are composed into logical rules, rule embeddings are computed directly from their symbolic structure, and active rules are passed through the frozen classifier head to reconstruct the GNN's raw multiclass logits. 
As a result, {\mymethod} provides global explanations that remain instantiable on unseen graphs, support subgraph-level grounding, and admit rule-level contribution analysis at test-time.
Experiments on three synthetic and two real-world graph classification benchmarks show that {\mymethod} faithfully reconstructs the base GNN's raw multiclass logits, achieving high probability-level fidelity across datasets. 
Rule-level ablations further demonstrate that the identified critical rules actively support the predicted class while suppressing non-target classes, suggesting that they act as functional units rather than merely serving as post-hoc symbolic artifacts.
Compared with prior class-wise rule-based explainers, {\mymethod} achieves competitive or better prediction agreement while being up to \(20\times\) faster, and additionally provides rule weights, test-time grounding, and logit-level contribution analysis.\footnote{
Earlier unpublished online drafts STExplainer~\cite{lu2024stexplainer} and TreeX~\cite{lu2025treex} are previous versions of this work. 
}
\end{abstract}



\begin{figure*}[ht]
  \centering
  \includegraphics[width=0.88\textwidth]{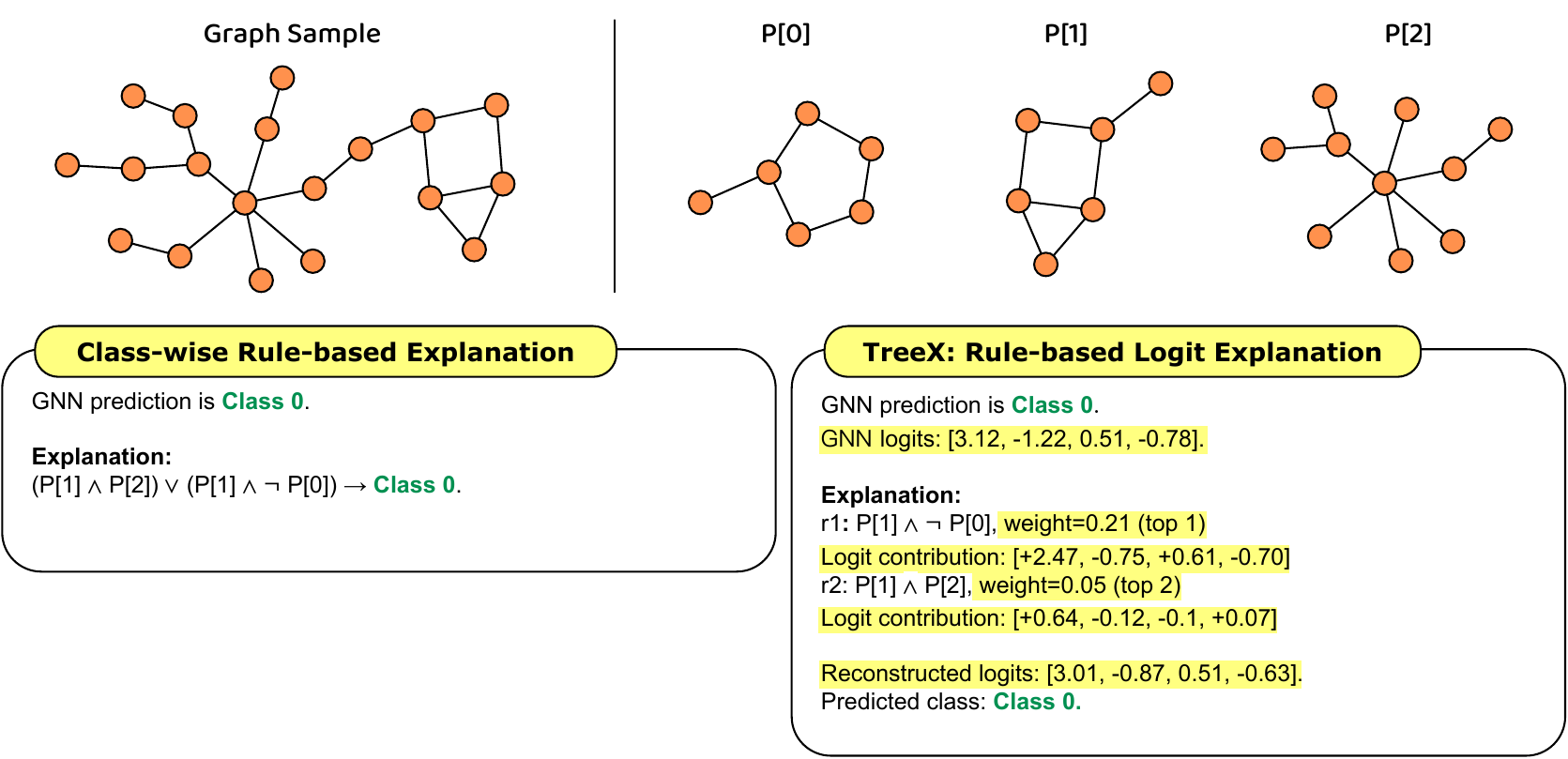}
  \caption{Prior rule-based explainers identify symbolic patterns associated with a predicted class, but do not explain how these rules produce the model’s raw prediction logits. In {\mymethod}, each grounded symbolic rule contributes a numeric vector to the multiclass logits; these rules are combined to faithfully reconstruct the pretrained GNN’s original logits, enabling rule-level analysis of how rules support the predicted class or suppress competing classes.
  }
  \vspace{-0.05in}
  \label{fig:ad}
\end{figure*}

\section{Introduction} \label{sec:intro}
Despite the strong predictive performance of graph neural networks (GNNs)~\cite{hamilton2017inductive, xupowerful, schlichtkrull2018modeling}, understanding their decision logic remains challenging. 
Early studies mainly focus on instance-level explanations~\cite{yuan2021explainability, ying2019gnnexplainer, luo2020parameterized}, identifying nodes or subgraphs responsible for a single prediction. 
However, local explanations explain predictions, not the predictor: they do not reveal whether the same evidence is used systematically across the dataset, whether the model relies on spurious shortcuts, or how multiple concepts jointly define class-level decision boundaries. 
Model-level explainability is therefore essential for auditing, debugging, and validating the behavior of a trained GNN.

Most existing model-level GNN explainers~\cite{azzolin2022global, xuanyuan2023global, armgaan2024graphtrail, geng2026logicxgnn, yuan2020xgnn, wang2022gnninterpreter, yu2025mage} ask what graph pattern or symbolic rule is associated with a class. 
We ask a different question: can the \textit{raw prediction logits} of a trained GNN be reconstructed from human-interpretable rules?
As illustrated in~\cref{fig:ad}, this changes the role of rules from post-hoc class descriptors to functional computational units.
Traditional class-wise rules indicate which symbolic conditions correlate with a class prediction, but they do not explain how multiple rules jointly produce the GNN’s final decision.
In our work, we instead target rule-based reconstruction of the pretrained GNN’s multiclass logits, enabling explanations of how rules jointly produce the final decision by supporting the predicted class, suppressing competing classes, and shaping the decision margin among alternatives.


\newcommand{\contrib}[2]{%
  \bcircled{#1}~\uline{\textbf{\textit{#2.}}}%
}

In this paper, we propose {\mymethod}, a model-level GNN explainability framework that shifts the explanation target from post-hoc class-wise summaries to rule-based reconstruction of the base GNN's raw prediction logits.
Instead of learning an unconstrained surrogate, {\mymethod} intervenes at the readout interface of a pretrained GNN: the message-passing layers and classifier head are kept frozen, while the original node-to-graph readout is replaced by a weighted rule-to-graph readout.
Our contributions are as follows.
\bcircled{1} \textbf{A new explanation target.}
We formulate model-level GNN explanation as rule-based logit reconstruction, moving beyond explanations that only associate graph patterns or class-wise rules with predicted labels~\cite{yu2025mage,geng2026logicxgnn,xuanyuan2023global}.
\bcircled{2} \textbf{A symbolic readout interface.}
We introduce a rule-to-graph readout that treats grounded logical rules as computational units: active rule embeddings are weighted, aggregated into a graph-level representation, and passed through the frozen classifier head to reconstruct the GNN's multiclass logits.
\bcircled{3} \textbf{Grounded and compositional rule construction.}
{\mymethod} mines subtree-inspired subgraph concepts that provide both continuous embeddings for rule construction and concrete subgraph evidence for test-time grounding. It further learns OR-of-CNF rules that preserve alternative realizations within required constraints, yielding a more compositional and structurally expressive rule space than DNF-style formulations under the same rule budget.
\bcircled{4} \textbf{Functional and Faithful Rule Explanations.}
Experiments show that {\mymethod} faithfully reconstructs GNN logits, and that the top active rules selected for each instance increase the GNN-predicted class logit while suppressing non-target logits.
Compared with prior class-wise rule explainers, {\mymethod} remains competitive in prediction agreement while being substantially faster, and additionally supports rule weights, test-time grounding, and rule-level ablation.

\section{Related Work} \label{sec:related_work}
Early studies on GNN explainability focused primarily on instance-level explanations~\cite{bui2024explaining, lu2024eigsearch, feng2022degree, yuan2021explainability, zhang2022gstarx, lu2024goat, ying2019gnnexplainer, luo2020parameterized, shan2021reinforcement, lin2021generative}, which identify a small subgraph or feature subset supporting a single prediction. While effective for local inspection, they cannot summarize how relational evidence recurs across many instances, which is essential for task understanding and GNN model debugging. This limitation has motivated a growing line of work on \textit{model-level} or \textit{global} explanations for GNNs.

Generative approaches such as XGNN~\cite{yuan2020xgnn}, GNNInterpreter~\cite{wang2022gnninterpreter}, PAGE~\cite{shin2024page}, and Gen-GraphEx~\cite{2025gengraphex} seek model-level explanations by generating class-discriminative graphs, prototypes, or distributions of explanation graphs, while MAGE~\cite{yu2025mage} further constrains this process with class-specific motif building blocks to improve structural validity. However, they share a central limitation: a high-scoring generated pattern is not necessarily a faithful or data-grounded rationale. 

Rule or concept-based approaches such as 
GCNeuron~\cite{xuanyuan2023global}, GLGExplainer~\cite{azzolin2022global}, GraphTrail~\cite{armgaan2024graphtrail} and LogicXGNN~\cite{geng2026logicxgnn} move beyond prototype generation toward more explicit symbolic explanations. 
GCNeuron requires predefined rule templates and explains the model mainly through a neuron-aligned-concept summary.
GLGExplainer builds its global Boolean formulas on top of auxiliary local explanations, making its final logic highly dependent on the quality of the local explainer. 
GraphTrail utilizes Shapley heuristics for concept mining, making it inefficient. 
LogicXGNN is the closest related work because it also emphasizes grounded rule-based explanations. 
However, LogicXGNN extracts predicates and class-wise DNF rules through discrete decision-tree procedures. 
This design is effective for obtaining grounded class rules, but it does not naturally provide continuous rule embeddings that can be composed into a readout representation and passed through the frozen classifier head. 
Therefore, we do not use LogicXGNN-style predicate extraction as the rule-construction backbone of {\mymethod}. 
Our goal is not only to obtain symbolic class-wise rules, but to build rule embeddings that reconstruct the base GNN's multiclass logits.


\section{Preliminaries} \label{sec:prelim}


\paragraph{Graph Neural Networks (GNNs).}
\label{sec:gnn}
Let \(G=(V,E)\) be a graph with node set \(V\), edge set \(E\), and \(N=|V|\) nodes. 
A GNN~\cite{kipf2017semisupervised,xupowerful,hamilton2017inductive} model $f(\rmX,\rmA)$ maps the node features $\rmX \in \R^{N\times d}$ of dimension $d$ and the adjacency matrix $\rmA \in \R^{N\times N}$ indicating the existence or absence of edges $E$ to a target output such as a node label, graph label, or edge label. 
In this paper, we study graph classification and assume a pretrained classifier $f(\rmX,\rmA)$ that is frozen during explanation learning.
Let $l$ denote a message-passing layer. At layer $l$, the GNN aggregates neighbourhood information for each node $v\in V$ from its previous-layer representation $\vh^{(l-1)}_v$ and produces the updated representation $\vh^{(l)}_v$. As shown in \cref{fig:overview}a), after $L$ layers, the graph-level representation is obtained by a readout function
$\vh_G = \operatorname{READOUT}\big(\{\vh_v^{(L)} : v \in V\}\big),$ followed by a classifier head $\phi(\cdot)$ that maps $\vh_G$ to raw prediction logits $\vo_G$. 



\paragraph{Conjunctive Normal Form (CNF).}
The symbolic layer of our framework is built from propositional logic over learned subgraph concepts. A \textit{literal} is either a positive atom $P[k]$ or its negation $\neg P[k]$, where $P[k]$ means that concept $k$ is present in the graph. A \textit{clause} is a disjunction of literals,
$\ell_1 \vee \ell_2 \vee \cdots \vee \ell_m,$
and a CNF rule is a conjunction of clauses,
$r = c_1 \wedge c_2 \wedge \cdots \wedge c_M.$


We adopt CNF because we aim to explain a target class through a small set of jointly required concept-level constraints, while still allowing each constraint to have multiple alternative realizations. This directly matches the AND-of-OR structure of CNF and provides a compact symbolic interface for the rule learner introduced next.

\section{Methodology} \label{sec:method}
\begin{figure*}[t]
  \centering
  \vspace{-0.45in}
  \includegraphics[width=\textwidth]{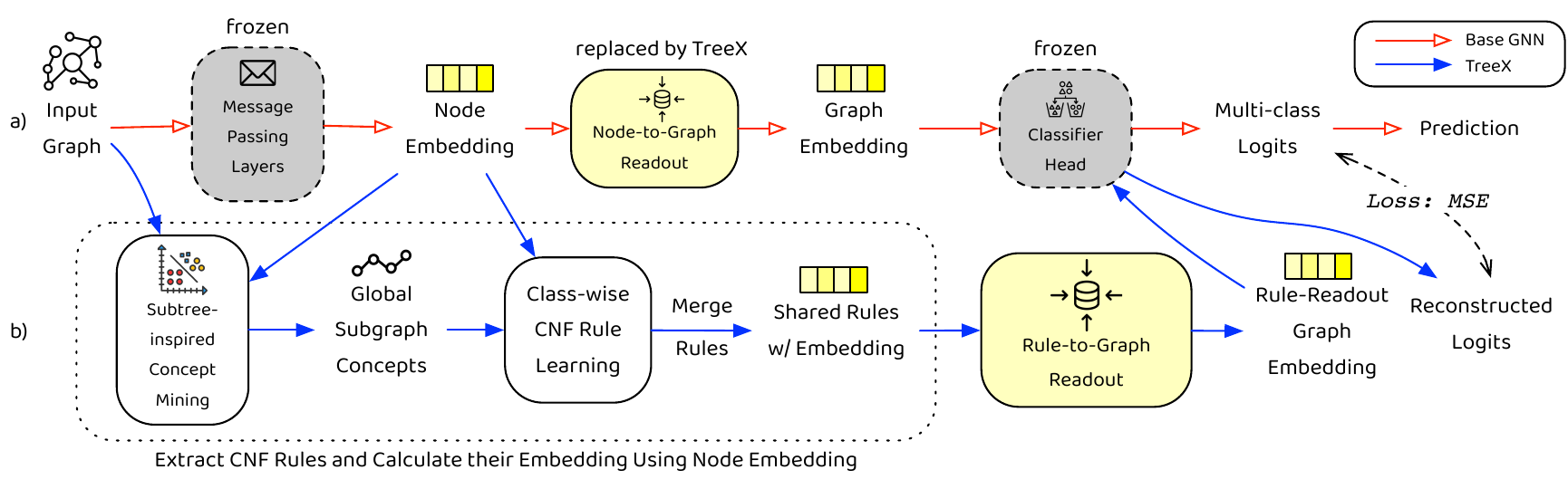}
  \vspace{-0.2in}
  \caption{(a) Pipeline of a typical GNN. (b) Overview of {\mymethod}.
{\mymethod} recasts the readout layer of the base GNN as a rule-level readout.
From the node embeddings, it mines global subgraph concepts, learns class-wise CNF candidate rules, merges them into a shared rule bank, and constructs a rule-readout graph embedding.
Eventually, it produces a multiclass logit vector, trained by minimizing the MSE between the reconstructed and base-GNN logits.}
  \vspace{-0.15in}
  \label{fig:overview}
\end{figure*}

The full pipeline of {\mymethod} is illustrated in~\cref{fig:overview}: we \textbf{first} build a global concept space, \textbf{then} use class-wise CNF rule discovery only to generate candidate symbolic rules, and \textbf{finally} merge them into one shared weighted rule bank that mimics the base GNN's multiclass behavior.

\subsection{Subtree-Inspired Concept Mining}
\paragraph{Subtree-inspired local subgraph concept extraction.}
\label{sec:local_concept}

We use a subtree-inspired local concept mining mechanism in order to ground the global rules easier on local instances later at test time. We theoretically justify this design in \underline{\bf Appendix~\ref{app:just_subtree}}. 

Specifically, at training time, for each graph $G$, we cluster the $l$-th layer node embeddings within the graph into at most $K_{\text{local}}$ clusters using KMeans~\cite{macqueen1967some}. Let $C_m \subseteq V$ denote one such hidden-state cluster. Each cluster is converted into a support subgraph by aggregating the $l$-hop receptive field of all its member nodes, i.e., aggregating the $l$-hop rooted subtrees.
If $\gE_l(v)$ denotes the set of edges inside the $l$-hop neighborhood of node $v$, we define the support count of edge $e$ under cluster $C_m$ as:
\begin{equation}
    c_m(e) = \sum_{v \in C_m} \mathbf{1}[e \in \gE_l(v)].
\end{equation}
We retain only edges with sufficiently strong shared support, where we set $Z_m=5$ in our experiments:
\begin{equation}
    E_m^{*} = \{ e \in E \mid c_m(e) \ge \min(\max_{e'} c_m(e'), Z_m) \}.
\end{equation}
This step biases subgraph concepts toward structurally stable evidence rather than idiosyncratic noise from a single node. 
The local concept embedding is then calculated by aggregating the cluster-wise subtree embedding, which is typically a mean aggregator 
$
\vz_{m}=\operatorname{AGG}^{\text{local}} \left( \left\{ \vh_v^{(l)}: v \in C_m\right\}\right).
$

\paragraph{Mode-based global concept mining.}
\label{sec:global_concept}
All local concept embeddings extracted from the \textit{training split} are then clustered again with KMeans into $K_{\text{global}}$ global concept clusters. We take the cluster centroid as the global concept embedding. 

For each global cluster, our method selects one representative local subgraph by a two-stage lexicographic majority filter followed by a centroid-distance tie-break. Specifically, among all local concepts assigned to a global cluster, it first keeps only those matching the \textit{mode} of the shape signature 
\begin{equation*}
    s^{\text{shape}} = (|V_{\text{subgraph}}|,\; |E_{\text{subgraph}}|),
\end{equation*}
then from the remaining, keeps only those matching the \textit{mode} of the degree signature
\begin{equation*}
    s^{\text{deg}} = (\max \text{degree},\; \operatorname{mean} \text{degree}),
\end{equation*}
and finally selects, from the remaining candidates, the concept closest to the KMeans centroid. This representative subgraph is the stored global concept prototype. 

We use a two-stage filter because degree is only meant to refine candidates that already match the dominant shape. If we instead take the mode of the joint signature $(s^{\text{shape}}, s^{\text{deg}})$ in one step, shape-mismatched subgraphs can still influence the degree pattern and make the selection less robust.

After the global concept bank is fixed from the training split, every graph instance is projected into this shared concept space, yielding the concept mask:
\begin{equation*}
    M_{gk} =\text{number of local concept occurrences assigned to global concept } k \text{ in graph } g.
\end{equation*}

\subsection{Neuro-Symbolic Rule Learning over Global Subgraph Concepts}
\label{sec:neurosym}
Unlike arbitrary Boolean formulas \cite{azzolin2022global, armgaan2024graphtrail} that introduce a much less structured search space, and DNF (OR-of-ANDs) \cite{geng2026logicxgnn} that flattens the explanation into alternative conjunctions rather than explicitly modeling alternatives within each required constraint, we adopt an OR-AND-OR relaxation for symbolic rule learning. That is, OR over literals inside clause, AND over clauses inside rule, OR over rules for class decision. This design preserves the inner structure of alternative constraints and allows us to further learn weights over the outermost OR later, showing which active rules matter more.

\paragraph{Class-specific literalization.} 
The symbolic branch is trained class by class. For a target class $t$, it first selects the most class-discriminative global concepts on the training split. Define
\begin{equation}
    \Delta_{k}^{+}(t)= \mathbb{E}[M_{gk} \mid y_g=t]-\mathbb{E}[M_{gk} \mid y_g \neq t],
\end{equation}
\begin{equation}
    \Delta_{k}^{-}(t)= \mathbb{E}[M_{gk} \mid y_g \neq t] -\mathbb{E}[M_{gk} \mid y_g=t].
\end{equation}
The top concepts under $\Delta_k^{+}(t)$ are used to instantiate positive literals $P[k]$ indicating the existence of global concept $k$, while the top concepts under $\Delta_k^{-}(t)$ instantiate negative literals $\neg P[k]$ indicating the absence of $k$.

Each graph is then converted into a literal feature vector $x_g \in [0,1]^L$, where $L$ is number of literals. The positive concept literal is scored by a smooth presence, the negative literal is its complement. 
\begin{equation}
    \label{eq:literal_value}
    x_{g,P[k]}=1 - \exp\!\left(-\frac{M_{gk}}{\tau}\right),
    \qquad
    x_{g,\neg P[k]} = 1 - x_{g,P[k]}.
\end{equation}
where $\tau\in [0,1]$ is the presence temperature.

\paragraph{Differentiable CNF rule learning.}
Given the literal matrix $X \in [0,1]^{N \times L}$ for a target class $t$, we train a rule learner that fits a differentiable CNF-style rule system in a one-vs-rest manner. 
We use a hierarchical differentiable logic parameterization.

The model maintains trainable literal logits $\alpha_{r,c,\ell}$ and clause logits $\beta_{r,c}$, converted to gates via the sigmoid:
\begin{equation}
    a_{r,c,\ell} = \sigma(\alpha_{r,c,\ell}), \qquad
    b_{r,c} = \sigma(\beta_{r,c}).
\end{equation}
For a graph $g$, the soft satisfaction of clause $c$ is computed as a G{\"o}del t-conorm~\cite{zadeh1965fuzzy} over gated literal activations to ensure at least one literal is strongly present:
\begin{equation}
    s_{g,r,c}=\max_{\ell}\left(a_{r,c,\ell}\, x_{g,\ell}\right).
\end{equation}
The rule then applies a product-style soft AND~\cite{goguen1969logic} across clauses, in which deselected clauses act as multiplicative identities:
\begin{equation}
    q_{g,r}=\prod_c\left[1 - b_{r,c}\left(1 - s_{g,r,c}\right)\right].
\end{equation}
At the top level, class prediction is formed by a probabilistic-sum~\cite{pearl1988probabilistic}, equivalently a noisy-OR style disjunction, aggregating across candidate rules: 
\begin{equation}
    p_g^{(t)}=1 - \prod_r \left(1 - q_{g,r}\right).
\end{equation}
The learner is optimized with binary cross-entropy plus sparsity and gate-sharpening regularizers:
\begin{equation}
\label{eq:cnf_bce}
    \gL_{\text{CNF}}^{(t)}=\operatorname{BCE}(p_g^{(t)}, y_g^{(t)})\;+\;\lambda_{\text{lit}} \, \overline{a}\;+\;\lambda_{\text{clause}} \, \overline{b}\;+\;\gamma\left(\overline{a(1-a)} + \overline{b(1-b)}\right),
\end{equation}
where $\overline{a}$ and $\overline{b}$ denote the means of all literal and clause gates, respectively. The $\ell_1$-style terms encourage sparse rule structures, while the $a(1-a)$ and $b(1-b)$ terms push gates away from $0.5$ and hence toward near-binary decisions. 

We convert the soft rule system into discrete CNF rules by thresholding clause gates and literal gates. For each rule, it keeps up to a configured number of clauses and literals whose gates exceed the thresholds.

\subsection{Recasting Node Embedding Readout as Rule Embedding Readout}
\label{sec:recast_readout}
\paragraph{Rule embedding calculation.}
We estimate the rule embedding by a soft attention readout similar to~\cite{li2016gated} based on the gate activation values in the learned CNF rules. 
We  define the literal embedding by the corresponding global concept embedding with a sign determined by literal polarity:
\begin{equation}
e_\ell = s_\ell \, c_{\kappa(\ell)},
\qquad
s_\ell \in \{+1,-1\},
\end{equation}
where $\kappa(\ell)$ is the global concept referenced by literal $\ell$, $c_{\kappa(\ell)}$ is its global concept embedding learned in~\cref{sec:global_concept}, $s_\ell=+1$ for a positive literal, and $s_\ell=-1$ for a negative literal. 

Following the CNF relaxation, we calculate the clause and rule embedding. For a clause $c$ with retained literals $\gL_{r,c}$, we define the clause embedding as a normalized mixture of literal embedding: 
\begin{equation}
    e_{r,c}=\sum_{\ell \in \gL_{r,c}} \omega_{r,c,\ell}\ e_\ell, 
    \qquad \omega_{r,c,\ell}=\frac{a_{r,c,\ell}}{\sum_{\ell' \in \gL_{r,c}} a_{r,c,\ell'}}.
\end{equation}
Similarly, for a rule $r$ with clauses $\gC_r$, the rule embedding is
\begin{equation}
    e_r=\sum_{c \in \gC_r} \hat{\nu}_{r,c} e_{r,c},
    \qquad \hat{\nu}_{r,c}=\frac{|\gC_r|\, b_{r,c}}{\sum_{c' \in \gC_r} b_{r,c'}}. 
\end{equation}
It multiplies the normalized mixture by the number of retained clauses so that conjunction is not averaged away. The reason is direct: inside a clause, OR represents alternative ways to realize one semantic requirement, so normalization is desirable; across clauses, AND represents jointly required conditions, so the rule embedding should accumulate clause evidence rather than collapse it into a unit-mass average. 

After the discrete rules are fixed, for graph $g$, clause $c$, and rule $r$, the clause score $\tilde{s}_{g,r,c}$ and the rule activation soft mask $\tilde{q}_{g,r}$ are calculated by:
\begin{equation}
    \tilde{s}_{g,r,c}=\max_{\ell \in \gL_{r,c}} \left(\omega_{r,c,\ell} x_{g,\ell}\right),
    \qquad
    \tilde{q}_{g,r}=\prod_{c \in \gC_r}\left[1 - b_{r,c}\left(1-\tilde{s}_{g,r,c}\right)\right].
\end{equation}
The rule embedding is not meant to re-encode Boolean semantics by itself. It is a continuous summary of a discrete rule whose actual firing behavior is still determined by the CNF structure.

\paragraph{Merging class-wise rules into a unified shared rule bank.}
The class-specific CNF rule learner serves only as a candidate generator. To reproduce the full multiclass output of the GNN, we merge all class-wise rule candidates into a unified shared rule bank. 
This two-stage design is deliberate. Learning one shared symbolic rule bank directly from multiclass supervision can make rule discovery much harder and tends to favor generic rules over class-discriminative ones. By first discovering rules in a one-vs-rest manner, each class gets a focused search problem and yields more discriminative candidate rules. We merge them only afterward, so the final shared bank preserves both cross-class diversity and explicit symbolic structure.

\paragraph{Rule-embedding readout via learned rule weights.}
The extracted rules are not assumed to be equally important. Instead, we learn a single global weight vector $w \in \mathbb{R}^{R_{\text{shared}}}$ over the merged discrete rule bank. 
Let $E^{\text{rule}} \in \mathbb{R}^{R_{\text{shared}} \times d_r}$ be the matrix of shared rule embeddings and let $\tilde{q}_g \in [0,1]^{R_{\text{shared}}}$ be the corresponding rule activation scores for graph $g$. We apply thresholded-soft activation:
\begin{equation}
    z_{g,r}=\mathbf{1}\!\left[\tilde{q}_{g,r} \ge \tau_r\right]\tilde{q}_{g,r},
\end{equation}
where $\tau_r$ is the rule-specific threshold inherited from CNF extraction.

The rule-weighted representation is then passed through the frozen classifier head $\phi(\cdot)$ inherited from the base GNN, for the rule-based raw prediction logits: 
\begin{equation}
    \hat{\vo}_g = \phi\!\left(\left(w \odot z_g\right) E^{\text{rule}}\right).
\end{equation}
This can be viewed as replacing the original node-to-graph readout with a weighted rule-to-graph readout: local subgraph concepts are first abstracted into CNF rules, and the weighted combination of active rule embeddings is then fed into the frozen classifier head inherited from the base GNN, where $\hat{\vo}_g \in \mathbb{R}^{C}$ denotes the reconstructed class-logit vector.
Let $\vo_g^{\text{GNN}}$ be the raw prediction logits of the original GNN. The rule-weight objective is to make the unified rule explainer mimic these logits:
\begin{equation}
    \gL_{\text{weight}}
    = \operatorname{MSE}\!\left(\hat{\vo}_g,\; \vo_g^{\text{GNN}}\right)
      + \lambda_w \|w\|_2^2.
\end{equation}
This stage changes the semantics of rule weighting. The learned scalar $w_r$ is global, shared across all classes, and therefore describes how strongly rule $r$ contributes to the model-level readout rather than to only one class-specific decision proxy. Because negative literal embeddings are explicitly defined as the negated embeddings of their positive counterparts, a negative rule weight can be interpreted as emphasizing the negation of that rule. 

\subsection{Test-Time Rule Grounding}
\label{sec:grounding}

At test time, we reuse the global concept bank, the extracted rules, and the learned rule weights. For each unseen graph, we run the same projection pipeline as in training: we extract local concepts, map them to the fixed global concepts, build the graph-level literal vector $x_g$ following~\cref{eq:literal_value}, and evaluate the fixed rules on this graph.
This gives an instance-specific rule explanation: we know which literals are satisfied, which rules are activated, and which activated rules matter most under the learned rule weights.
A key advantage of this instantiation is that it captures both positive and negative evidence. 
For positive literals $P[j]$, we can further map the instantiated rules back to concrete local concept occurrences and their edge sets on the current graph. Negative literals $\neg P[j]$ are visualized as the absence of mode-based global subgraph concepts.

Our framework also enables a direct contribution analysis. For any rule $r$, we can ablate it by zeroing out only its contribution and recomputing the rule-based prediction. Let $\hat{\vo}_g^{(/ r)}$ denote the weighted-rule logits after ablating rule $r$. Then
$\Delta_{g,r} = \hat{\vo}_g  - \hat{\vo}_g^{(/r)}$
measures the class-wise influence of that rule within the unified rule explainer. When the rule explainer faithfully reproduces the original GNN's output, this ablation profile supports analysis on how strongly the rule affects each class and how the reconstructed prediction changes when the rule contribution is removed.

\section{Experiments and Analysis} \label{sec:experiment}
We conduct extensive experiments on a broad collection of both synthetic and real-world datasets to investigate the following research questions:
\begin{enumerate}
    \item[\textbf{RQ1.}] How faithfully can {\mymethod} reproduce the raw multiclass logits of the base GNN?
    \item[\textbf{RQ2.}] Do the critical rules identified by {\mymethod} support the base GNN's prediction? 
    \item[\textbf{RQ3.}] Are the key design components of {\mymethod} necessary for its performance?
    \item[\textbf{RQ4.}] How does {\mymethod} compare with existing class-wise rule-based GNN explainers?
\end{enumerate}

\paragraph{Datasets.}
We use five benchmark datasets, summarized in \cref{tab:data_stats}~(\underline{\bf Appendix~\ref{app:exp_setup}}). While BA-2Motifs~\cite{luo2020parameterized} and BAMultiShapes~\cite{azzolin2022global} are synthetic datasets, Mutagenicity~\cite{kazius2005derivation} and NCI1~\cite{wale2008comparison, pires2015pkcsm} are real-world molecular datasets. 
We further construct a four-class synthetic dataset, BA-Neg, to evaluate {\mymethod} beyond binary classification. 
BA-Neg contains classes defined not only by the presence of discriminative motifs, but also by the explicit absence of certain concepts, allowing us to test whether {\mymethod} can recover such negative evidence from the base GNN~\footnote{The BA-Neg dataset and the code for constructing it are included in the code package.}.


\subsection{Logit Faithfulness and Prediction Probability Fidelity}
\label{sec:exp:logit_fidelity}
To answer RQ1, we evaluate the Mean-Absolute Error (MAE) between GNN raw multiclass logits and TreeX multiclass logits. We additionally define the \textit{Absolute Probability Fidelity} ($\operatorname{APF}$) and \textit{Target-class Absolute Probability Fidelity} ($\operatorname{TAPF}$) to present how faithful TreeX is in reproducing the base GNN's prediction:
\begin{equation}
    \operatorname{APF}=1-\frac{1}{2|\gD|}\sum_{i=1}^{|\gD|} \Big\|\vp_i^{\operatorname{GNN}} - \vp_i^{\operatorname{rule}} \Big\|_1, 
    \quad
    \operatorname{TAPF}=1-\frac{1}{|\gD|}\sum_{i=1}^{|\gD|} \Big|\vp_i^{\operatorname{GNN}}(y_i) - \vp_i^{\operatorname{rule}}(y_i) \Big|,
\end{equation}
where $\vp_i = \operatorname{softmax}\left(\vo_i\right)$ refers to the prediction probability vector, $y_i$ is the GNN's prediction, $\gD$ is the test set. 
In binary classification, $\operatorname{APF}$ and $\operatorname{TAPF}$ are equivalent. In multiclass tasks such as BA-Neg, however, the two metrics are complementary: $\operatorname{TAPF}$ directly measures confidence preservation on the GNN-predicted class, whereas $\operatorname{APF}$ measures full-distribution agreement across all classes. Due to space constraints, results of $\operatorname{TAPF}$ are moved to \underline{\bf Appendix~\ref{app:delta_logit_tapf}}. 

\begin{table}[t]
    \vspace{-0.2in}
    \caption{Logit-level fidelity and component ablations of {\mymethod}. We report MAE between raw logits and APF (\%) between probability outputs of the base GNN and {\mymethod} (RQ1).
    We compare the full {\mymethod} with degraded variants, showing whether each design is necessary (RQ3). 
    These metrics are not reported for prior explainers because they do not reconstruct GNN multiclass logits.   
    The best results are in \colorbox{red!30}{\textbf{bold}}. {\mymethod} can get the \underline{\bf same} results as the \textit{w/ DNF} variant when let \# literals=1. }
    \label{tab:fidelity}
    \centering
    \scalebox{0.65}{
    \begin{tabular}{lcccccccccc}
    \toprule
    \multirow{2.5}{*}{Variant} & \multicolumn{2}{c}{BA-2Motifs} & \multicolumn{2}{c}{BAMultiShapes} & \multicolumn{2}{c}{BA-Neg} & \multicolumn{2}{c}{Mutagenicity} & \multicolumn{2}{c}{NCI1} \\
    \cmidrule(r){2-3}\cmidrule(r){4-5}\cmidrule(r){6-7}\cmidrule(r){8-9}\cmidrule(r){10-11}
    & MAE $(\downarrow)$ & APF $(\uparrow)$ & MAE $(\downarrow)$ & APF $(\uparrow)$ & MAE $(\downarrow)$ & APF $(\uparrow)$ & MAE $(\downarrow)$ & APF $(\uparrow)$ & MAE $(\downarrow)$ & APF $(\uparrow)$  \\
    \midrule
    w/o rules &  {1.37}\footnotesize{$\pm$0.41} & {63.5}\footnotesize{$\pm$18.1} & {6.42}\footnotesize{$\pm$0.14} & {55.6}\footnotesize{$\pm$1.8} & {2.21}\footnotesize{$\pm$0.07} & {50.9}\footnotesize{$\pm$11.5} & {1.52}\footnotesize{$\pm$0.10} & {67.8}\footnotesize{$\pm$1.2} & {2.90}\footnotesize{$\pm$0.06} & {63.8}\footnotesize{$\pm$2.6} \\
    w/o merge &  {0.90}\footnotesize{$\pm$0.75} & {92.3}\footnotesize{$\pm$3.9} & {5.90}\footnotesize{$\pm$0.56} & {64.0}\footnotesize{$\pm$5.4} & {0.94}\footnotesize{$\pm$0.03} & {84.0}\footnotesize{$\pm$2.7} & {1.20}\footnotesize{$\pm$0.05} & {77.4}\footnotesize{$\pm$1.0} & {2.57}\footnotesize{$\pm$0.07} & {70.2}\footnotesize{$\pm$0.3} \\
    w/o $\neg$ &  {1.29}\footnotesize{$\pm$1.26} & {94.8}\footnotesize{$\pm$1.6} & {4.66}\footnotesize{$\pm$0.56} & {71.4}\footnotesize{$\pm$5.2} & {1.17}\footnotesize{$\pm$0.11} & {71.6}\footnotesize{$\pm$4.1} & {6.68}\footnotesize{$\pm$4.07} & {80.2}\footnotesize{$\pm$2.2} & {9.57}\footnotesize{$\pm$2.80} & {73.0}\footnotesize{$\pm$2.2} \\
    w/ DNF &  {0.28}\footnotesize{$\pm$0.03} & {96.9}\footnotesize{$\pm$0.4} & {4.00}\footnotesize{$\pm$2.41} & {74.9}\footnotesize{$\pm$4.3} & {1.05}\footnotesize{$\pm$0.11} & {91.6}\footnotesize{$\pm$7.2} & \colorbox{red!30}{{\bf 1.05}\footnotesize{$\pm$0.19}} & \colorbox{red!30}{{\bf 82.1}\footnotesize{$\pm$1.2}} & {2.41}\footnotesize{$\pm$0.13} & {74.1}\footnotesize{$\pm$1.4} \\
    \midrule
    {\mymethod} (Ours) & \colorbox{red!30}{{\bf 0.26}\footnotesize{$\pm$0.01}} & \colorbox{red!30}{{\bf 97.2}\footnotesize{$\pm$0.3}} & \colorbox{red!30}{{\bf 3.30}\footnotesize{$\pm$1.48}} & \colorbox{red!30}{{\bf 80.0}\footnotesize{$\pm$3.4}} & \colorbox{red!30}{{\bf 0.90}\footnotesize{$\pm$0.57}} & \colorbox{red!30}{{\bf 92.3}\footnotesize{$\pm$6.9}} & {1.23}\footnotesize{$\pm$0.48} & {79.3}\footnotesize{$\pm$7.2} & \colorbox{red!30}{{\bf 2.30}\footnotesize{$\pm$0.12}} & \colorbox{red!30}{{\bf 75.5}\footnotesize{$\pm$1.4}}\\
    \bottomrule
    \end{tabular}}
    \vspace{-0.1in}
\end{table}

We focus on the full {\mymethod} results in the last row of \cref{tab:fidelity}. 
{\mymethod} consistently reconstructs the base GNN outputs with high APF.
The strongest results appear on BA-2Motifs and BA-Neg, where the learned rule-level readout closely matches the GNN's output distribution. 
The multiclass BA-Neg result is particularly important, as it shows that {\mymethod} is not limited to binary class-wise rule prediction but can reconstruct a multiclass logit-producing decision process. 
On the more challenging real-world molecular datasets, the APF remains above \(75\%\), indicating that the symbolic readout preserves substantial probability-level agreement with the base GNN. 
Since raw-logit MAE depends on the scale of the GNN logits, we interpret MAE together with APF: the former measures direct logit reconstruction error, while the latter measures how much of the GNN's predictive distribution is preserved after softmax. 
Together, these results demonstrate that {\mymethod} provides a faithful symbolic approximation of the base GNN's prediction logits. 
Due to space constraints, qualitative analysis of {\mymethod} is moved to \underline{\bf Appendix~\ref{app:gc_my}--\ref{app:test_ground}}.

\subsection{Rule Impact Analysis}
\label{sec:exp:rule_impact}

\begin{figure*}[t]
  \centering
  \includegraphics[width=\textwidth]{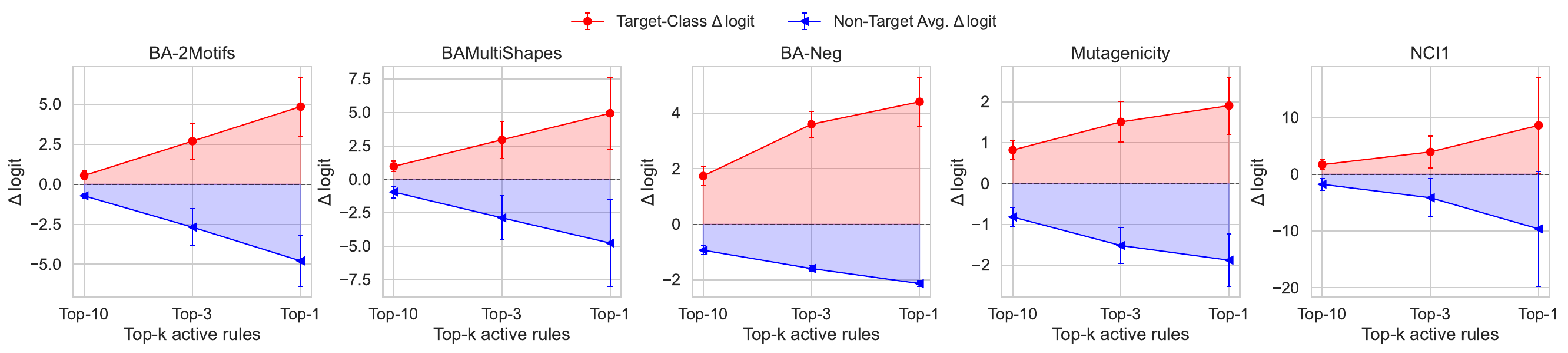}
  \vspace{-0.2in}
  \caption{Top-$k$ rule ablation on the test set. For each dataset, we report the change in reconstructed logits after individually ablating the top-$k$ critical rules and averaging their effects (RQ2). The corresponding numerical values are provided in \cref{tab:rule_ablation}.}
  \vspace{-0.1in}
  \label{fig:rule_ablation}
\end{figure*}

As described in \cref{sec:grounding}, we can use $\Delta_{g,r} = \hat{\vo}_g  - \hat{\vo}_g^{(/r)}$ to measure the class-wise influence of each rule. 
To answer RQ2, we perform active-only rule ablation on the test set.
For each test graph, we individually ablate each of the top-\(k\) active critical rules and average their effects. 
As shown in \cref{fig:rule_ablation}, the identified critical rules consistently support the target class and suppress non-target classes. 
A positive target-class $\Delta$ logit means that removing the rule lowers the target-class logit, while a negative non-target average $\Delta$ logit means that removing the rule raises the average non-target logit. 
Thus, the critical rules actively push the reconstructed prediction toward the target class.

The effect is strongest for Top-1 rules, which is expected because the highest-ranked active rule should be the most discriminative rule for the current graph. 
As \(k\) increases from 1 to 10, the average effect decreases because less critical active rules are included. 
This trend shows that test-time rule instantiation can identify rules with meaningful class-wise influence. 
Overall, these results demonstrate that the critical rules are not merely post-hoc symbolic descriptions; they are functional units whose removal directly changes the reconstructed logits. 
Visualizations are in \underline{\bf Appendix~\ref{app:test_ground}}.

\subsection{Ablation Study}
\label{sec:exp:ablation}
To answer RQ3, we use the degraded variants in \cref{tab:fidelity} to assess whether the main design choices of {\mymethod} are necessary. 
The variant \textit{w/o rules} removes the rule learning module and directly learns a concept-level readout over the global concept bank from \cref{sec:global_concept}. 
This consistently leads to the weakest APF on all datasets, which confirms that simply weighting global concepts is not sufficient. 

The \textit{w/o merge} variant directly learns a shared CNF set instead of first discovering class-wise CNF candidates and then merging them into a shared rule bank. 
Its performance is consistently below the full model. 
This supports our two-stage design: class-wise rule learning provides a more focused search space for discovering discriminative symbolic structures, while the subsequent merge forms a shared rule bank for multiclass logit reconstruction.

The \textit{w/o \(\neg\)} variant disables negative literals and negative rules. 
Its degradation is particularly visible in logit MAE on Mutagenicity and NCI1, where MAE increases from \(1.23\) to \(6.68\) and from \(2.30\) to \(9.57\), respectively. 
This suggests that absence-based evidence is important for reconstructing the raw logits, even when probability-level APF can sometimes remain competitive after softmax. 
The result is also consistent with the motivation of BA-Neg: negative evidence is not merely a visualization artifact, but a useful component of the symbolic readout.

Finally, the \textit{w/ DNF} variant collapses our weighted CNF formulation to a DNF-style rule system by restricting each clause to contain only one literal. 
This variant is a degenerate case of {\mymethod} rather than an equally expressive alternative under the same rule budget. 
Its competitive performance on some datasets, and its best result on Mutagenicity, suggests that the learned decision structure there may be sufficiently simple or DNF-like. 
Rather than weakening our design, this result shows that the proposed weighted CNF formulation can gracefully recover DNF-style rules when they are adequate. 
On the remaining datasets, however, the full model performs better, indicating that OR-of-CNF rules provide additional structural expressiveness for modeling alternatives within required constraints, particularly in compositional or multiclass decision settings. 
Importantly, DNF can be obtained by reducing literals per clause, while a native DNF design cannot naturally recover the richer weighted CNF structure.

Overall, the ablation results support the necessity of the main TreeX design choices. 
Note that subtree-inspired concept extraction is the representation interface of {\mymethod}: it produces the global concept embeddings used in rule embeddings, the concept occurrence masks used for rule activation, and the local edge-level evidence used for test-time grounding. Replacing it would change the concept bank, the literal space, and the test-time grounding procedure, making the resulting system a different explainer rather than a controlled ablation. We therefore do not ablate this component in isolation. 

\subsection{Comparison with Class-Wise Rule-Based Explainers}
\label{sec:exp:compare_baseline}

\begin{table}[t]
\vspace{-0.35in}
    \caption{Prediction Agreement (PA) between the base GNNs and rule-based explainers on the test sets (RQ4) and runtime (in $10^3$ seconds) for various methods. PA is equivalent to data-grounded fidelity in LogicXGNN. Baseline results are reported as presented in~\cite{geng2026logicxgnn}. The best results are in \colorbox{red!30}{\textbf{bold}}, runner-ups are \colorbox{blue!20}{\underline{underlined}}.}
    \label{tab:agreement}
    \centering
    \scalebox{0.75}{
    \begin{tabular}{lcccccc}
    \toprule
    \multirow{2.5}{*}{Method} & \multicolumn{2}{c}{BAMultiShapes} & \multicolumn{2}{c}{Mutagenicity} & \multicolumn{2}{c}{NCI1} \\
    \cmidrule(r){2-3}\cmidrule(r){4-5}\cmidrule(r){6-7}
    & PA $(\uparrow)$ & Time $(\downarrow)$ & PA $(\uparrow)$ & Time $(\downarrow)$ & PA $(\uparrow)$ & Time $(\downarrow)$ \\
    \midrule
    GLG~\cite{azzolin2022global} & {31.09}\footnotesize{$\pm$5.81} & {0.31}\footnotesize{$\pm$0.02} & {38.98}\footnotesize{$\pm$3.01} & {0.73}\footnotesize{$\pm$0.02} & {9.61}\footnotesize{$\pm$7.76} & {0.88}\footnotesize{$\pm$0.02} \\
    GTrail~\cite{armgaan2024graphtrail} & {79.82}\footnotesize{$\pm$3.64} & {2.54}\footnotesize{$\pm$0.12} & {65.93}\footnotesize{$\pm$3.83} &{20.05}\footnotesize{$\pm$1.12} & {60.04}\footnotesize{$\pm$4.91} & {24.07}\footnotesize{$\pm$1.12}\\
    LogicXGNN~\cite{geng2026logicxgnn} & \colorbox{blue!20}{\underline{{82.67}\footnotesize{$\pm$0.57}}} & \colorbox{blue!20}{\underline{{0.02}\footnotesize{$\pm$0.00}}} & \colorbox{red!30}{{\bf 81.36}\footnotesize{$\pm$2.12}} & \colorbox{blue!20}{\underline{{0.61}\footnotesize{$\pm$0.01}}} & \colorbox{blue!20}{\underline{{73.81}\footnotesize{$\pm$2.26}}} & \colorbox{blue!20}{\underline{{0.44}\footnotesize{$\pm$0.00}}} \\
    \midrule
    {\mymethod} (Ours) &\colorbox{red!30}{{\bf 89.00}\footnotesize{$\pm$4.55}} &\colorbox{red!30}{{\bf 0.01}\footnotesize{$\pm$0.00}} & \colorbox{blue!20}{\underline{{80.38}\footnotesize{$\pm$8.41}}} & \colorbox{red!30}{{\bf 0.03}\footnotesize{$\pm$0.00}} &\colorbox{red!30}{{\bf 74.21}\footnotesize{$\pm$2.15}} &\colorbox{red!30}{{\bf 0.03}\footnotesize{$\pm$0.00}} \\
    \bottomrule
    \end{tabular}}
    \vspace{-0.15in}
\end{table}

Since prior rule-based explainers do not reconstruct multiclass logits, we compare {\mymethod} with them using Prediction Agreement (PA), the metric shared by all methods:
\begin{equation}
    \operatorname{PA}=\frac{1}{|\gD|}\sum_{i=1}^{|\gD|}\mathbf{1}\left[\hat{y}_i^{\operatorname{rule}}=y_i\right].
\end{equation}
PA is equivalent to the data-grounded fidelity used in LogicXGNN~\cite{geng2026logicxgnn}, but we use the name Prediction Agreement to distinguish it from our data-grounded probability-level fidelity metrics APF and TAPF.

As shown in \cref{tab:agreement}, {\mymethod} is competitive with existing class-wise rule-based explainers under this shared evaluation metric, while being substantially faster. 
On BAMultiShapes, {\mymethod} achieves the highest PA, improving over LogicXGNN from \(82.67\%\) to \(89.00\%\), and reduces runtime from \(0.02\) to \(0.01\) in \(10^3\) seconds. 
On NCI1, {\mymethod} also slightly improves PA over LogicXGNN while reducing runtime from \(0.44\) to \(0.03\), a speedup of about \(14.7\times\). 
On Mutagenicity, the results of {\mymethod} are comparable with the best baseline, and {\mymethod} reduces runtime from \(0.61\) to \(0.03\), yielding about a \(20.3\times\) speedup.
This is consistent with our DNF ablation: Mutagenicity appears to be well captured by simpler DNF-like class-wise rules, whereas BAMultiShapes contains more substitutable motif combinations and benefits more from our OR-of-CNF structure and shared rule bank.

These results should be interpreted together with the scope of each method. 
GLGExplainer relies on auxiliary local explanations before extracting global formulas, GraphTrail uses costly concept-mining heuristics and outputs class-wise formulas, and LogicXGNN extracts discrete DNF-style rules over pre-enumerated local topology catalog. 
None of these methods produces rule embeddings, a shared weighted rule bank, or reconstructed multiclass logits. 
{\mymethod} remains competitive under this traditional hard-label agreement metric while additionally supporting rule-level logit reconstruction, learned rule weights, and rule-level ablation.
Qualitative comparison is in \underline{\bf Appendix~\ref{app:gc_baselines}}.

\section{Conclusion, Limitations and Future Directions}
In this paper, we introduce {\mymethod}, a model-level GNN explanation framework that recasts graph-level readout as symbolic rule-level readout. 
Instead of extracting class-wise rules that only predict hard labels, {\mymethod} reconstructs the base GNN's raw multiclass logits from a shared bank of grounded logical rules. 
This design yields explanations that are global, test-time instantiable, subgraph-grounded, and analyzable through rule-level weights and ablations.

A current limitation is that rule embeddings are computed by directly aggregating concept embeddings according to the learned rule structure. 
This makes the explanation stable and transparent, but may be less expressive for domains where concepts interact through more complex semantics. 
Extending rule-level readout reconstruction to broader graph tasks and other neural architectures is therefore an important direction for future work.
While this work focuses on GNNs, it suggests a broader route for explaining neural models with structured intermediate representations: identify human-interpretable concepts, compose them into symbolic rules, and use these rules as readout-level units to reconstruct the model's logits.




\bibliography{custom}
\bibliographystyle{plain}


\appendix
\section{Theoretical Justification for Subtree-Based Subgraph Concept Extraction.}
\label{app:just_subtree}
The local concept mining mechanism we introduced in \cref{sec:local_concept} utilizes two critical designs: i) mining over \emph{subtrees} rather than subgraphs to extract concepts; ii) representing the full $L$-hop subtrees by the $L$-th layer root node embeddings. 

The first design sharply reduces the search space compared with mining over all possible subgraphs. This is because in each graph instance with $N$ nodes, there are exactly \(N\) full \(L\)-hop rooted subtrees, whereas the number of possible subgraphs can grow exponentially with \(N\).
Although searching over possible subgraphs in a single instance is feasible in local-level explainability~\cite{yuan2021explainability, zhang2022gstarx, shan2021reinforcement}, it becomes much more challenging in global-level explainability, where the dataset can be large. Our method, mining over subtrees, provides a practical direction for extracting global graph concepts. 

The second design further improves the mining process compared with existing explainability methods~\cite{yuan2021explainability, azzolin2022global}. Typically, existing methods represent subgraphs by feeding them into the original GNN to obtain subgraph embeddings, necessitating additional calculations for each possible subgraph. In contrast, we directly use the node embeddings to represent subtrees, which are easily obtained by feeding the graph to the GNN just once. In the remainder of this section, we explain why we can use root node embeddings to represent subtrees, by showing that the $l$-th layer root node embedding from a maximally powerful MPGNN is an exact mapping of the corresponding full $l$-hop rooted subtree. 

\subsection{Preliminaries}
\paragraph{WL algorithm.}
Please see \cref{alg:wl} for the review of the WL algorithm. 
\begin{algorithm}[ht]
    \caption{The $1$-dimensional Weisfeiler-Lehman Algorithm}
    \label{alg:wl}
    \begin{algorithmic}[1]
        \STATE {\bfseries Input:} Graph $G=(\mathcal{V,E})$, the number of iterations $T$
        \STATE {\bfseries Output:} Color mapping $\mathcal{X}_G:\mathcal{V}\rightarrow\mathcal{C}$
        \STATE {\bfseries Initialize:} $\mathcal{X}_G(v) \coloneqq \text{hash}(G[v])$ for all ${v}\in \mathcal{V}$
        \FOR{$t\leftarrow 1$ {\bfseries to} $T$}
            \FOR{{\bfseries each} ${v}\in\mathcal{V}$}
            \STATE $\mathcal{X}^t_G(v)\coloneqq \text{hash}\left( \mathcal{X}^{t-1}_G(v), \{\!\{  \mathcal{X}^{t-1}_G(u) :u\in \mathcal{N}_G(v)  \}\!\}\right)$
            \STATE {\bfseries break upon convergence}
        \ENDFOR
        \ENDFOR
        \STATE {\bfseries Return:} $\mathcal{X}^{T}_G$
    \end{algorithmic}
\end{algorithm}

\paragraph{Message-Passing Graph Neural Networks.}
In this paper, we focus on WL-based GNNs~\cite{kipf2017semisupervised,xupowerful,hamilton2017inductive} (or MPNNs). 
Typical WL-based GNNs aggregate the information from the 1-hop neighbours $\gN$ of $v$ as:
\begin{equation} \label{eq:gnn}
    \vh^{(l)}_v = \operatorname{UPDATE}^{(l)}\left( \vh^{(l-1)}_v, \operatorname{AGG}^{(l)} \left( \left\{ \vh_u^{(l-1)}: u \in \gN(v)\right\}\right) \right), 
\end{equation}
where $\operatorname{UPDATE}^{(l)}$ and $\operatorname{AGG}^{(l)}$ represent the updating and aggregation functions. In particular, an example maximally powerful MPGNN, GIN~\cite{xupowerful}, updates node representations as:
\begin{equation}
    \label{eq:gin}
    \vh_v^{(l)} = \operatorname{MLP}^{(l)}\left(\left(1+\epsilon^{(l)}\right)\cdot \vh_v^{(l-1)}+\sum_{u\in\gN(v)}\vh_u^{(l-1)} \right).
\end{equation}

\paragraph{Subtrees.}
Given a graph $G=(V,E)$, a \textit{full $l$-hop subtree} $T^{(l)}_v=(V_{T^{(l)}_v}, E_{T^{(l)}_v})$ rooted at $v\in V$, is the entire underlying tree structure within $l$-hop distance from $v$, where $V_{T^{(l)}_v}, E_{T^{(l)}_v}$ are multisets, \textit{i.e.}, a set with possibly repeating elements. Every element in $V_{T^{(l)}_v}$ is also an element in $V$; every element in $E_{T^{(l)}_v}$ is also an element in $E$. The repetitions of the same node in the original graph are treated as distinct nodes in the subtrees, such that the pattern is still a cycle-free tree. Each subtree of $G$ corresponds to a subgraph in the original graph $G$. In the $1$-WL test~\cite{leman1968reduction}, subtrees are used to distinguish the underlying subgraphs. 

\subsection{Explaining Maximally Powerful MPNNs}
\label{app:max_gnn_subtree}
\begin{definition} [\bf Perfect Rooted Tree Representation] \label{def:perfect_tree}
    Let $T_v$ denote a tree in a countable space $\gX$, which is rooted at node $v$, $f(\cdot)$ be the only function to generate the representations of rooted trees in the space, $\vh_v$ be the representation of $T_v$. We define $\vh_v$ be the Perfect Rooted Tree Representation of $T_v$, if the following holds: For any arbitrary same-depth rooted tree $T_u$ in the same countable space, $\,\vh_v=\vh_u$ if and only if $T_v$ is isomorphic to $T_u$.
\end{definition}

We then show that if both $\operatorname{AGG}$ and $\operatorname{UPDATE}$ in \cref{eq:gnn} are injective, then the $l$-th layer node embedding is a Perfect Rooted Tree Representation of the full $l$-hop rooted subtree by mathematical induction, which is described in \cref{thm:gnn_subtree}. 

\begin{theorem}
    \label{thm:gnn_subtree}
    Given a graph $G=(V,E)$ with the countable input node features $\rmX$, and a $L$-layer GNN $f(\cdot)$ that updates the layer-wise node-embeddings by \cref{eq:gnn}. Then $\forall l\in \{1,\dots L\}$ and $\forall v\in V$, the $l$-th layer node embedding $\vh^{(l)}_v$ is a Perfect Rooted Tree Representation of the full $l$-hop subtree rooted at $v$, if the functions $\operatorname{AGG}$ and $\operatorname{UPDATE}$ in \cref{eq:gnn} are injective.
\end{theorem}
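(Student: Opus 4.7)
The plan is to prove \cref{thm:gnn_subtree} by induction on the layer index $l$, exploiting the injectivity of $\textsc{Agg}^{(l)}$ and $\textsc{Update}^{(l)}$ in both directions of the ``iff'' required by \cref{def:perfect_tree}.

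For the base case $l=1$, observe that the full $1$-hop subtree $T_v^{(1)}$ is fully described by the tuple $(\rmX_v, \{\!\!\{\rmX_u : u \in \gN(v)\}\!\!\})$, where $\{\!\!\{\cdot\}\!\!\}$ denotes the multiset of neighbor features. Since $\vh_v^{(1)} = \textsc{Update}^{(1)}(\rmX_v, \textsc{Agg}^{(1)}(\{\!\!\{\rmX_u : u \in \gN(v)\}\!\!\}))$ and both functions are injective on the (countable) space of their inputs, $\vh_v^{(1)} = \vh_u^{(1)}$ holds iff the root feature and neighbor multisets coincide, i.e., iff $T_v^{(1)} \cong T_u^{(1)}$.

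For the inductive step, assume $\vh_w^{(l-1)}$ is a Perfect Rooted Tree Representation of $T_w^{(l-1)}$ for every $w \in V$. For the ``only if'' direction, suppose $T_v^{(l)} \cong T_u^{(l)}$. Any isomorphism of full $l$-hop rooted subtrees restricts to isomorphisms between corresponding depth-$(l{-}1)$ subtrees hanging off children of the roots, so the neighbor multisets satisfy $\{\!\!\{ T_{v'}^{(l-1)} : v' \in \gN(v)\}\!\!\} \cong \{\!\!\{ T_{u'}^{(l-1)} : u' \in \gN(u)\}\!\!\}$ (as multisets up to isomorphism), and also $T_v^{(l-1)} \cong T_u^{(l-1)}$. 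Applying the inductive hypothesis componentwise gives $\vh_v^{(l-1)} = \vh_u^{(l-1)}$ and $\{\!\!\{\vh_{v'}^{(l-1)} : v' \in \gN(v)\}\!\!\} = \{\!\!\{\vh_{u'}^{(l-1)} : u' \in \gN(u)\}\!\!\}$, after which the deterministic recurrence \cref{eq:gnn} yields $\vh_v^{(l)} = \vh_u^{(l)}$.

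For the ``if'' direction, suppose $\vh_v^{(l)} = \vh_u^{(l)}$. Injectivity of $\textsc{Update}^{(l)}$ forces $\vh_v^{(l-1)} = \vh_u^{(l-1)}$ and $\textsc{Agg}^{(l)}(\{\!\!\{\vh_{v'}^{(l-1)}\}\!\!\}_{v'\in\gN(v)}) = \textsc{Agg}^{(l)}(\{\!\!\{\vh_{u'}^{(l-1)}\}\!\!\}_{u'\in\gN(u)})$; injectivity of $\textsc{Agg}^{(l)}$ then forces these two neighbor multisets to be equal. By the inductive hypothesis, the root subtrees $T_v^{(l-1)}$ and $T_u^{(l-1)}$ are isomorphic, and equality of the neighbor multisets of embeddings yields a bijection $\pi : \gN(v) \to \gN(u)$ with $\vh_{v'}^{(l-1)} = \vh_{\pi(v')}^{(l-1)}$ for all $v' \in \gN(v)$; by the inductive hypothesis again, each pair of depth-$(l{-}1)$ subtrees $T_{v'}^{(l-1)}$ and $T_{\pi(v')}^{(l-1)}$ is isomorphic. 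Gluing these subtree isomorphisms along $\pi$ to the root isomorphism produces a graph isomorphism $T_v^{(l)} \cong T_u^{(l)}$, completing the induction.

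The main obstacle I anticipate is the backward direction of the inductive step: going from multiset equality of neighbor embeddings to an actual bijection $\pi$ that realizes a subtree isomorphism. The subtlety is that the inductive hypothesis only tells us equality of embeddings implies isomorphism of the corresponding depth-$(l{-}1)$ subtrees \emph{pointwise}, and one needs to verify that a multiset matching suffices to glue these pointwise isomorphisms into a single isomorphism of the full $l$-hop rooted trees. A clean way to handle this is to phrase the argument in terms of the canonical ``depth-$l$ unfolding tree'' rooted at $v$ (in the sense of the $1$-WL algorithm), so that isomorphism of rooted unfolding trees is exactly characterized by equality of the recursively defined colorings, matching the recursion of \cref{eq:gnn} step-for-step.
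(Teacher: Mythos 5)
Your proof takes essentially the same route as the paper's: induction on the layer index, with the base case reducing the $1$-hop subtree to the pair (root feature, neighbor feature multiset), and the inductive step using injectivity of $\textsc{Update}$ and $\textsc{Agg}$ to pass between isomorphism of depth-$l$ unfolding trees and equality of layer-$l$ embeddings --- the paper merely phrases both directions of the biconditional as proofs by contradiction (its Cases (i) and (ii)), which is a cosmetic difference. If anything, you are more explicit than the paper on the one delicate point, namely gluing the pointwise depth-$(l-1)$ subtree isomorphisms along the matching $\pi$ into an isomorphism of the full $l$-hop trees (asserted without elaboration in \cref{lemma:induc_step}); the only ingredient the paper adds that you omit is the countability lemma of \citet{xupowerful} (\cref{lem:countable}), which guarantees the layer-wise embeddings stay countable so the inductive hypothesis remains applicable.
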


\begin{proof}
    Please refer to \cref{app:prove:gnn_subtree} for the proof.
\end{proof}

As presented in \cref{eq:gin}, the maximally powerful MPGNN, e.g., GIN, utilizes add-pooling and MLP as the $\operatorname{AGG}$ and $\operatorname{UPDATE}$ functions, which are both injective for countable inputs. By \cref{thm:gnn_subtree}, we acquire that GIN uniquely maps the full $l$-hop subtrees to the $l$-th layer embeddings of their root nodes. 
Therefore, we can directly use the root node embeddings to represent the corresponding rooted subtrees. 

\subsection{Explaining Less Powerful MPNNs}
\label{app:less_gnns}
The $1$-WL test is able to decide the graph isomorphism in most real-world cases~\cite{zopf20221}. Therefore, for the maximally expressive MPNNs, we induce the subgraph-level concepts with the full $l$-hop subtrees in \cref{sec:method}. However, there exist many GNNs like GCN~\cite{kipf2017semisupervised} and GraphSAGE~\cite{hamilton2017inductive} that are significantly less expressive than the $1$-WL test. In this subsection, we explain how these less expressive GNNs can be explained using our proposed {\smodel}. 

GCN~\cite{kipf2017semisupervised} updates node representations as: 
\begin{equation}
    \label{eq:gcn}
    \vh^{(l)}_v = \operatorname{ReLU}\left( W\cdot\operatorname{MEAN}\left\{ \vh_u^{(l-1)}, \forall u\in \gN(v)\cup\{v\}\right\}\right),
\end{equation}
where $W$ is a learnable matrix, and $\operatorname{MEAN}$ represents the element-wise mean-pooling. 

GraphSAGE~\cite{hamilton2017inductive} updates node representations as the linear mapping on the concatenation of the last-layer node embedding and the aggregation of the neighbouring node embeddings: 
    \begin{align}
    \label{eq:sage}
        \va^{(l-1)}_v &= \text{MAX}\left( \left\{ \sigma\left(W\cdot \vh_u^{(l-1)}\right), \forall u\in \gN(v) \right\} \right), \\
        \vh^{(l)}_v &= \sigma\left(W\cdot \left[\vh_v^{(l-1)}, \va^{(l-1)}_v\right]\right),
    \end{align}
where $\text{MAX}$ represents the element-wise max-pooling, $\sigma$ refers to $\operatorname{ReLU}$.

\subsubsection{Relationship between Subtrees and Node Embeddings for Less Powerful MPNNs}

In this subsection, we discuss the less expressive GNNs by studying the $\operatorname{UPDATE}$ and $\operatorname{AGG}$ functions of them. 

If the $\operatorname{UPDATE}$ function is injective, then the distinctness of the embeddings will not change after feeding into the $\operatorname{UPDATE}$ function. 
Therefore, if a $l$-th layer node embedding after $\operatorname{AGG}$ can map two distinct subtrees at the same time, then after the injective $\operatorname{UPDATE}$ function, it will represent the same pair of distinct subtrees. 
This forms the following corollary.

\begin{corollary} \label{corol:less_expressive}
    Given a graph $G=(V,E)$, let $f(\cdot)$ denote a $L$-layer GNN that updates the countable layer-wise node-embeddings by ~\cref{eq:gnn}. We define 
    the intermediate $l$-th layer embedding derived by the $\operatorname{AGG}$ function of node $v\in V$ as 
    $\vh_{v,\operatorname{AGG}}^{(l)}$, where $\vh_v^{(l)}=\operatorname{UPDATE}^{(l)}\left( \vh_{v,\operatorname{AGG}}^{(l)} \right)$.
    Then the following holds:
    \begin{itemize}
        \item[(i)] If $\operatorname{UPDATE}$ is injective, then $\vh_v^{(l)}$ is a Perfect Rooted Tree Representation of the full $l$-hop subtree rooted at $v$ if and only if $\,\vh_{v,\operatorname{AGG}}^{(l)}$ is a Perfect Rooted Tree Representation of the full $l$-hop subtree rooted at $v$.
        \item[(ii)] If $\operatorname{UPDATE}$ is injective, then $\vh_v^{(l)}$ is a mapping of both a full $l$-hop subtree and another non-isomorphic $l$-hop subtree rooted at $v$, \textit{if and only if} $\vh_{v,\operatorname{AGG}}^{(l)}$ is a mapping of the same pair of non-isomorphic subtrees at the same time. 
    \end{itemize}
\end{corollary}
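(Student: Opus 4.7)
The plan is to derive both parts of Corollary~\ref{corol:less_expressive} as direct consequences of the injectivity of $\textsc{Update}^{(l)}$ combined with the trivial fact that any deterministic function preserves equalities. The identity to exploit is
\begin{equation*}
\vh_v^{(l)} \;=\; \textsc{Update}^{(l)}\bigl(\vh_{v,\textsc{Agg}}^{(l)}\bigr),
\end{equation*}
which, once $\textsc{Update}^{(l)}$ is assumed injective, makes the assignment $\vh_{v,\textsc{Agg}}^{(l)} \mapsto \vh_v^{(l)}$ a bijection on its image. This bijection is what lets us transport the ``distinguishing power'' back and forth between the intermediate and the output embedding.

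For part (i), I would unfold Definition~\ref{def:perfect_tree} and prove the biconditional
$\vh_v^{(l)} = \vh_u^{(l)} \iff T_v \cong T_u$
for an arbitrary same-depth rooted tree $T_u$, starting from the analogous biconditional at the $\textsc{Agg}$-level (or vice versa). The ``forward'' implication (perfectness of $\vh_{v,\textsc{Agg}}^{(l)}$ implies perfectness of $\vh_v^{(l)}$) uses that $\textsc{Update}^{(l)}$ maps equal inputs to equal outputs for the ``if'' half, and injectivity of $\textsc{Update}^{(l)}$ for the ``only if'' half; the reverse implication merely swaps these two roles. Part (ii) follows by essentially the same argument, now applied to a single pair $(T_v, T_u)$ with $T_v \not\cong T_u$: a collision at the output level lifts to the input level by injectivity of $\textsc{Update}^{(l)}$, and a collision at the input level descends to the output level because $\textsc{Update}^{(l)}$ is a function.

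The main obstacle will not be technical — the argument is essentially quantifier bookkeeping — but rather making sure the implicit identification between $\vh_{v,\textsc{Agg}}^{(l)}$ and the pair $\bigl(\vh_v^{(l-1)},\,\textsc{Agg}^{(l)}(\{\vh_u^{(l-1)} : u \in \gN(v)\})\bigr)$ used in Equation~\ref{eq:gnn} is handled consistently, so that ``$\textsc{Update}^{(l)}$ is injective'' is interpreted as injectivity in this combined argument rather than in either coordinate alone. A related subtlety is ensuring that the ``mapping'' referenced in the statement is the same deterministic function induced by the GNN on countable rooted trees that appears in Definition~\ref{def:perfect_tree}, so that the two rooted configurations $T_v$ and $T_u$ are evaluated under a common map; this is the only point where one must be slightly careful, and once it is pinned down the rest of the proof reduces to a one-line application of injectivity in each direction.
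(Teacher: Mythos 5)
Your proposal is correct and follows essentially the same route as the paper's proof: both hinge on the fact that an injective $\textsc{Update}^{(l)}$ preserves equality of embeddings in both directions ($\vh_{v,\textsc{Agg}}^{(l)}=\vh_{u,\textsc{Agg}}^{(l)} \iff \vh_v^{(l)}=\vh_u^{(l)}$), so the biconditional in Definition~\ref{def:perfect_tree} and the collision property in part (ii) transfer between the $\textsc{Agg}$-level and output-level embeddings, with functionality handling one direction and injectivity the other. The only cosmetic difference is that the paper argues the reverse implication of (i) via the contrapositive (non-perfectness lifts through $\textsc{Update}$) while you argue it directly, and your remark about reading $\vh_{v,\textsc{Agg}}^{(l)}$ as the combined pair from Equation~\ref{eq:gnn} is a point the paper leaves implicit.
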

\begin{proof}
    Please see \cref{app:proof_less_express} for the proof.
\end{proof}

In the family of WL-based GNNs, a 1-layer MLP (with bias term) or a 2-layer MLP is typically used as the $\operatorname{UPDATE}$ function, which are both injective functions. Some variants of GNNs, like GraphSAGE~\cite{hamilton2017inductive}, may additionally utilize a concatenation as shown in \cref{eq:sage}, which is also injective. Therefore, we assume the $\operatorname{UPDATE}$ function in \cref{eq:gnn} of less powerful MPNNs is an injective function. 

Next, we discuss the $\operatorname{AGG}$ function in \cref{eq:gnn}, including the commonly used add-pooling, mean-pooling and max-pooling methods. The most expressive add-pooling is discussed in \cref{app:max_gnn_subtree}. Mean-pooling, as investigated in \cite{xupowerful}, captures the ``distributions'' of elements in a multiset. In other words, there may exist another subtree $T^{(l)\prime}_v$ that contains the same set of unique elements as the full $l$-hop subtree $T^{(l)}_v$, where the distribution of unique elements in $T^{(l)\prime}_v$ is the same as in $T^{(l)}_v$. Such two subtrees will have the same root node embedding if using the mean-pooling, which can be treated as a perfect representation of the \textit{node distribution} in the full $l$-hop rooted subgraphs. GCN is an example of using mean-pooling as shown in \cref{eq:gcn}. 

Max-pooling treats a multiset as a set~\cite{xupowerful}. This means if there exist two subtrees $T^{(l)\prime}_v$ and $T^{(l)}_v$ contain the same set of unique elements, they will have the same root node embedding, which can be treated as a perfect representation of the unique node set in the full $l$-hop rooted subgraphs. GraphSAGE is an example of max-pooling as shown in \cref{eq:sage}. 

It is worth noting that the MPNNs with mean-pooling or max-pooling tend to be less expressive hence are less preferred for most tasks. Therefore, our primary focus in this paper is to explain GNNs that demonstrate expressiveness comparable to the WL-test algorithm. In these GNNs, node embeddings precisely represent the full $l$-hop subtrees. 

\subsubsection{Hash Model to Explain the Less Powerful MPNNs}

As we discussed in \cref{app:less_gnns}, GNNs that update the node embeddings by mean-pooling may produce the same root node embeddings for the subtrees with the same node distribution, and the ones using max-pooling may produce the same root node embeddings for the subtrees with the same node set. For these GNNs, there is a higher risk of clustering multiple entirely different substructures to the same concept as those non-isomorphic subtrees may share the same root node embedding. 

To mitigate this issue, we introduce a \emph{hash model} that aids in distinguishing global graph concepts induced by subtrees with similar node distributions or node sets but significantly different structures. Specifically, after we obtain the local clusters, we feed each local graph concept to a hash model $\Omega(\cdot)$ that returns the graph embedding of it. Then, we concatenate the hashed graph embedding of the local concept to its original embedding to obtain the updated embedding for it. Let $\vm_{o}$ be the original embedding of local concept $o$, then the updated embedding $\hat{\vm_{o}}$ is 
$\hat{\vm_{o}} = \text{Concat}\left(\vm_{o}, \Omega(o)\right). $
The steps afterwards remain the same as we discussed in \cref{sec:method}. A hash model should be able to distinguish graph concepts that have the same node distribution or the same node set. For example, the WL-test can be used as a hash model.

\section{Lemmas and Proofs} \label{app:proofs}

\subsection{Proof of \texorpdfstring{\cref{thm:gnn_subtree}}{Theorem 2}}
\label{app:prove:gnn_subtree}
\begin{proof}
    We prove \cref{thm:gnn_subtree} by mathematical induction. In the base step, we aim to prove \cref{lem:base_step}. In the inductive step, we aim to prove \cref{lemma:induc_step}.
    \begin{lemma} [Base step]
        \label{lem:base_step}
        Given a graph $G=(V,E)$ with the countable input node features $\rmX$, and a $L$-layer GNN $f(\cdot)$ that updates the layer-wise node-embeddings by \cref{eq:gnn}. Then $\forall v\in V$, the first layer node embedding $\vh^{(1)}_v$ is a Perfect Rooted Tree Representation of the $1$-hop subtree rooted at $v$, if the functions $\operatorname{AGG}$ and $\operatorname{UPDATE}$ in \cref{eq:gnn} are injective. 
    \end{lemma}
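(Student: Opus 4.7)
The plan is to prove \cref{lem:base_step} by establishing the two directions of the biconditional in \cref{def:perfect_tree}, using the explicit form of $\vh^{(1)}_v$ produced by \cref{eq:gnn} at $l=1$. The key observation is that a $1$-hop subtree rooted at $v$ is fully characterized (up to isomorphism) by two pieces of data: the root feature $\vh^{(0)}_v$ and the multiset of neighbor features $\{\!\{\vh^{(0)}_u : u \in \gN(v)\}\!\}$. So the proof reduces to showing that the map
\begin{equation*}
\bigl(\vh^{(0)}_v,\, \{\!\{\vh^{(0)}_u : u \in \gN(v)\}\!\}\bigr) \;\longmapsto\; \vh^{(1)}_v
\end{equation*}
is a bijection on its image.

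For the ``if'' direction, I would assume $T^{(1)}_v$ is isomorphic to $T^{(1)}_u$. By the characterization above, this forces $\vh^{(0)}_v = \vh^{(0)}_u$ and the neighbor multisets to be equal. Applying $\textsc{Agg}^{(1)}$ to equal multisets yields equal outputs, and then applying $\textsc{Update}^{(1)}$ to the same pair of arguments yields $\vh^{(1)}_v = \vh^{(1)}_u$. This direction does not even use injectivity, only well-definedness of the functions on multisets in the countable space $\gX$.

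For the ``only if'' direction, I would assume $\vh^{(1)}_v = \vh^{(1)}_u$ and unwind the message-passing step backwards using injectivity. Since $\textsc{Update}^{(1)}$ is injective, equality of its outputs forces equality of its input pair, giving both $\vh^{(0)}_v = \vh^{(0)}_u$ and $\textsc{Agg}^{(1)}(\{\!\{\vh^{(0)}_w : w \in \gN(v)\}\!\}) = \textsc{Agg}^{(1)}(\{\!\{\vh^{(0)}_w : w \in \gN(u)\}\!\})$. Then injectivity of $\textsc{Agg}^{(1)}$ on multisets of countable elements forces the neighbor multisets to coincide. Matching root features together with matching neighbor multisets is exactly the isomorphism condition for depth-$1$ rooted trees, so $T^{(1)}_v \cong T^{(1)}_u$.

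The only mildly delicate point, and the one I expect to need the most care, is justifying that equality of the root features and equality of the neighbor multisets really do characterize isomorphism of full $1$-hop rooted subtrees as defined in \cref{sec:subtree}, where repeated nodes are treated as distinct tree nodes. I would handle this by noting that a depth-$1$ rooted tree is, combinatorially, the root together with its ordered-by-label neighbor bag; since the neighbors are leaves, there is no further structure to match, and the isomorphism reduces exactly to equality of root feature and equality of the neighbor feature multisets. Once this is stated cleanly, the chain of equivalences above finishes the lemma.
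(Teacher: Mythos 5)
Your proof is correct and follows essentially the same argument as the paper's: both reduce isomorphism of $1$-hop rooted subtrees to equality of the root feature together with the multiset of neighbor features, and use injectivity of $\textsc{Agg}$ and $\textsc{Update}$ to transfer that equivalence to the layer-$1$ embeddings. The only differences are organizational—the paper argues by contradiction over the two failure cases of \cref{def:perfect_tree} while you prove the two directions of the biconditional directly—plus your (correct) observation that the isomorphic-implies-equal-embeddings direction needs only well-definedness rather than injectivity, a point the paper states imprecisely.
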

    \begin{proof}
        Let $T_v$ be the $1$-hop subtree rooted at $v$ in $G$. Assume $\vh^{(1)}_v$ is not a Perfect Rooted Tree Representation of $T_v$. Then either of the cases should hold: 
        \begin{itemize}
            \item[(i)] There exists another 1-hop subtree $T_u$, embedded by $f(\cdot)$ as $\vh_u^{(1)}$, which is non-isomorphic to $T_v$, but $\vh^{(1)}_v=\vh_u^{(1)}$; 
            \item[(ii)] There exists an isomorphic subtree $T_u$, embedded by $f(\cdot)$ as $\vh_u^{(1)}$, where $\vh_v^{(1)} \neq \vh_u^{(1)}$. 
        \end{itemize}
        According to \cref{eq:gnn}, we can calculate $\vh^{(1)}_v$ and $\vh_u^{(1)}$ by: 
        \begin{equation*} \vh^{(1)}_v = \operatorname{UPDATE}^{(1)}\left( \rmX_v, \operatorname{AGG}^{(1)} \left( \left\{ \rmX_w: w \in \gN(v)\right\}\right) \right), \end{equation*}
        \begin{equation*} \label{eq:vhu1}
            \vh_u^{(1)} = \operatorname{UPDATE}^{(1)}\left( \rmX_u, \operatorname{AGG}^{(1)} \left( \left\{ \rmX_w: w \in \gN(u)\right\}\right) \right),
        \end{equation*} 
        First, we consider Case (i). If $T_v$ and $T_u$ are non-isomorphic $1$-hop subtrees, then $\rmX_u\neq\rmX_v$, or the multisets $\left\{ \rmX_w: w \in \gN(v)\right\}\neq\left\{ \rmX_w: w \in \gN(u)\right\}$.
        Recall that an injective function $g(\cdot)$ refers to a function that that maps distinct elements of its domain to distinct elements. That is, $x_1\neq x_2$ implies $g(x_1)\neq g(x_2)$; $x_1= x_2$ implies $g(x_1)= g(x_2)$. 
        If $\rmX_u\neq\rmX_v$ or $\left\{ \rmX_w: w \in \gN(v)\right\}\neq\left\{ \rmX_w: w \in \gN(u)\right\}$, since $\operatorname{AGG}$ and $\operatorname{UPDATE}$ are injective, we have $\vh_v^{(1)}\neq\vh_u^{(1)}$. 
        Hence, we have reached a contradiction. 
        
        Next, we consider Case (ii). 
        If $T_u$ is isomorphic to $T_v$, then $\rmX_u=\rmX_v$ and the multisets $\left\{ \rmX_w: w \in \gN(v)\right\}=\left\{ \rmX_w: w \in \gN(u)\right\}$. Since $\operatorname{AGG}$ and $\operatorname{UPDATE}$ are both injective, we have $\vh_v^{(1)}=\vh_u^{(1)}$. Hence, we have reached a contradiction. 

        Therefore, if the functions $\operatorname{AGG}$ and $\operatorname{UPDATE}$ in \cref{eq:gnn} are injective, $\vh^{(1)}_v$ is a Perfect Rooted Tree Representation of the $1$-hop subtree rooted at $v$. 
    \end{proof}

    \begin{lemma} [Inductive step]
        \label{lemma:induc_step}
        Given a graph $G=(V,E)$, assume the countable node representation $\vh^{(l-1)}_v$ for $v\in V$ be the Perfect Rooted Tree Representation of the corresponding $(l-1)$-hop subtrees rooted at $v$. We calculate the $l$-th layer representation of $v$, i.e., $\vh^{(l)}_v$, using \cref{eq:gnn}. Then $\vh^{(l)}_v$ is a Perfect Rooted Tree Representation of the full $l$-hop subtree rooted at $v$ if the functions $\operatorname{AGG}$ and $\operatorname{UPDATE}$ are injective. 
    \end{lemma}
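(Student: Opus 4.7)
The plan is to mirror the structure of the base-case proof (\cref{lem:base_step}), but with the input features replaced by the $(l-1)$-th layer embeddings supplied by the inductive hypothesis. I will argue by contradiction in the two symmetric cases: (a) there exist non-isomorphic full $l$-hop rooted subtrees $T^{(l)}_v$ and $T^{(l)}_u$ such that $\vh^{(l)}_v = \vh^{(l)}_u$, and (b) there exist isomorphic subtrees $T^{(l)}_v\cong T^{(l)}_u$ with $\vh^{(l)}_v \neq \vh^{(l)}_u$. Both will be closed using exactly the injectivity of $\textsc{Agg}^{(l)}$ and $\textsc{Update}^{(l)}$ together with the induction hypothesis applied to neighbor embeddings.

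The key structural fact I will invoke is the recursive decomposition of rooted subtrees: the full $l$-hop subtree rooted at $v$ is characterized, up to isomorphism, by the pair consisting of the full $(l-1)$-hop subtree rooted at $v$ and the multiset $\{\!\{T^{(l-1)}_w : w \in \gN(v)\}\!\}$ of full $(l-1)$-hop subtrees rooted at each neighbor. The induction hypothesis says that $\vh^{(l-1)}$ is a Perfect Rooted Tree Representation, i.e. equality of $\vh^{(l-1)}$-values is in bijective correspondence with isomorphism of the underlying $(l-1)$-hop rooted subtrees. I will lift this elementwise bijection to multisets: $\{\!\{\vh^{(l-1)}_w : w \in \gN(v)\}\!\} = \{\!\{\vh^{(l-1)}_{w'} : w' \in \gN(u)\}\!\}$ if and only if the corresponding multisets of $(l-1)$-hop subtrees agree up to a matching of isomorphism classes.

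Once this bridge is in place, case (b) is immediate: $T^{(l)}_v\cong T^{(l)}_u$ implies (by truncating one hop) $T^{(l-1)}_v\cong T^{(l-1)}_u$, hence $\vh^{(l-1)}_v=\vh^{(l-1)}_u$ by the IH, and the two neighbor multisets of $(l-1)$-hop subtrees agree, hence the neighbor embedding multisets agree; feeding equal inputs into $\textsc{Agg}^{(l)}$ and then $\textsc{Update}^{(l)}$ yields $\vh^{(l)}_v=\vh^{(l)}_u$, contradicting the assumption. For case (a), non-isomorphism of $T^{(l)}_v$ and $T^{(l)}_u$ forces at least one of the two components to differ: either $T^{(l-1)}_v\not\cong T^{(l-1)}_u$, in which case the IH gives $\vh^{(l-1)}_v\neq\vh^{(l-1)}_u$; or the neighbor $(l-1)$-hop subtree multisets differ, in which case the multiset lifting of the IH produces distinct neighbor embedding multisets, and the injectivity of $\textsc{Agg}^{(l)}$ on multisets yields distinct aggregated values. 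In either subcase the two arguments to $\textsc{Update}^{(l)}$ differ in at least one coordinate, so injectivity of $\textsc{Update}^{(l)}$ gives $\vh^{(l)}_v \neq \vh^{(l)}_u$, again contradicting the assumption.

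The main obstacle I anticipate is making the multiset-lifting step fully rigorous: the IH is phrased elementwise on rooted subtrees, whereas the aggregation step consumes a multiset. The clean way around this is to observe that the bijection between $(l-1)$-hop isomorphism classes and attained values of $\vh^{(l-1)}$, guaranteed by the IH, extends canonically to a bijection between multisets of isomorphism classes and multisets of embedding values. A secondary subtlety is that $T^{(l)}_v$ is regarded as an abstract rooted tree (repeated occurrences of the same graph node are treated as distinct, per the definition in \cref{sec:subtree}), so the ``children of the root'' in $T^{(l)}_v$ correspond one-to-one with the graph neighbors $\gN(v)$ used in \cref{eq:gnn}; this is what legitimizes identifying the subtree-level decomposition with the neighbor aggregation in the GNN update. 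Once these identifications are made explicit, the two cases close immediately from the injectivity hypotheses.
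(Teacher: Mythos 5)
Your proof is correct and takes essentially the same route as the paper's: a contradiction argument over the same two cases (non-isomorphic full $l$-hop subtrees with equal embeddings, isomorphic ones with unequal embeddings), each closed by decomposing the $l$-hop subtree into the root's $(l-1)$-hop subtree plus the multiset of neighbors' $(l-1)$-hop subtrees and invoking the inductive hypothesis together with injectivity of $\textsc{Agg}$ and $\textsc{Update}$. Your explicit treatment of lifting the inductive hypothesis from single subtrees to multisets of neighbor embeddings is a step the paper's proof uses only implicitly, so if anything your write-up is slightly more rigorous on that point.
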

    \begin{proof}
        Let $T_v^{(l)}$ be the full $l$-hop subtree rooted at $v$ in $G$. Assume $\vh^{(l)}_v$ is not a Perfect Rooted Tree Representation of the full $l$-hop subtree rooted at $v$. Then, either of the following cases should hold:
        \begin{itemize}
            \item[(i)] There exists another full l-hop subtree $T_u^{(l)}$, embedded by $f(\cdot)$ as $\vh_u^{(l)}$, which is non-isomorphic to $T_v^{(l)}$, but $\vh^{(l)}_v=\vh_u^{(l)}$; 
            \item[(ii)] There exists an isomorphic subtree $T_u^{(l)}$, embedded by $f(\cdot)$ as $\vh_u^{(l)}$, where $\vh_v^{(l)} \neq \vh_u^{(l)}$. 
        \end{itemize}
        According to \cref{eq:gnn}, we can calculate $\vh^{(l)}_v$ and $\vh_u^{(l)}$ by: 
        \begin{equation*} \vh^{(l)}_v = \operatorname{UPDATE}^{(l)}\left( \vh^{(l-1)}_v, \operatorname{AGG}^{(l)} \left( \left\{ \vh_w^{(l-1)}: w \in \gN(v)\right\}\right) \right),\end{equation*}
        \begin{equation*} \vh^{(l)}_u = \operatorname{UPDATE}^{(l)}\left( \vh^{(l-1)}_u, \operatorname{AGG}^{(l)} \left( \left\{ \vh_w^{(l-1)}: w \in \gN(u)\right\}\right) \right). \end{equation*}
        \begin{sloppypar}
        First, we consider Case (i). If $T_v^{(l)}$ and $T_u^{(l)}$ are non-isomorphic full $l$-hop subtrees, then the $(l-1)$-hop subtrees $T_v^{(l-1)}$ and $T_u^{(l-1)}$ are non-isomorphic, or the multisets $\left\{ \vh_w^{(l-1)}: w \in \gN(v)\right\}\neq\left\{ \vh_w^{(l-1)}: w \in \gN(u)\right\}$. 
        \end{sloppypar}
        Since $\vh^{(l-1)}_v$ is the Perfect Rooted Tree Representation of the corresponding $(l-1)$-hop subtrees rooted at $v$, we have: If $T_v^{(l-1)}$ and $T_u^{(l-1)}$ are non-isomorphic, then $\vh^{(l-1)}_v\neq\vh^{(l-1)}_u$. Since the functions $\operatorname{AGG}$ and $\operatorname{UPDATE}$ are injective, we have $\vh^{(l)}_v\neq\vh^{(l)}_u$. Hence, we have reached a contradiction. 

        Next, we consider Case (ii). 
        If $T_u^{(l)}$ is isomorphic to $T_v^{(l)}$, then the $(l-1)$-hop subtrees $T_u^{(l-1)}$ and $T_v^{(l-1)}$ are also isomorphic. And we have  and the multisets $\left\{ \vh_w^{(l-1)}: w \in \gN(v)\right\}=\left\{ \vh_w^{(l-1)}: w \in \gN(u)\right\}$. Since $\vh^{(l-1)}_v$ is the Perfect Rooted Tree Representation of the corresponding $(l-1)$-hop subtrees rooted at $v$, we have $\vh^{(l-1)}_v=\vh^{(l-1)}_u$. Since $\operatorname{AGG}$ and $\operatorname{UPDATE}$ are both injective, we have $\vh_v^{(l)}=\vh_u^{(l)}$. Hence, we have reached a contradiction. 

        Therefore, if the functions $\operatorname{AGG}$ and $\operatorname{UPDATE}$ in \cref{eq:gnn} are injective, $\vh^{(l-1)}_v$ for $v\in V$ is the Perfect Rooted Tree Representation of the corresponding $(l-1)$-hop subtrees rooted at $v$, then $\vh^{(l)}_v$ is a Perfect Rooted Tree Representation of the $l$-hop subtree rooted at $v$. 
    \end{proof}
    The following lemma shows that if the input of a GNN is countable, then the node embeddings are also countable.
    \begin{lemma}
        \label{lem:countable}
        \cite{xupowerful}
        Assume the input feature $\gX$ is countable. Let $g^{(l)}$ be the function parameterized by a GNN's $l$-th layer for $l=1,\dots,L$, where $g^{(1)}$ is defined on multisets $X\subset\gX$ of bounded size. The range of $g^{(l)}$, i.e., the space of node hidden features $\vh_v^{(l)}$, is also countable for all $l=1,\dots,L$.
    \end{lemma}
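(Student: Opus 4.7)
The plan is to proceed by induction on the layer index $l$, building on two standard facts: finite Cartesian products of countable sets are countable, and countable unions of countable sets are countable. The key auxiliary observation I would establish first is that if $S$ is countable and $N$ is a fixed positive integer, then the collection of multisets of cardinality at most $N$ drawn from $S$ is countable. To see this, I would fix an enumeration $S = \{s_1, s_2, \dots\}$ and inject each such multiset into $(S \sqcup \{\star\})^N$ by listing its elements in non-decreasing index order, padding with a sentinel $\star$ whenever the multiset has cardinality strictly less than $N$; the codomain is a finite product of countable sets and is therefore countable, so the set of bounded multisets is at most countable.

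For the base case $l=1$, the inputs to $g^{(1)}$ are bounded multisets over the countable feature space $\gX$ (together with the root's own feature in $\gX$). By the auxiliary fact, the domain of $g^{(1)}$ is countable, and hence its image, which is exactly the set of possible first-layer hidden features $\vh_v^{(1)}$, is at most countable. For the inductive step, I would assume the range $\gH_{l-1}$ of $g^{(l-1)}$ is countable, so that every neighbour embedding $\vh_u^{(l-1)}$ lives in $\gH_{l-1}$. Applying the auxiliary fact to $\gH_{l-1}$ shows that the bounded multisets $\{\!\{\,\vh_u^{(l-1)} : u \in \gN(v)\,\}\!\}$ form a countable set, so that the inputs to $g^{(l)}$ lie in a countable set and its image — the space of $l$-th layer embeddings $\vh_v^{(l)}$ — is again at most countable.

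The main point that requires care is the uniform boundedness hypothesis on the multisets: without a size bound, finite multisets over a countable set are still countable, but one must avoid accidentally invoking an uncountable product such as $S^{\sN}$. Here the bound on multiset size is inherited directly from the bounded-size assumption on the domain of $g^{(1)}$ and is propagated unchanged through the induction, since each layer aggregates over the same neighbourhoods. Once the multiset countability lemma is in place, there is no real obstacle beyond this bookkeeping, and the proof is a direct induction on $l$.
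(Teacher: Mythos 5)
Your proof is correct. Note that the paper itself does not prove this lemma at all—it is imported verbatim (Lemma 4) from \citep{xupowerful}—and your argument, induction on the layer index combined with an injection of bounded-size multisets into a finite Cartesian product of countable sets, is essentially the same as the standard proof in that reference, so you have simply reconstructed the cited argument correctly.
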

    This lemma implies that if the input $\rmX_v$ for any $v$ is countable, then $\vh_v^{(l)}$ for any $l$ is also countable, making our assumption in \cref{lemma:induc_step} valid. 

    Hence, we have proved \cref{thm:gnn_subtree} using mathematical induction. 
\end{proof}



\subsection{Proof of \texorpdfstring{Corollary~\ref{corol:less_expressive}}{Corollary 3}} \label{app:proof_less_express}
\begin{proof}
    We first prove (i). First, we assume $\vh_{v,\operatorname{AGG}}^{(l)}$ is a Perfect Rooted Tree Representation of the full $l$-hop subtree $T_v$ rooted at $v$. By Definition~\ref{def:perfect_tree}, for any arbitrary same-depth rooted tree $T_u$ in the same countable space, $\vh_{v,\operatorname{AGG}}^{(l)}=\vh_{u,\operatorname{AGG}}^{(l)}$ if and only if $T_v$ is isomorphic to $T_u$. Since $\operatorname{UPDATE}$ is injective, we have for any arbitrary same-depth rooted tree $T_u$ in the same countable space, $\vh_v^{(l)}=\vh_u^{(l)}$ if and only if $T_v$ is isomorphic to $T_u$. Therefore, if $\operatorname{UPDATE}$ is injective and $\vh_{v,\operatorname{AGG}}^{(l)}$ is a Perfect Rooted Tree Representation of the full $l$-hop subtree $T_v$ rooted at $v$, then $\vh_v^{(l)}$ is a Perfect Rooted Tree Representation of the full $l$-hop subtree rooted at $v$. 

    Secondly, we assume $\vh_{v,\operatorname{AGG}}^{(l)}$ is not a Perfect Rooted Tree Representation of the full $l$-hop subtree $T_v$ rooted at $v$. Then the condition in Definition~\ref{def:perfect_tree} is violated, which means that one of the following cases holds:
    \begin{itemize}
        \item $T_v$ and $T_u$ are isomorphic, but $\vh_{v,\operatorname{AGG}}^{(l)}\neq\vh_{u,\operatorname{AGG}}^{(l)}$;
        \item $T_v$ and $T_u$ are non-isomorphic, but $\vh_{v,\operatorname{AGG}}^{(l)}=\vh_{u,\operatorname{AGG}}^{(l)}$.
    \end{itemize}
    Since $\operatorname{UPDATE}$ is injective, either of the following cases holds:
    \begin{itemize}
        \item $T_v$ and $T_u$ are isomorphic, but $\vh_v^{(l)}\neq\vh_u^{(l)}$;
        \item $T_v$ and $T_u$ are non-isomorphic, but $\vh_v^{(l)}=\vh_u^{(l)}$.
    \end{itemize}
    Therefore, if $\operatorname{UPDATE}$ is injective and $\vh_{v,\operatorname{AGG}}^{(l)}$ is not a Perfect Rooted Tree Representation of the full $l$-hop subtree $T_v$ rooted at $v$, then $\vh_v^{(l)}$ is not a Perfect Rooted Tree Representation of the full $l$-hop subtree rooted at $v$. 

    Hence we can conclude that if $\operatorname{UPDATE}$ is injective, then $\vh_v^{(l)}$ is a Perfect Rooted Tree Representation of the full $l$-hop subtree rooted at $v$ if and only if $\,\vh_{v,\operatorname{AGG}}^{(l)}$ is a Perfect Rooted Tree Representation of the full $l$-hop subtree rooted at $v$. 

    Similarly, we prove (ii). First, we assume $\,\vh_{v,\operatorname{AGG}}^{(l)}$ is a mapping of both a full $l$-hop subtree $T^{(l)}_v=(V_{T^{(l)}_v}, E_{T^{(l)}_v})$ and another non-isomorphic $l$-hop subtree $T^{(l)\prime}_v=(V_{T^{(l)\prime}_v}, E_{T^{(l)\prime}_v})$ rooted at $v$.
    Let $\vh_{v,\operatorname{AGG}}^{(l)}$ denote the intermediate $l$-th layer embedding derived by the $\operatorname{AGG}$ function on $T_v^{(l)}$, $\vh_{v,\operatorname{AGG}}^{(l)\prime}$ denote the intermediate $l$-th layer embedding derived by the $\operatorname{AGG}$ function on $T_v^{(l)\prime}$.
    Then we get $\vh_{v,\operatorname{AGG}}^{(l)}=\vh_{v,\operatorname{AGG}}^{(l)\prime}$. Since $\operatorname{UPDATE}$ is injective, $\vh_v^{(l)}=\vh_v^{(l)\prime}$, where $\vh_v^{(l)\prime}$ is the node embedding computed using the same function as $\vh_v^{(l)}$, but on $T^{(l)\prime}_v$. Therefore, if $\operatorname{UPDATE}$ is injective and $\,\vh_{v,\operatorname{AGG}}^{(l)}$ is a mapping of both a full $l$-hop subtree and another $l$-hop subtree rooted at $v$, then $\vh_v^{(l)}$ is a mapping of the same pair of non-isomorphic trees rooted at $v$.
    
    Finally we assume $\,\vh_{v,\operatorname{AGG}}^{(l)}$ is not a mapping of both a full $l$-hop subtree and another $l$-hop subtree rooted at $v$. In other words, $T_v^{(l)}$ and a non-isomorphic subtree $T^{(l)\prime}_v$ always have their distinct representations $\,\vh_{v,\operatorname{AGG}}^{(l)}$ and $\,\vh_{v,\operatorname{AGG}}^{(l)\prime}$, where $\vh_{v,\operatorname{AGG}}^{(l)}\neq\vh_{v,\operatorname{AGG}}^{(l)\prime}$. Since $\operatorname{UPDATE}$ is injective, we have $\vh_v^{(l)}\neq\vh_v^{(l)\prime}$. Therefore, if $\operatorname{UPDATE}$ is injective and $\,\vh_{v,\operatorname{AGG}}^{(l)}$ is not a mapping of both a full $l$-hop subtree and another $l$-hop subtree rooted at $v$, then $\vh_v^{(l)}$ is not a mapping of both a full $l$-hop subtree and another non-isomorphic $l$-hop subtree rooted at $v$ at the same time.

    Hence we conclude that if $\operatorname{UPDATE}$ is injective, then $\vh_v^{(l)}$ is a mapping of both a full $l$-hop subtree and another non-isomorphic $l$-hop subtree rooted at $v$, \textit{if and only if} $\vh_{v,\operatorname{AGG}}^{(l)}$ is a mapping of both a full $l$-hop subtree and another non-isomorphic $l$-hop subtree rooted at $v$ at the same time. 
\end{proof}

\section{Experimental Setup} \label{app:exp_setup}

\subsection{Datasets}
Dataset Statistics are presented in \cref{tab:data_stats}. We evaluate {\mymethod} on five graph classification benchmarks, including three synthetic datasets ({BA-2Motifs}, {BAMultiShapes}, and {BA-Neg}) and two real-world molecular datasets ({Mutagenicity} and {NCI1}).

\begin{table*}[htp]
\caption{Statistics of datasets. Edge counts follow the PyG storage convention, i.e., each undirected edge is counted twice in \texttt{edge\_index}.}
\label{tab:data_stats}
\begin{center}
\scalebox{0.8}{
\begin{tabular}{l|cc|cc|cc|cc|cc}
\toprule
\multirow{2}{*}{\bf Datasets}
& \multicolumn{2}{c|}{\bf BA-2Motifs}
& \multicolumn{2}{c|}{\bf BAMultiShapes}
& \multicolumn{2}{c|}{\bf BA-Neg}
& \multicolumn{2}{c|}{\bf Mutagenicity}
& \multicolumn{2}{c}{\bf NCI1} \\
& \#nodes & \#edges & \#nodes & \#edges & \#nodes & \#edges & \#nodes & \#edges & \#nodes & \#edges \\
\midrule
mean        & 25.0 & 51.0 & 40.0 & 87.5 & 40.0 & 80.9 & 30.3 & 61.5 & 29.9 & 64.6 \\
std         & 0.0  & 1.0  & 0.0  & 7.2  & 3.1  & 6.7  & 20.1 & 33.6 & 13.6 & 29.9 \\
min         & 25   & 49   & 40   & 78   & 35   & 68   & 4    & 6    & 3    & 4    \\
quantile25  & 25   & 50   & 40   & 78   & 37   & 76   & 19   & 38   & 21   & 46   \\
median      & 25   & 50.5 & 40   & 90   & 40   & 80   & 27   & 56   & 27   & 58   \\
quantile75  & 25   & 52   & 40   & 92   & 43   & 86   & 35   & 76   & 35   & 76   \\
max         & 25   & 52   & 40   & 100  & 45   & 94   & 417  & 224  & 111  & 238  \\
\#graphs    & \multicolumn{2}{c|}{1000}
            & \multicolumn{2}{c|}{1000}
            & \multicolumn{2}{c|}{1200}
            & \multicolumn{2}{c|}{4337}
            & \multicolumn{2}{c}{4110} \\
type        & \multicolumn{2}{c|}{synthetic} & \multicolumn{2}{c|}{synthetic} & \multicolumn{2}{c|}{synthetic} & \multicolumn{2}{c|}{real-world} & \multicolumn{2}{c}{real-world} \\
\#classes   & \multicolumn{2}{c|}{2} & \multicolumn{2}{c|}{2} & \multicolumn{2}{c|}{4} & \multicolumn{2}{c|}{2} & \multicolumn{2}{c}{2} \\
feat. dim   & \multicolumn{2}{c|}{10}        & \multicolumn{2}{c|}{10}        & \multicolumn{2}{c|}{10}        & \multicolumn{2}{c|}{14}         & \multicolumn{2}{c}{37} \\
class dist. & \multicolumn{2}{c|}{[500,500]} & \multicolumn{2}{c|}{[500,500]} & \multicolumn{2}{c|}{[300,300,300,300]} & \multicolumn{2}{c|}{[2401,1936]} & \multicolumn{2}{c}{[2053,2057]} \\
\bottomrule
\end{tabular}}
\end{center}
\end{table*}

\begin{figure*}[htp]
  \centering
  \includegraphics[width=\textwidth]{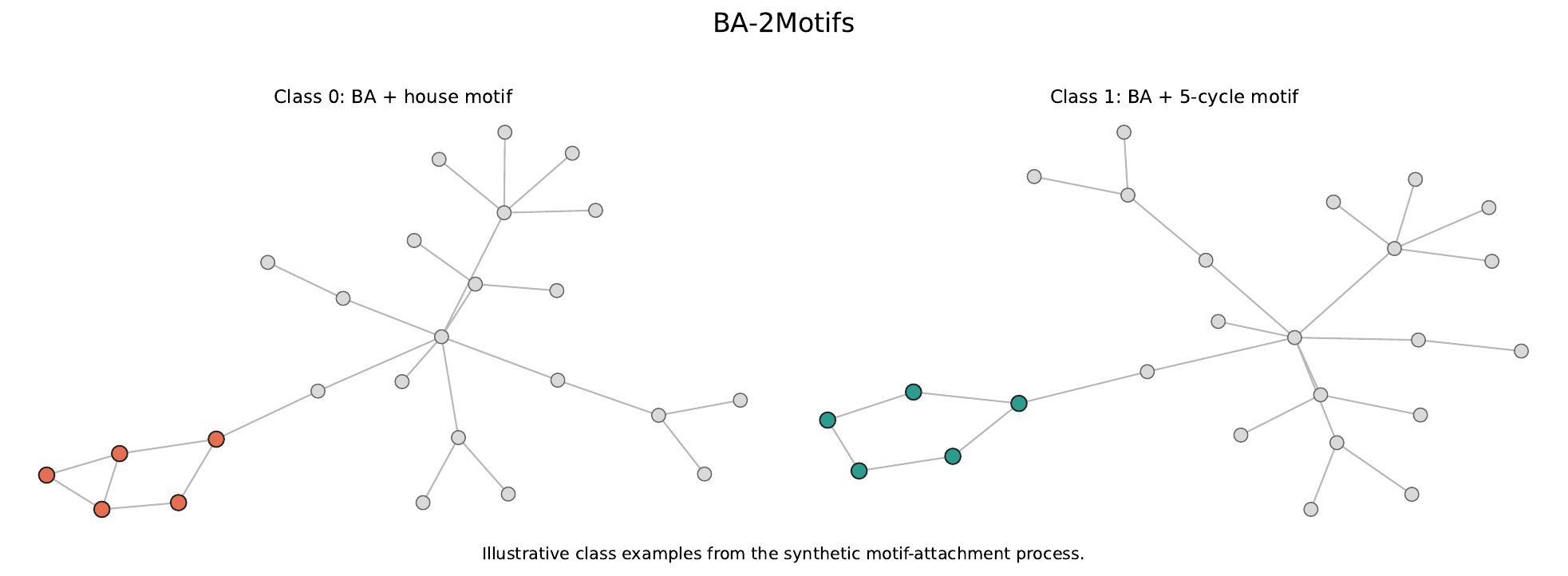}
  \caption{Example graphs of the BA-2Motifs dataset.}
  \label{fig:ba_2motifs_examples}
\end{figure*}

\paragraph{BA-2Motifs.}
As shown in \cref{fig:ba_2motifs_examples}, \textsc{BA-2Motifs} is a synthetic graph classification benchmark in which graphs are generated by attaching either a 5-cycle motif or a house motif to a Barab'asi-Albert (BA) base graph. The task is relatively easy for modern GNNs, and our GIN backbone reaches perfect test accuracy under seed 1234. This dataset is useful because the class signal is controlled and structurally localized, making it well suited for testing whether {\mymethod} can recover concise symbolic rules that align with the ground-truth motif-level decision logic. In particular, it measures whether the learned shared rule bank can faithfully reconstruct a highly clean and separable decision boundary. 

\begin{figure*}[htp]
  \centering
  \includegraphics[width=\textwidth]{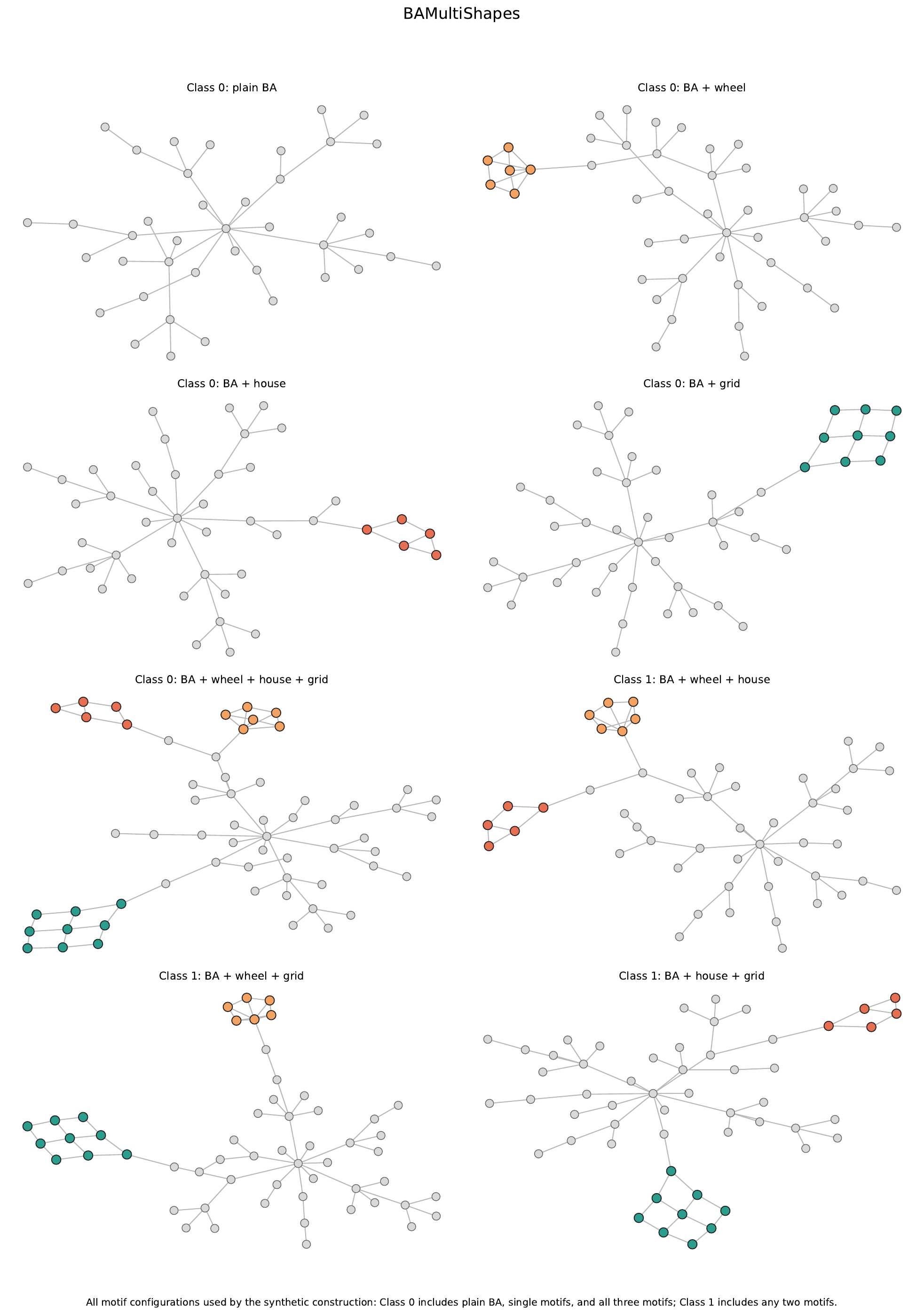}
  \caption{Example graphs of the BAMultiShapes dataset.}
  \label{fig:BAMultiShapes_examples}
\end{figure*}

\paragraph{BAMultiShapes.}
As shown in \cref{fig:BAMultiShapes_examples}, \textsc{BAMultiShapes} is a synthetic benchmark where multiple shape motifs are injected into a common BA-style background graph. 
Specifically, in this dataset, the graphs contain randomly positioned house, grid, and wheel motifs. Class 0 includes plain BA graphs and those with individual motifs or a combination of all three. In contrast, Class 1 comprises BA graphs enriched with any two of the three motifs.
Compared with \textsc{BA-2Motifs}, it contains richer structural variation and more overlap among explanatory substructures, which makes the explanation problem less trivial even when the base GNN remains highly accurate. This dataset is particularly useful for evaluating whether {\mymethod} can organize multiple correlated local concepts into a compact shared rule bank rather than relying on a single dominant pattern. As a result, it tests the compositionality and compactness of the recovered rules.

\begin{figure*}[ht]
  \centering
  \includegraphics[width=\textwidth]{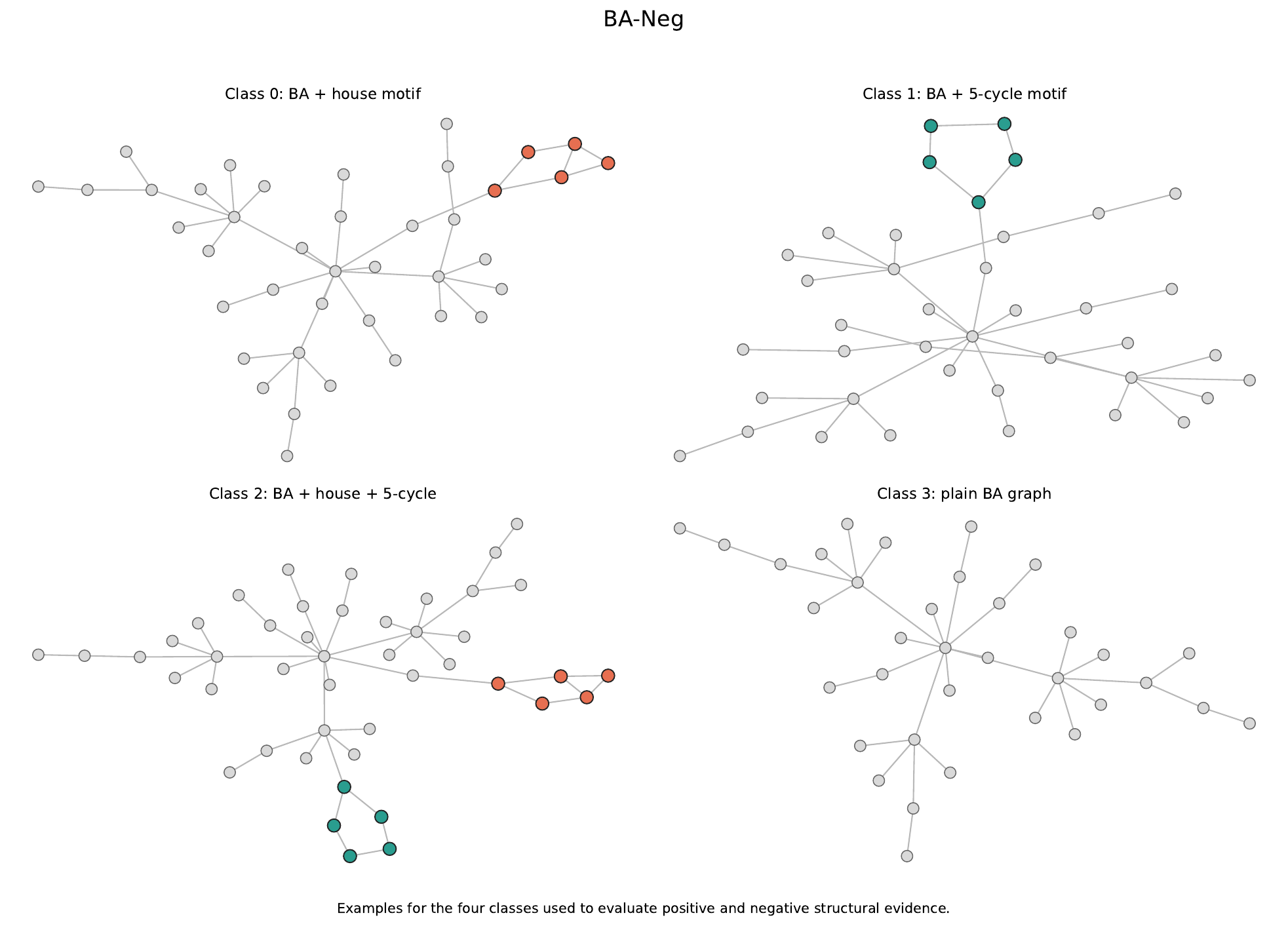}
  \caption{Example graphs of the BA-Neg dataset.}
  \label{fig:ba_neg_examples}
\end{figure*}

\paragraph{BA-Neg.}
As shown in \cref{fig:ba_neg_examples}, \textsc{BA-Neg} is a four-class synthetic benchmark designed to require both positive and negative structural evidence. The four classes are BA + house motif (Class 0), BA + 5-cycle motif (Class 1), BA + house + 5-cycle motifs (Class 2), and plain BA graphs without either motif (Class 3). This dataset is especially important for {\mymethod} because a faithful symbolic explanation must not only identify which motif is present, but also which motif is absent. It therefore tests the value of negative literals and the ability of the shared rule bank to encode absence-based reasoning.

\begin{figure*}[ht]
  \centering
  \includegraphics[width=\textwidth]{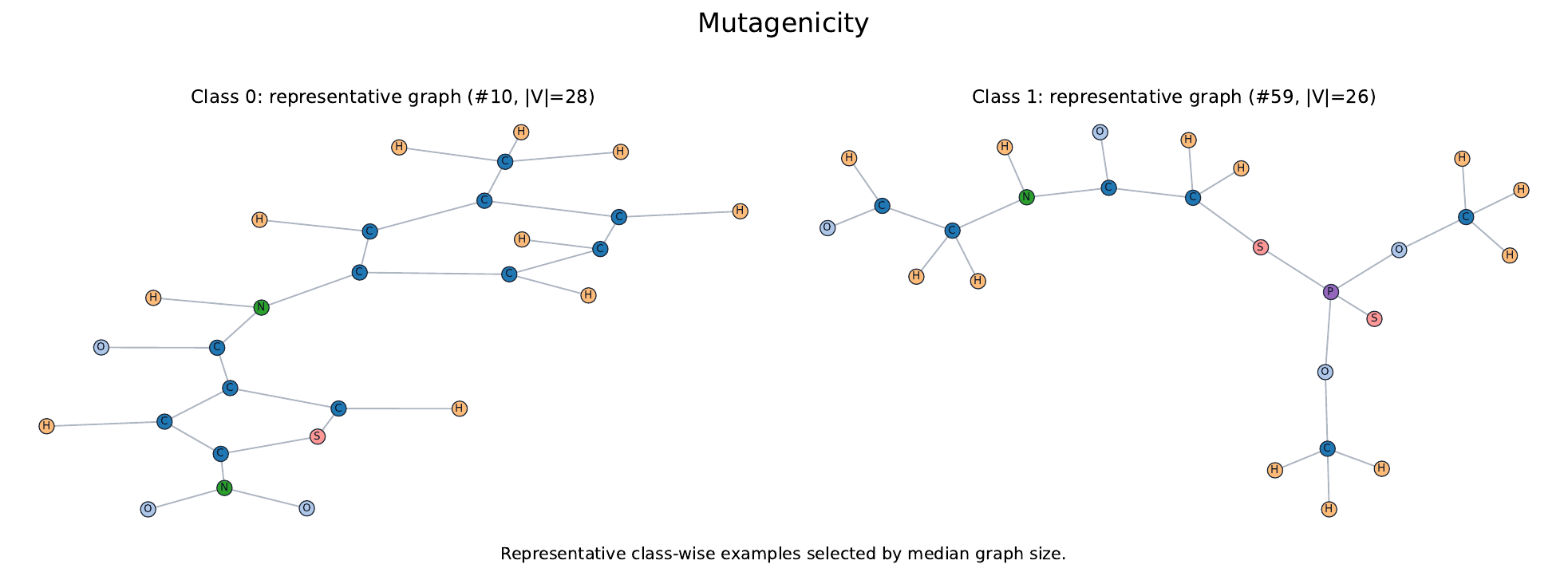}
  \caption{Example graphs of the Mutagenicity dataset.}
  \label{fig:Mutagenicity_examples}
\end{figure*}

\paragraph{Mutagenicity.}
As shown in \cref{fig:Mutagenicity_examples}, \textsc{Mutagenicity} is a real-world molecular graph classification dataset where the task is to predict whether a molecule is mutagenic. Compared with the synthetic datasets, the class signal is much noisier and the graph sizes vary substantially, which makes both prediction and explanation harder. {\mymethod} uses this dataset to evaluate whether the learned global rules remain faithful and interpretable when the underlying decision boundary is no longer defined by a single planted motif. Critical motifs in Class 0 include -NO2, -NH2, and carbon ring.

\begin{figure*}[ht]
  \centering
  \includegraphics[width=\textwidth]{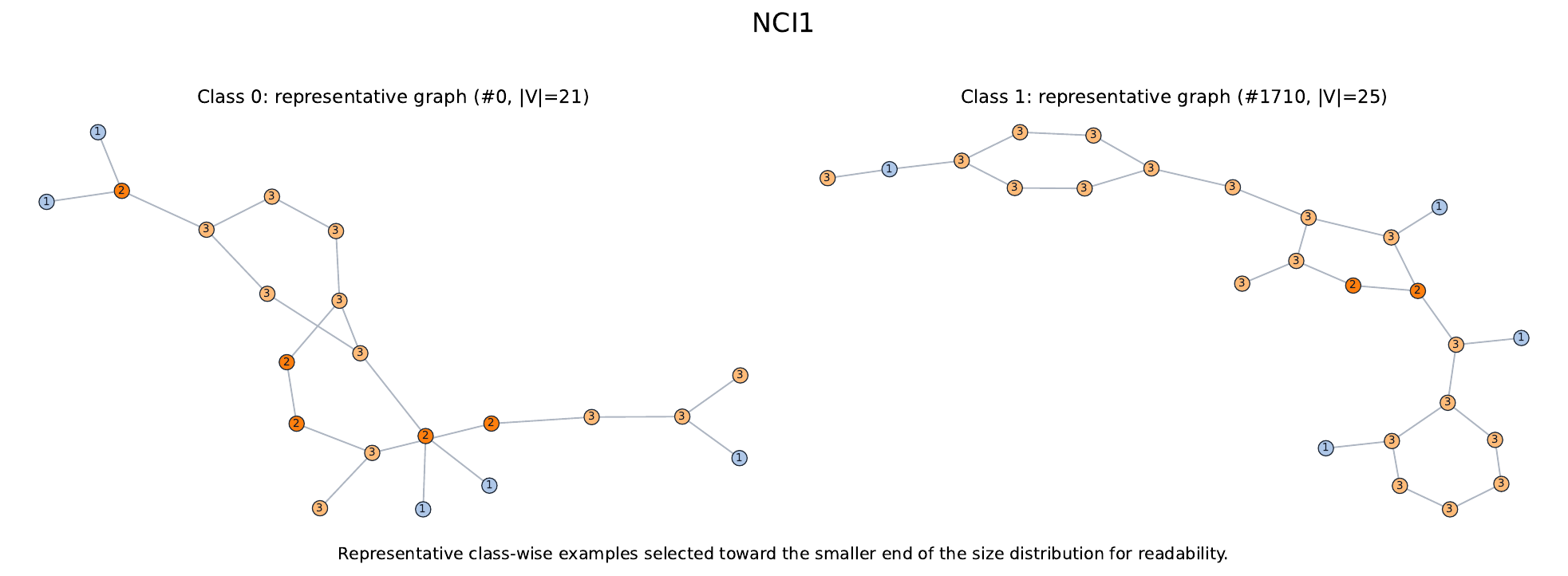}
  \caption{Example graphs of the NCI1 dataset.}
  \label{fig:NCI1_examples}
\end{figure*}

\paragraph{NCI1.}
As shown in \cref{fig:NCI1_examples}, \textsc{NCI1} is a real-world molecular graph classification benchmark derived from chemical compounds screened against non-small cell lung cancer. The graphs are structurally heterogeneous and the predictive cues are more distributed than in the synthetic benchmarks. 

\subsection{Implementation details.}
\paragraph{Compute resources.}
All experiments were run on CPU on a MacBook Pro with an Apple M4 Max chip, 16 CPU cores (12 performance cores and 4 efficiency cores), and 128GB unified memory. 
No external GPU or cloud compute was used. 
The runtime results reported in \cref{tab:agreement} were measured under this CPU-only setting and are reported in \(10^3\) seconds.

\paragraph{Base GNNs.}
Implementation details of the GNNs for explainablity in this paper is shown in \cref{tab:gnn_stats}. The GNNs were trained and evaluated by randomly splitting the datasets into training/validation/testing sets at 0.8/0.1/0.1 ratio. The random seed we used was 1234 while we split the data.

\begin{table*}[htp]
\caption{Details of the GIN backbones used in our experiments under seed 1234. All graph classification experiments use an 80/10/10 train/validation/test split. ``hidden'' denotes the latent dimension size, and $L$ is the number of GIN layers.}
\label{tab:gnn_stats}
\begin{center}
\scalebox{0.9}{
\begin{tabular}{l|ccccc}
\toprule
{\bf Datasets}
& {\bf BA-2Motifs}
& {\bf BAMultiShapes}
& {\bf BA-Neg}
& {\bf Mutagenicity}
& {\bf NCI1} \\
\midrule
number of GNN layers ($L$) & 3 & 3 & 3 & 3 & 3 \\
hidden                     & 32 & 32 & 32 & 64 & 64 \\
global pooling             & mean & mean & mean & mean & mean \\
layer type                 & GIN & GIN & GIN & GIN & GIN \\
node feature dim           & 10 & 10 & 10 & 14 & 37 \\
number of classes          & 2 & 2 & 4 & 2 & 2 \\
split ratio                & 80/10/10 & 80/10/10 & 80/10/10 & 80/10/10 & 80/10/10 \\
split seed                 & 1234 & 1234 & 1234 & 1234 & 1234 \\
\midrule
train acc                  & 1.00 & 1.00 & 1.00 & 0.882 & 0.935 \\
validation acc             & 1.00 & 1.00 & 1.00 & 0.873 & 0.869 \\
test acc                   & 1.00 & 1.00 & 1.00 & 0.791 & 0.830 \\
\bottomrule
\end{tabular}}
\end{center}
\end{table*}

For synthetic datasets, we use a 3-layer GIN with hidden size 32; for the real molecular datasets, we use a 3-layer GIN with hidden size 64.
The graph-level readout is mean pooling, followed by a two-layer MLP prediction head. 

\paragraph{Hyperparameter setting.}

We use GIN as the frozen base classifier for all datasets. Unless otherwise noted, we train the models for 1000 epochs, adopt \textsc{mean} as the local concept aggregation operator, enable negative literals, and allow signed rule weights. This choice keeps the rule space compact and improves interpretability without introducing an additional combinatorial expansion of candidate literals. The hyperparameters in \cref{eq:cnf_bce} are fixed where $\lambda_{\text{lit}}=0.001, \lambda_\text{clause}=0.001$ for mild structure-preserving regularizations, and $\gamma=0.1$. 
We use three random seeds for most datasets, namely \{1234, 4321, 123\}, and report the corresponding multi-run mean and standard deviation in \cref{tab:fidelity}, \ref{tab:agreement} and \ref{tab:rule_ablation}, as well as \cref{fig:rule_ablation}.

The remaining hyperparameters are selected based on dataset complexity. In general, we scale the number of local concepts $N_{lc}$, number of global concepts $N_{gc}$, and the size of the rule bank $N_r$ with the structural diversity of the dataset. Simpler synthetic datasets use smaller values to encourage compact rules, while more heterogeneous synthetic or real-world datasets use larger values to preserve fidelity. Likewise, number of literals per clause $N_\ell$ and number of clauses per rule $N_c$ control rule expressiveness: we use small clauses ($N_\ell=2$) throughout to keep rules readable, and allow more clauses only when the dataset requires more compositional structure. The rule-weight learning rate $\eta_w$, CNF learner learning rate $\eta_{\operatorname{CNF}}$, sparsity coefficient $\lambda_w$, and presence smoothing temperature $\tau$ are tuned per dataset to balance faithfulness, compactness, and optimization stability. 

Dataset-specific settings are as follows:

BA-2Motifs: $N_{lc}=3$, $N_{gc}=6$, $N_r=5$, $N_\ell=2$, $N_c=3$, $\lambda_w=0.01$, $\eta_{\operatorname{CNF}}=0.02$, $\eta_w=0.002$, $\tau=0.001$.
This dataset has a very clean planted decision boundary, so a small concept bank and a small rule bank are sufficient; stronger sparsity further encourages concise symbolic recovery.
 
BA-Neg: $N_{lc}=3$, $N_{gc}=10$, $N_r=10$, $N_\ell=2$, $N_c=3$, $\lambda_w=0.01$, $\eta_{\operatorname{CNF}}=0.02$, $\eta_w=0.2$, $\tau=0.001$.
This dataset requires both positive and negative evidence, so we use a larger rule bank than BA-2Motifs to capture multiple class-specific absence/presence combinations; the larger weight learning rate was used to stabilize multiclass logit mimicry under this more contrastive setting.

BAMultiShapes: $N_{lc}=3$, $N_{gc}=20$, $N_r=20$, $N_\ell=2$, $N_c=4$, $\lambda_w=0.001$, $\eta_{\operatorname{CNF}}=0.02$, $\eta_w=0.02$, $\tau=0.001$.
This dataset contains multiple injected motifs and combinatorial class structure, so we use a larger concept inventory and a more expressive CNF parameterization.

Mutagenicity: $N_{lc}=3$, $N_{gc}=30$, $N_r=20$, $N_\ell=2$, $N_c=3$, $\lambda_w=0.001$, $\eta_{\operatorname{CNF}}=0.02$, $\eta_w=0.02$, $\tau=0.001$.
The real-world molecular graphs are more heterogeneous than the synthetic datasets, so we enlarge the concept and rule spaces to preserve fidelity while keeping clause size small for readability.

NCI1: $N_{lc}=3$, $N_{gc}=20$, $N_r=20$, $N_\ell=2$, $N_c=3$, $\lambda_w=0.001$, $\eta_{\operatorname{CNF}}=0.005$, $\eta_w=0.01$, $\tau=0.001$.
NCI1 is structurally diverse but less cleanly separable, so we keep a moderately large concept/rule bank while using a smaller CNF learning rate and a sharper presence temperature to make rule induction more conservative.

\section{Additional Experimental Results} \label{app:more_results}

\subsection{Results of Evaluation over Delta Logit and TAPF} \label{app:delta_logit_tapf}
\begin{table}[htp]
    \caption{Top-$k$ rule ablation on the test set. For each dataset, we report the change in reconstructed logits after individually ablating the top-$k$ critical rules and averaging their effects (RQ2). TC is the Target-Class $\Delta$ logit, NTA is the Non-Target Average $\Delta$ logit.}
    \label{tab:rule_ablation}
    \centering
    \scalebox{0.7}{
    \begin{tabular}{lcccccccccc}
    \toprule
    \multirow{2.5}{*}{Top-$k$} & \multicolumn{2}{c}{BA-2Motifs} & \multicolumn{2}{c}{BAMultiShapes} & \multicolumn{2}{c}{BA-Neg} & \multicolumn{2}{c}{Mutagenicity} & \multicolumn{2}{c}{NCI1} \\
    \cmidrule(r){2-3}\cmidrule(r){4-5}\cmidrule(r){6-7}\cmidrule(r){8-9}\cmidrule(r){10-11}
     & TC $(\uparrow)$ & NTA $(\downarrow)$ & TC $(\uparrow)$ & NTA $(\downarrow)$ & TC $(\uparrow)$ & NTA $(\downarrow)$ & TC $(\uparrow)$ & NTA $(\downarrow)$ & TC $(\uparrow)$ & NTA $(\downarrow)$ \\
    \midrule
    Top-10 & {0.55}\footnotesize{$\pm$0.27} & {-0.72}\footnotesize{$\pm$0.14} & {0.97}\footnotesize{$\pm$0.39} & {-0.95}\footnotesize{$\pm$0.45} & {1.74}\footnotesize{$\pm$0.35} & {-0.93}\footnotesize{$\pm$0.16} & {0.82}\footnotesize{$\pm$0.23} & {-0.82}\footnotesize{$\pm$0.23} & {1.70}\footnotesize{$\pm$0.90} & {-1.77}\footnotesize{$\pm$1.06} \\
    Top-3 &  {2.70}\footnotesize{$\pm$1.11} & {-2.68}\footnotesize{$\pm$1.16} & {2.96}\footnotesize{$\pm$1.39} & {-2.88}\footnotesize{$\pm$1.66} & {3.60}\footnotesize{$\pm$0.46} & {-1.59}\footnotesize{$\pm$0.10} & {1.51}\footnotesize{$\pm$0.50} & {-1.52}\footnotesize{$\pm$0.44} & {3.94}\footnotesize{$\pm$2.83} & {-4.14}\footnotesize{$\pm$3.35} \\
    Top-1 &  {4.86}\footnotesize{$\pm$1.85} & {-4.79}\footnotesize{$\pm$1.59} & {4.95}\footnotesize{$\pm$2.71} & {-4.77}\footnotesize{$\pm$3.24} & {4.41}\footnotesize{$\pm$0.89} & {-2.13}\footnotesize{$\pm$0.10} & {1.91}\footnotesize{$\pm$0.70} & {-1.88}\footnotesize{$\pm$0.64} & {8.61}\footnotesize{$\pm$8.52} & {-9.62}\footnotesize{$\pm$10.12} \\
    \bottomrule
    \end{tabular}}
\end{table}

\paragraph{Results of Delta Logit.}
As mentioned in \cref{sec:exp:rule_impact}, we now present the raw results of $\Delta$ logit in \cref{tab:rule_ablation}. We note that RQ2 is not intended to measure the overall perturbation fidelity of {\mymethod} to the base GNN. The fidelity has already been evaluated in RQ1 (\cref{sec:exp:logit_fidelity}). 
Instead, it asks whether the top active rules selected for each test instance provide effective evidence for the base GNN's prediction. 
A useful rule-level explanation should not only be active on the graph; it should also push the reconstructed decision toward the GNN-predicted class. 

As shown in \cref{tab:rule_ablation}, the selected top rules consistently behave as prediction-supporting evidence. 
Across all five datasets and all values of \(k\), TC is positive and NTA is negative. 
This means that the critical active rules identified by {\mymethod} increase the logit of the GNN-predicted class while decreasing the average logit of non-target classes. 
Therefore, the selected rules actively shape the reconstructed decision in the same direction as the base GNN prediction. 

\begin{table}[htb]
    \centering
    \caption{APF and TAPF on BA-Neg.
BA-Neg is multiclass, so APF and TAPF are not theoretically equivalent.
For the binary datasets, TAPF equals APF and is therefore omitted. }
    \begin{tabular}{lccccc}
    \toprule
    BA-Neg & w/o rules & w/o merge & w/o $\neg$ & w/ DNF & {\mymethod} (Ours) \\
    \midrule
    APF & {50.9}\footnotesize{$\pm$11.5} & {84.0}\footnotesize{$\pm$2.7} & {71.6}\footnotesize{$\pm$4.1} & {91.6}\footnotesize{$\pm$7.2} & {92.3}\footnotesize{$\pm$6.9}\\
    TAPF  & {51.2}\footnotesize{$\pm$11.7} & {84.3}\footnotesize{$\pm$2.6} & {71.8}\footnotesize{$\pm$4.1} & {91.8}\footnotesize{$\pm$7.1} & {92.5}\footnotesize{$\pm$7.1}\\
    \bottomrule
    \end{tabular}
    \label{tab:tapf_ba_neg}
\end{table}

\paragraph{Results of TAPF.}
We use APF and TAPF to evaluate probability-level fidelity from two complementary perspectives. 
APF measures full-distribution agreement between the base GNN and {\mymethod} by comparing the entire softmax probability vector. 
It therefore captures whether {\mymethod} preserves not only the predicted class, but also the relative probabilities assigned to alternative classes. 
TAPF, in contrast, focuses only on the probability assigned to the GNN-predicted class. 
It directly measures whether {\mymethod} preserves the base GNN's confidence in its own prediction. 
Thus, APF evaluates distribution-level fidelity, while TAPF evaluates predicted-class confidence preservation. 

For binary classification, these two metrics are equivalent because the probability of one class fully determines the probability of the other. 
For multiclass tasks, however, they can differ: a method may preserve the GNN-predicted class confidence while distorting the probability mass assigned to non-target classes. 
BA-Neg is therefore the only dataset where we report both APF and TAPF.

As shown in \cref{tab:tapf_ba_neg}, TAPF closely follows APF across all variants on BA-Neg. 
This indicates that the probability-level fidelity of {\mymethod} is not achieved by preserving only the GNN-predicted class while distorting the remaining classes. 
Instead, target-class confidence preservation and full-distribution agreement move together.

The BA-Neg results also highlight the importance of negative evidence. 
Removing negation causes a substantial degradation, with APF dropping from \(92.3\%\) to \(71.6\%\) and TAPF dropping from \(92.5\%\) to \(71.8\%\). 
This is expected because BA-Neg contains classes defined by the absence of specific concepts. 
The result shows that the base GNN can learn absence-based evidence from dataset-level regularities, and that {\mymethod} needs negative literals to faithfully explain such behavior. 
In this sense, the \(w/o~\neg\) variant does not merely lose a syntactic option; it loses the ability to represent an important type of decision evidence used by the GNN.

\subsection{Global Concepts Extracted by TreeX} \label{app:gc_my}
\begin{figure}[htp]
    \centering
    \includegraphics[width=0.7\linewidth]{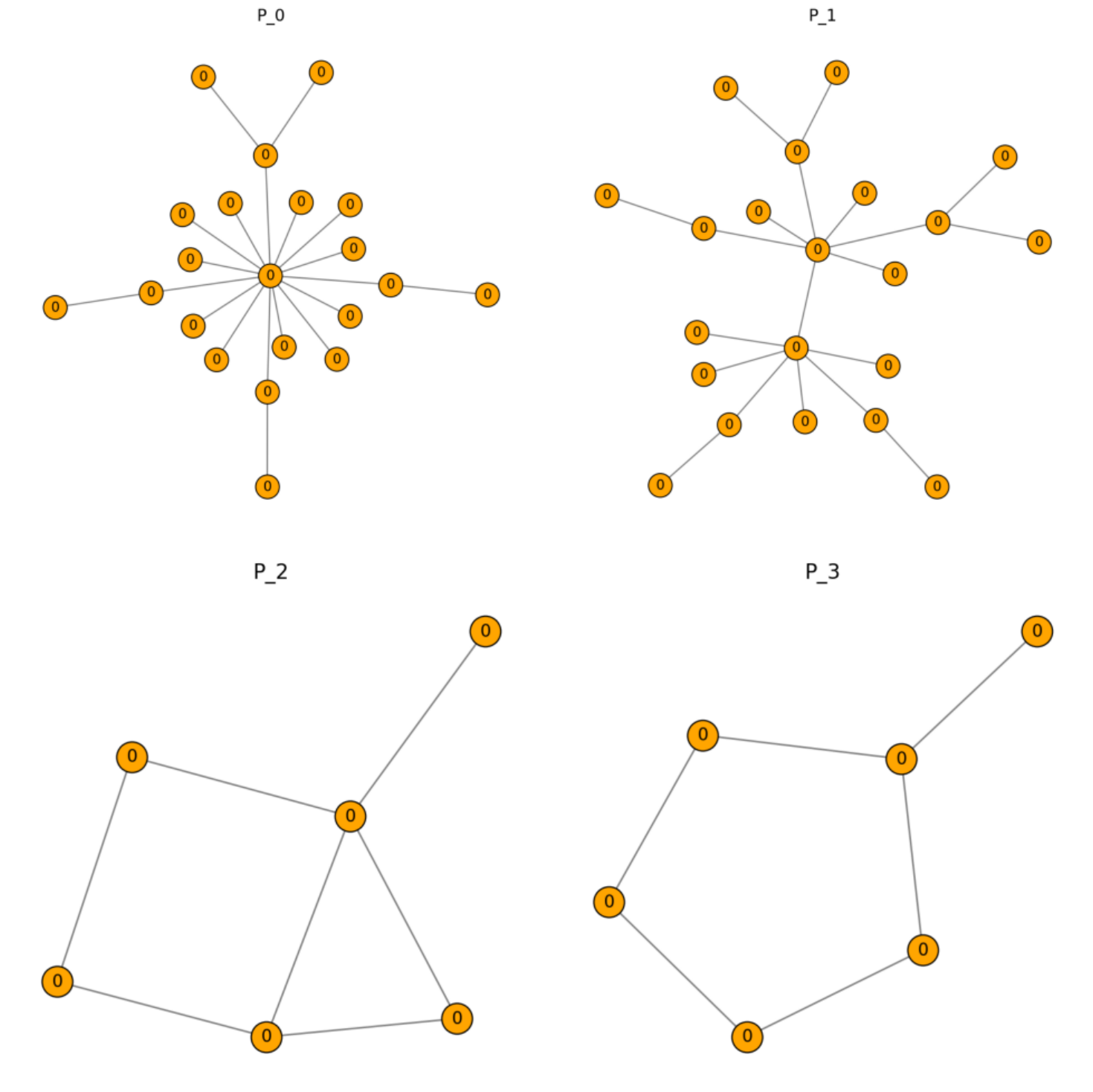}
    \vspace{1mm}
    \begin{small}
    \[
    \begin{array}{ll}
    \neg p_3 \land p_2 
    & \text{largest logit contribution: Class 1}\\
    \neg p_2 \land p_3
    & \text{largest logit contribution: Class 0}\\
    \end{array}
    \]
    \end{small}
    \caption{
    Global concepts extracted by {\mymethod} on BA-2Motifs, which serve as reusable literals for rule-level logit reconstruction.
    Example rules over these concepts are shown below the figure.
    }
    \label{fig:gc_ba_2motifs}
\end{figure}

\begin{figure}[htp]
    \centering
    \includegraphics[width=\linewidth]{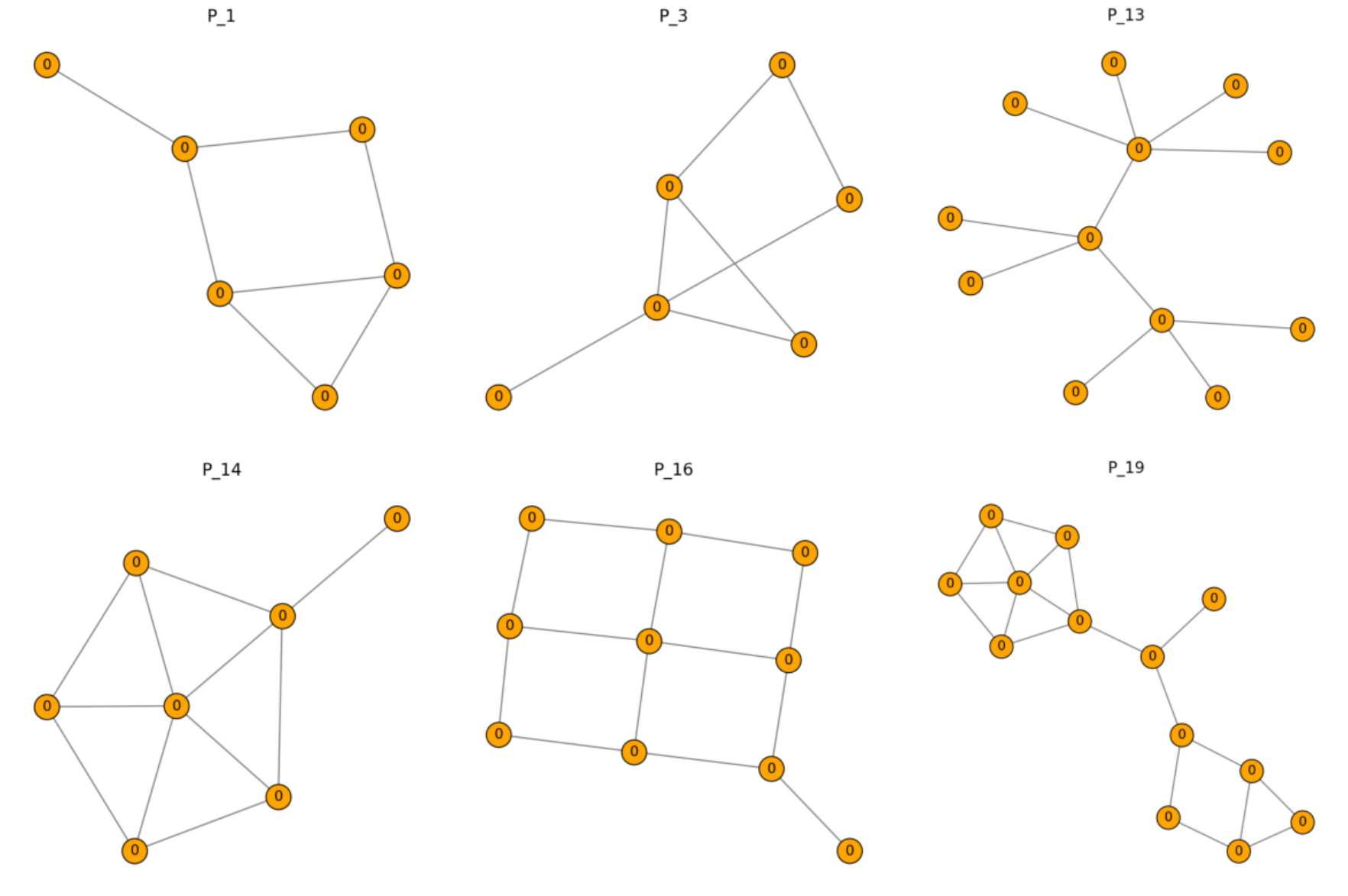}
    \vspace{1mm}
    \begin{small}
    \[
    \begin{array}{l|ll}
    r_0 & (p_{11}\lor \neg p_1) \land(p_{19}\lor\neg p_{14}) \land(p_{19}\lor\neg p_{16})\land(p_{16}\lor\neg p_{19})
    & \text{largest logit contribution: Class 0}\\
    r_1 & (p_{16}\lor \neg p_1)\land \neg p_3 \land (p_1 \lor \neg p_{16})\land p_{14}
    & \text{largest logit contribution: Class 0}\\
    r_2 & (p_7\lor\neg p_{14})\land (p_{1}\lor p_{14})\land p_{16} & \text{largest logit contribution: Class 1}\\
    r_3 & p_1\land (p_1 \lor p_3) \land (p_{14} \lor p_{16}) & \text{largest logit contribution: Class 1} \\
    r_4 & p_4\land p_{11}\land p_{12} & \text{largest logit contribution: Class 0} \\
    r_5 & p_3\land p_{19} & \text{largest logit contribution: Class 1}\\
    \end{array}
    \]
    \end{small}
    \caption{
    Global concepts and example rules extracted by {\mymethod} on BAMultiShapes.
    The upper panel shows representative global subgraph concepts, and the lower panel lists several examples of the learned rules over these concepts.
    The class shown on the right denotes the logit receiving the largest positive contribution from each rule. The concepts other than $p_1,p_3,p_{14},p_{16},p_{19}$ are various BA-style background graphs. 
    These rules recover the dataset-level logic of BAMultiShapes, where Class 0 contains plain BA graphs, single-motif graphs, and graphs with all three motifs, while Class 1 contains graphs with exactly two motifs.
    }
    \label{fig:gc_bamult}
\end{figure}

\begin{figure}[htp]
    \centering
    \includegraphics[width=\linewidth]{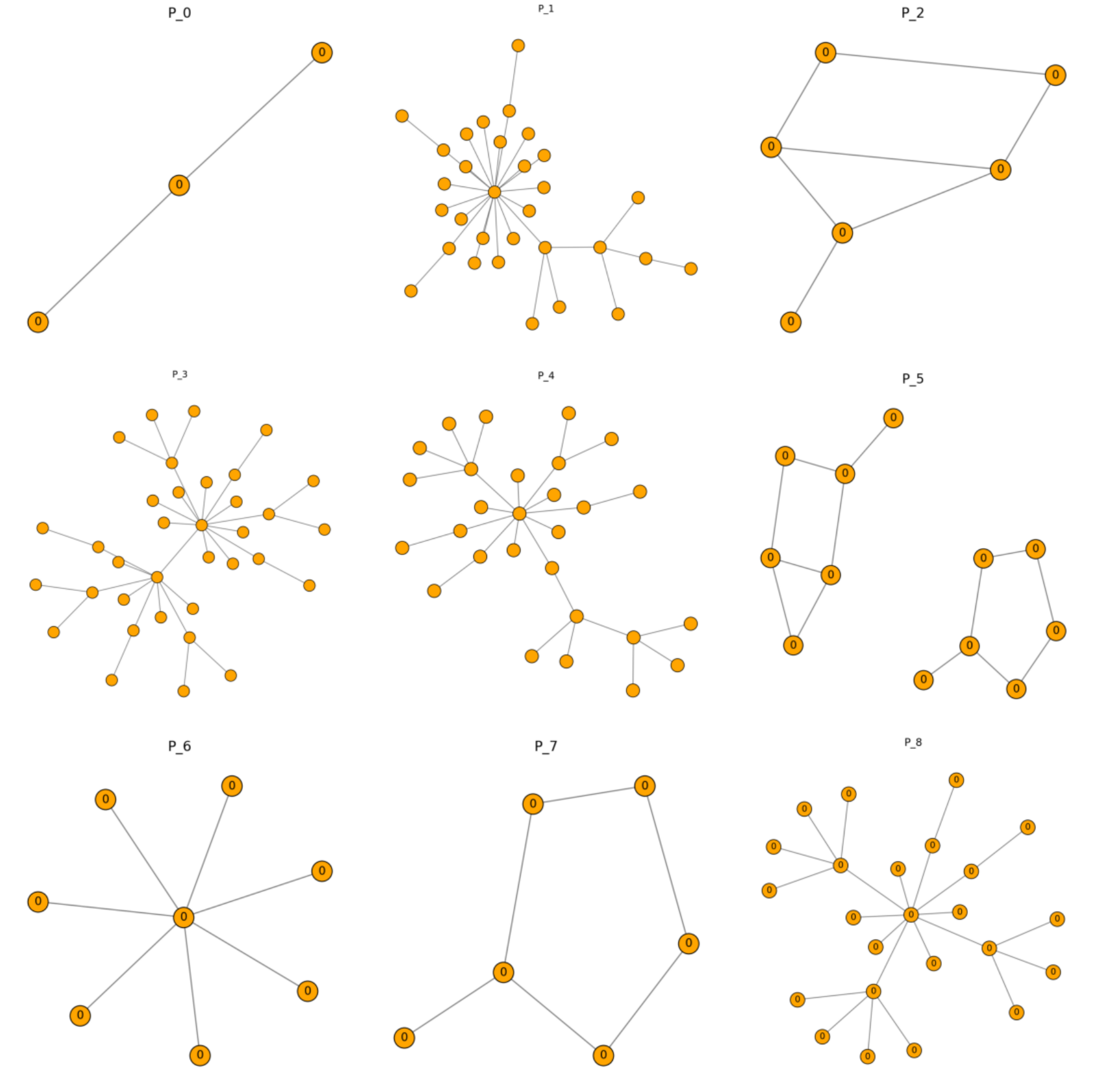}
    \vspace{1mm}
    \begin{small}
    \[
    \begin{array}{ll}
    (p_5\lor \neg p_3)\land(p_2\lor p_5)\land(p_5\lor p_7)
    & \text{largest  logit contribution: Class 2}\\
    \neg p_4\land(p_2\lor p_5)\land(p_5\lor p_7)
    & \text{largest logit contribution: Class 2}\\
    \neg(p_2\land p_5)
    & \text{largest logit contribution: Class 3}\\
    \neg p_7\land p_2
    & \text{largest logit contribution: Class 0}\\
    \neg p_2\land\neg p_5\land\neg p_7
    & \text{largest logit contribution: Class 3}\\
    \neg p_2\land p_7
    & \text{largest logit contribution: Class 1}
    \end{array}
    \]
    \end{small}
    \caption{
    Global concepts extracted by {\mymethod} on BA-Neg, which serve as reusable literals for rule-level logit reconstruction.
    Example rules over these concepts are shown below the figure.
    See \cref{fig:raw_rules_ba_neg_1_2,fig:raw_rules_ba_neg_3_5} for detailed rule logs.
    }
    \label{fig:gc_ba_neg}
\end{figure}

We present the global concepts and representative rule-based explanations learned by {\mymethod} in \cref{fig:gc_ba_2motifs,fig:gc_bamult,fig:gc_ba_neg,fig:gc_mutagenicity}. 
These visualizations show the global symbolic structure, but {\mymethod} provides richer explanation outputs beyond these figures. 
We include raw rule diagnostics and test-time rule grounding examples in \underline{\bf Appendix~\ref{app:rule_log} and \ref{app:test_ground}}. 

\paragraph{Rule interpretation on BAMultiShapes.}
\cref{fig:gc_bamult} shows representative global concepts and learned rules on BAMultiShapes. 
Since BAMultiShapes has a more compositional label structure than the other synthetic benchmarks, we analyze the rules learned by {\mymethod} in detail. 
This case study illustrates how {\mymethod} recovers rules that are faithful to the ground-truth motif logic, rather than merely identifying isolated class-associated patterns.

We focus our detailed rule analysis on BAMultiShapes because it provides a rare setting where both the ground-truth decision logic and the GNN behavior are verifiable. 
The dataset labels are generated from known motif combinations, and the base GNN achieves \(100\%\) accuracy, making it meaningful to ask whether the explainer recovers the true rule structure. 
This is less clear on molecular datasets, where the underlying label mechanism is not fully observable and an \(80\%\)-accurate GNN may rely on unknown or correlated chemical evidence. 
Thus, BAMultiShapes offers a cleaner testbed for demonstrating rule faithfulness to the ground-truth logic.

This dataset contains BA graphs with randomly inserted house, grid, and wheel motifs.
Class 0 includes plain BA graphs, graphs with a single motif, and graphs containing all three motifs, whereas Class 1 contains graphs with exactly two of the three motifs.
Therefore, a faithful model-level explanation should not merely detect individual motifs; it should recover the combinatorial rule that distinguishes one motif, two motifs, and three motifs.

The learned rules in \cref{fig:gc_bamult} align with this structure.
\begin{itemize}
    \item For instance, \(r_0\) can be satisfied by the conjunction \(p_{11}\land p_{19}\land p_{16}\), corresponding to BA graphs containing all three motif types, which belongs to Class 0. 
    Alternatively, $r_0$ can be satisfied by $(p_{11}\lor \neg p_1)\land \neg p_{14}\land\neg p_{16}\land \neg p_{19}$, corresponding to a plain BA graph, which also belongs to Class 0. 
    This rule $r_0$ represents two possible Class 0 style in a single CNF form. 
    \item Rule \(r_1\) can be satisfied either by \(p_{16}\land p_1\land \neg p_3\land p_{14}\) or by \(\neg p_1\land \neg p_3\land \neg p_{16}\land p_{14}\), capturing Class-0 cases such as all-three-motif graphs or single-wheel graphs, depending on which motif concepts are instantiated.
    \item Rule $r_2$ indicates either $p_7 \land (p_1 \lor p_{14})\land p_{16}$ or $\neg p_{14}\land p_1\land p_{16}$. It means either BA and a house/wheel and a grid, or exact 2 special motifs house and grid, both cases are valid for Class 1. 
    \item Rule \(r_3=p_1\land(p_1\lor p_3)\land(p_{14}\lor p_{16})\) also supports Class 1. Since \(p_1\) already satisfies the second clause, this rule effectively requires \(p_1\) together with either \(p_{14}\) or \(p_{16}\). It therefore represents a two-motif pattern: one motif concept is fixed by \(p_1\), while the second is provided by \(p_{14}\) or \(p_{16}\). This matches the Class-1 definition of BAMultiShapes.
    \item Rule \(r_4=p_4\land p_{11}\land p_{12}\) has its largest contribution to Class 0. The involved concepts correspond to BA-style background structures. 
    \item Rule \(r_5=p_3\land p_{19}\) contributes most strongly to Class 1. It is a compact two-concept rule: the simultaneous presence of \(p_3\) and \(p_{19}\) indicates a graph containing both a house and a wheel, matching the Class-1 condition. 
\end{itemize}

Overall, these examples show that {\mymethod} does not only identify isolated motifs.
It recovers rules that reflect the dataset's combinatorial label logic and assigns them class-wise logit contributions.
In particular, Class-1 rules correspond to exactly-two-motif evidence, while Class-0 rules capture either BA/single-motif cases or all-three-motif cases.
This shows that {\mymethod} can recover reusable model-level decision structure rather than merely producing rule-level artifacts. 

\begin{figure}[htp]
    \centering
    \includegraphics[width=\linewidth]{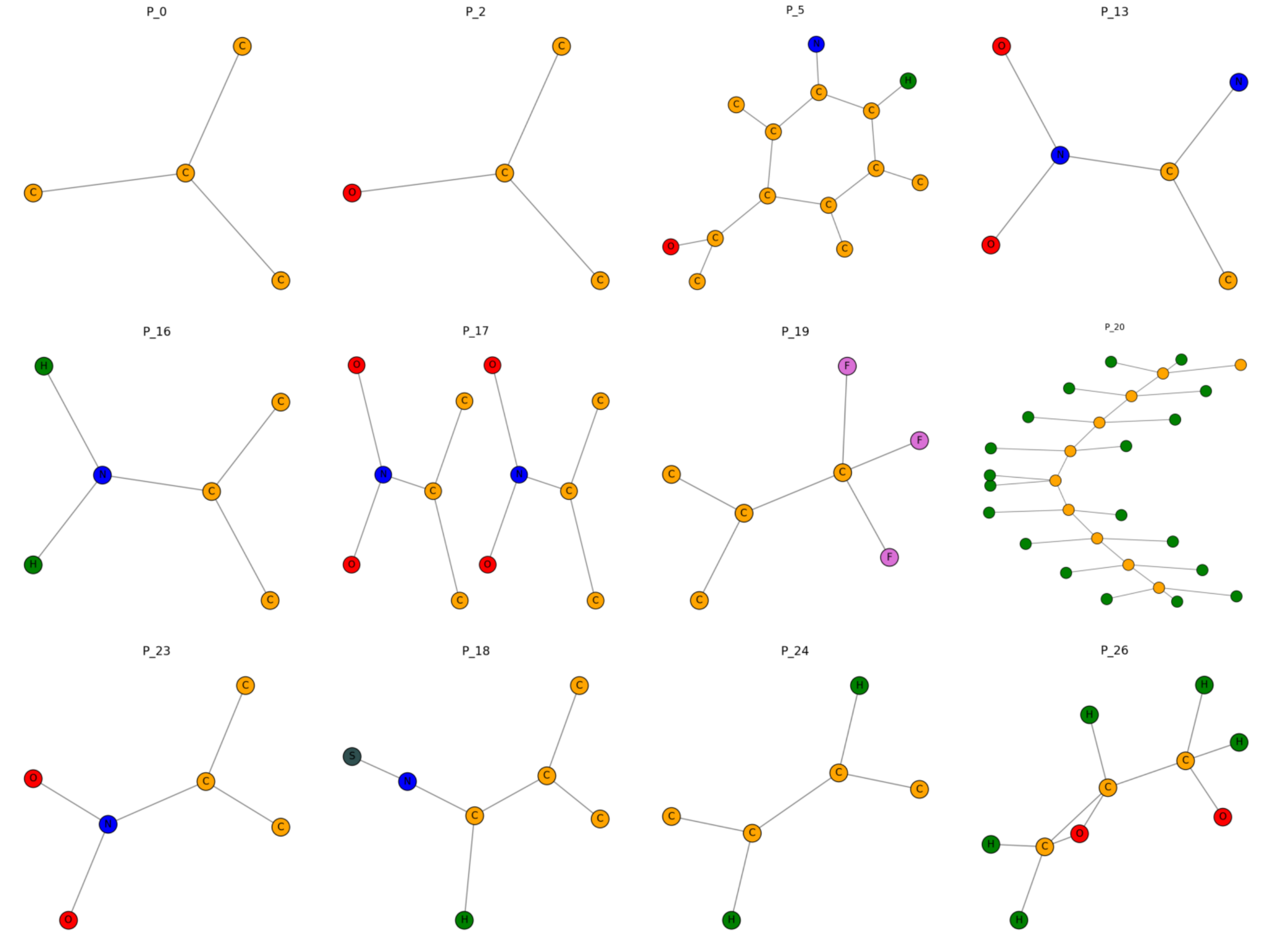}
    \vspace{1mm}
    \begin{small}
    \[
    \begin{array}{ll}
    \neg p_{16}\land (p_2\lor p_{19})\land (p_{19}\lor p_{20})
    & \text{largest  logit contribution: Class 1}\\
    p_5 \land (p_5\lor p_{13})\land (p_{13}\lor p_{17})
    & \text{largest  logit contribution: Class 0}\\
    p_0\land p_{13}\land p_{16} 
    & \text{largest  logit contribution: Class 0}\\
    (p_5\lor p_{26})\land (p_{18}\lor p_{24})
    & \text{largest  logit contribution: Class 0}\\
    \end{array}
    \]
    \end{small}
    \caption{
    Some global concepts and example rules extracted by {\mymethod} on Mutagenicity.
    The displayed concepts serve as reusable literals for rule-level logit reconstruction, and the listed rules show representative logit-contributing rules learned over these concepts.
    Several extracted concepts correspond to chemically meaningful functional groups known to be associated with mutagenic effects, such as nitro groups \((\mathrm{-NO}_2)\) in \(p_{13}\), \(p_{17}\), and \(p_{23}\), and amine-related groups \((\mathrm{-NH}_2)\) in \(p_{16}\).
    The examples also show that subtree-inspired concept mining can extract relatively large and structured molecular subgraphs ($p_{20}$), which are then reused in symbolic rules.
    }
    \label{fig:gc_mutagenicity}
\end{figure}

\begin{figure}[htp]
    \centering
    \includegraphics[width=\linewidth]{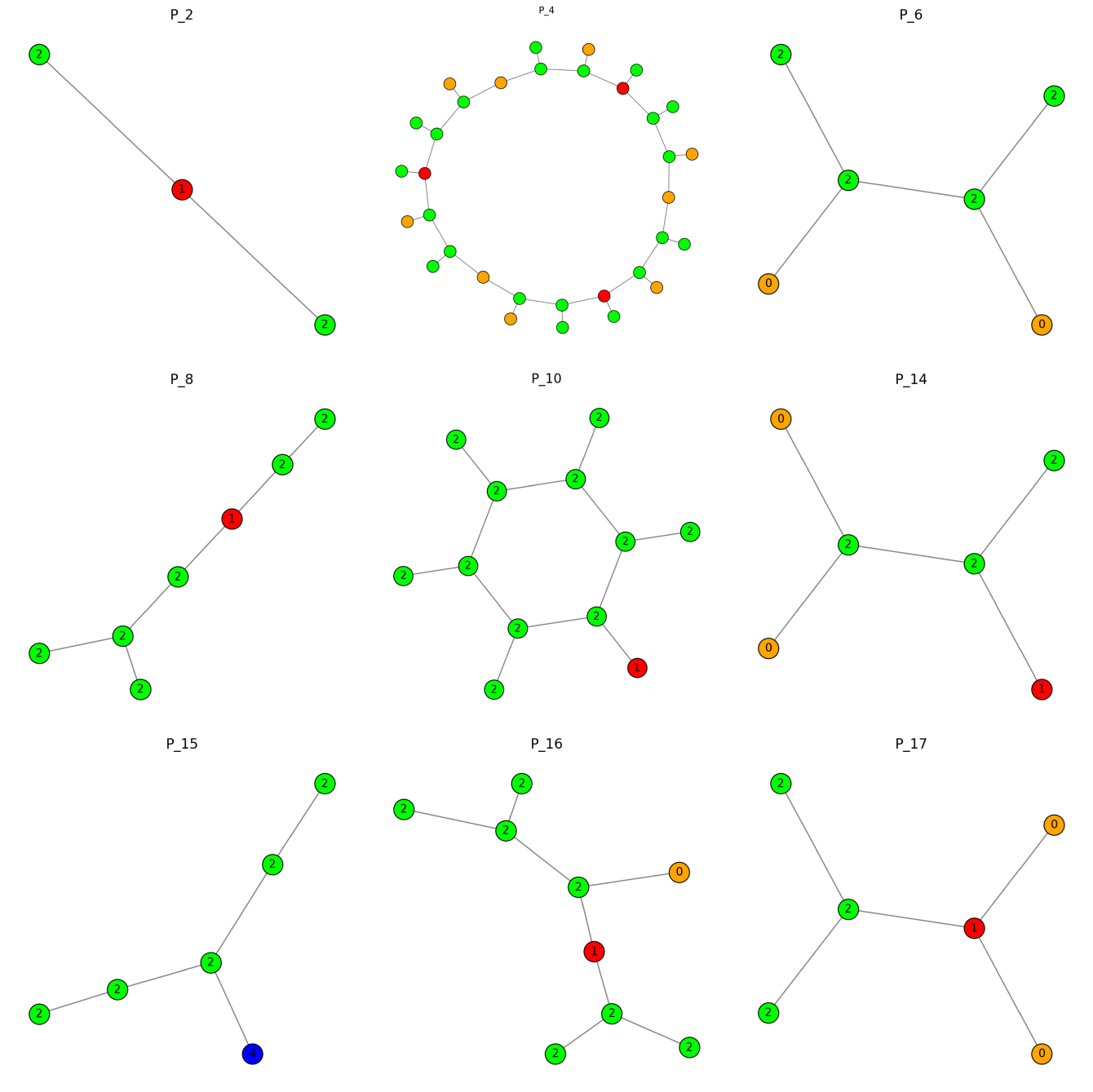}
    \vspace{1mm}
    \begin{small}
    \[
    \begin{array}{ll}
    (p_1\lor p_{10})\land p_8\land(p_{10} \lor p_{13})
    & \text{largest  logit contribution: Class 1}\\
    \neg p_{11}\land p_5 \land p_{10}
    & \text{largest  logit contribution: Class 1}\\
    p_9 \land p_{15}
    & \text{largest  logit contribution: Class 0}\\
    p_{16}\land p_{17}
    & \text{largest  logit contribution: Class 0}\\
    \end{array}
    \]
    \end{small}
    \caption{
    Some global concepts and example rules extracted by {\mymethod} on NCI1.
    The displayed concepts serve as reusable literals for rule-level logit reconstruction, and the listed rules show representative logit-contributing rules learned over these concepts.
    In NCI1, node labels encode discrete atom types and edges represent chemical bonds.
    Since the standard TU release does not provide an authoritative public lookup table from label IDs to human-readable atom symbols, we treat them as categorical atom-type identifiers. 
    Several extracted concepts correspond to structured molecular fragments, such as ring-like patterns in \(p_{10}\), which are reused by TreeX as rule literals.
    }
    \label{fig:gc_nci1}
\end{figure}

\subsection{Example Raw Rule Log by {\mymethod} on BA-Neg} \label{app:rule_log}
\cref{fig:raw_rules_ba_neg_1_2,fig:raw_rules_ba_neg_3_5} show representative raw rules learned by {\mymethod} on BA-Neg. 
Each rule is reported together with its symbolic expression, learned global weight, source class, training-set coverage and precision, activation statistics, and active-only ablation effect on the reconstructed logits. 
These logs illustrate that the learned rules are not merely post-hoc class labels attached to symbolic conditions. 
Instead, each rule has a measurable class-wise contribution to the rule-level readout.

For example, Rules 0 and 1 originate from Class 2 and have their largest positive active-only ablation effects on the Class-2 logit, while suppressing several competing classes. 
Rules 3, 4, and 5 similarly show class-specific logit effects: Rule 3 mainly supports Class 0, Rule 4 supports Class 3, and Rule 5 supports Class 1. 
This demonstrates that the source class discovered during class-wise rule generation is generally aligned with the downstream logit contribution after rules are merged into the shared rule bank.

The examples also highlight the importance of negative evidence. 
Several high-quality rules are defined partly or entirely through absent concepts, such as \(\neg p_7\land p_2\), \(\neg p_2\land\neg p_5\land\neg p_7\), and \(\neg p_2\land p_7\), which match the ground-truth rules of corresponding classes after mapping motifs to learned global concepts. 
These rules achieve high coverage and precision while producing strong class-wise logit effects, confirming that {\mymethod} can use both concept presence and concept absence as functional readout evidence.

\begin{figure}[htp]
\centering
\begin{rulelogbox}
### Rule 0

- Expression: `(P[5] OR NOT P[3]) AND (P[2] OR P[5]) AND (P[5] OR P[7])`
- Rule weight: 0.3613
- Source classes: [2]
- Coverage: 0.949
- Precision: 0.991
- Gap: 0.143
- F1: 0.969
- Threshold: 0.050
- Train weighted F1: 0.1617
- Train class discrimination power: 5.4358
- Train fire rate: 0.2333
- Train mean weighted activation: 0.0128
- Train active-only ablation delta per class: [-0.5946723222732544, -1.14484441280365, 2.745601177215576, -2.6901748180389404]
\end{rulelogbox}

\begin{rulelogbox}
### Rule 1

- Expression: `NOT P[4] AND (P[2] OR P[5]) AND (P[5] OR P[7])`
- Rule weight: 0.1549
- Source classes: [2]
- Coverage: 0.859
- Precision: 0.995
- Gap: 0.214
- F1: 0.922
- Threshold: 0.246
- Train weighted F1: 0.0615
- Train class discrimination power: 3.4622
- Train fire rate: 0.2104
- Train mean weighted activation: 0.0081
- Train active-only ablation delta per class: [-0.3549858629703522, -0.755026638507843, 1.7475519180297852, -1.7146141529083252]
\end{rulelogbox}
\begin{rulelogbox}
### Rule 2

- Expression: `P[2] AND P[5]`
- Rule weight: -0.1394
- Source classes: [0]
- Coverage: 0.992
- Precision: 0.610
- Gap: 0.011
- F1: 0.755
- Threshold: 0.017
- Train weighted F1: 0.0491
- Train class discrimination power: 0.3022
- Train fire rate: 0.4219
- Train mean activation: 0.0073
- Train mean weighted activation: -0.0010
- Train active-only ablation delta per class: [-0.012623555958271027, 0.10497571527957916, -0.15478305518627167, 0.14738908410072327]
\end{rulelogbox}
\caption{Example logical rules learned by {\mymethod} on BA-Neg, Rules 0--2.
Each box reports the learned rule expression, its global rule weight, source class, coverage and precision on the training split, activation statistics, and the active-only ablation effect on each class logit.
These raw logs show that {\mymethod} learns rules as weighted logit-contributing readout units rather than only class-wise symbolic summaries.}
\label{fig:raw_rules_ba_neg_1_2}
\end{figure}

\begin{figure}[htp]
\centering
\begin{rulelogbox}
### Rule 3

- Expression: `NOT P[7] AND P[2]`
- Rule weight: 0.0590
- Source classes: [0]
- Coverage: 0.988
- Precision: 0.972
- Gap: 0.973
- F1: 0.980
- Threshold: 1.000
- Train weighted F1: 0.0299
- Train class discrimination power: 6.2150
- Train fire rate: 0.2635
- Train mean activation: 0.2635
- Train mean weighted activation: 0.0156
- Train active-only ablation delta per class: [3.0899994373321533, -3.124983072280884, 1.2596594095230103, -1.8521641492843628]
\end{rulelogbox}
\begin{rulelogbox}
### Rule 4

- Expression: `NOT P[2] AND NOT P[5] AND NOT P[7]`
- Rule weight: 0.0627
- Source classes: [3]
- Coverage: 1.000
- Precision: 0.996
- Gap: 0.998
- F1: 0.998
- Threshold: 1.000
- Train weighted F1: 0.0296
- Train class discrimination power: 7.2977
- Train fire rate: 0.2438
- Train mean activation: 0.2438
- Train mean weighted activation: 0.0153
- Train active-only ablation delta per class: [-1.7749578952789307, -1.284544587135315, -3.730705499649048, 3.566969394683838]
\end{rulelogbox}
\begin{rulelogbox}
### Rule 5

- Expression: `NOT P[2] AND P[7]`
- Rule weight: 0.0548
- Source classes: [1]
- Coverage: 1.000
- Precision: 0.996
- Gap: 0.996
- F1: 0.998
- Threshold: 1.000
- Train weighted F1: 0.0267
- Train class discrimination power: 5.2629
- Train fire rate: 0.2552
- Train mean activation: 0.2552
- Train mean weighted activation: 0.0140
- Train active-only ablation delta per class: [-3.0420632362365723, 2.2208609580993652, -0.09059915691614151, -1.543188214302063]
\end{rulelogbox}
\caption{
Example logical rules learned by {\mymethod} on BA-Neg, Rules 3--5.
Each box reports the learned rule expression, its global rule weight, source class, coverage and precision on the training split, activation statistics, and the active-only ablation effect on each class logit.
These raw logs show that {\mymethod} learns rules as weighted logit-contributing readout units rather than only class-wise symbolic summaries.
}
\label{fig:raw_rules_ba_neg_3_5}
\end{figure}

\subsection{Test-Time Grounding Rule Visualizations of TreeX}
\label{app:test_ground}

\cref{fig:vis_bamult,fig:vis_BA_neg,fig:vis_mutagenicity} visualize test-time grounded explanations generated by {\mymethod} on various datasets. 
For each unseen graph, {\mymethod} selects the top active rules, grounds the satisfied literals to concrete global subgraph concepts, and reports the class-wise logit effect of ablating each rule. 
The rule-readout softmax closely matches the base GNN softmax in both examples, indicating that the grounded rules are not merely visually plausible explanations, but are sufficient to reconstruct the GNN's prediction behavior.

These examples highlight a key advantage of {\mymethod} over prior model-level GNN explainers. 
Prototype-based methods~\cite{yuan2020xgnn, wang2022gnninterpreter, yu2025mage} can show representative class patterns, and class-wise rule explainers~\cite{xuanyuan2023global,geng2026logicxgnn,azzolin2022global} can output formulas associated with a class, but they typically do not show how each rule changes the multiclass output of a specific test graph. 
In contrast, {\mymethod} instantiates global rules at test time and assigns each active rule a measurable contribution to the logits. 
Thus, an explanation can state not only which motif-level concepts are present or absent, but also whether a rule supports the predicted class, suppresses the competing class, and how strongly it contributes to the reconstructed decision.

\begin{figure*}[htp]
  \centering
  \includegraphics[width=\textwidth]{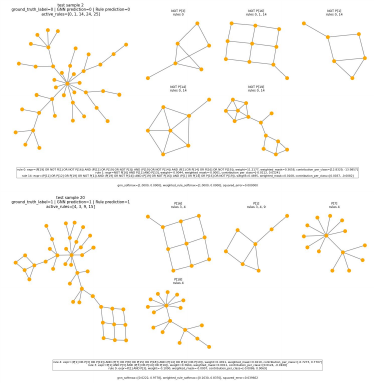}
  \caption{Test-time grounded explanations produced by {\mymethod} on BAMultiShapes.
    The upper subfigure shows a Class-0 graph, while the lower shows a Class-1 graph.
    For each graph, {\mymethod} grounds the top active rules to concrete subgraph concepts and reports each rule's class-wise logit effect computed by rule ablation.
    The base GNN softmax and the rule-readout softmax are shown below each subfigure, demonstrating that the grounded active rules closely reconstruct the GNN prediction.
    These examples show how {\mymethod} explains BAMultiShapes' combinatorial motif logic through functional rule-level evidence rather than through a single class-wise rule or prototype.}
  \label{fig:vis_bamult}
\end{figure*}

\paragraph{Test-time grounding on BAMultiShapes.}
\cref{fig:vis_bamult} visualizes test-time grounded explanations generated by {\mymethod} on BAMultiShapes. 
The upper panel shows a Class-0 example and the lower panel shows a Class-1 example. 
In BAMultiShapes, the labels are defined by motif combinations rather than by a single motif. 
The Class-0 and Class-1 examples demonstrate that {\mymethod} can reuse the same global concept bank to explain different combinatorial configurations, while still producing instance-specific rule activations and logit contributions. 
Therefore, {\mymethod} provides a more informative explanation object than a post-hoc class-wise summary: it links global symbolic rules, local grounded subgraphs, and quantitative logit-level effects within one test-time explanation.

\begin{figure*}[htp]
    \centering
    \includegraphics[width=\textwidth]{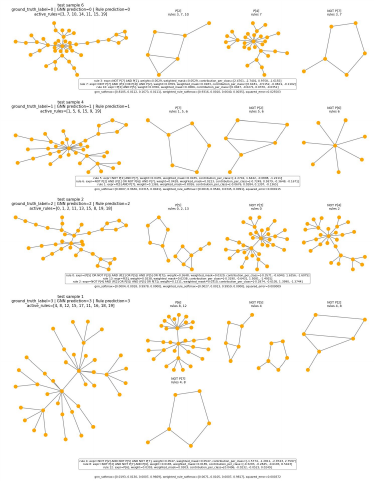}
    \caption{
    Test-time grounded explanations produced by {\mymethod} on BA-Neg.
    The four panels, from top to bottom, correspond to Class 0, Class 1, Class 2, and Class 3.
    For each unseen graph, {\mymethod} grounds the top active rules to concrete subgraph concepts, represents negative literals as required concept absence, and reports each rule's class-wise logit effect computed by ablation.
    The base GNN softmax and rule-readout softmax are shown below each panel, indicating that the grounded rules closely reconstruct the GNN prediction.
    }
    \label{fig:vis_BA_neg}
\end{figure*}

\paragraph{Test-time grounding on BA-Neg.}
We further visualize test-time grounded explanations on BA-Neg in \cref{fig:vis_BA_neg}. 
Unlike BAMultiShapes, whose main challenge is combinatorial motif composition, BA-Neg explicitly requires absence-based evidence: different classes are defined not only by which motifs are present, but also by which motifs are missing. 
From top to bottom, the examples cover all four classes. 
TreeX grounds positive literals to concrete subgraph concepts and represents negative literals as the absence of learned global concepts. 
The reconstructed rule-readout softmax closely matches the base GNN softmax, showing that the instantiated rules are sufficient to reproduce the GNN prediction. 
These examples demonstrate that TreeX can explain multiclass GNN decisions using both motif presence and motif absence, rather than only highlighting class-associated positive patterns.

\begin{figure*}[htp]
    \centering
    \includegraphics[width=\textwidth]{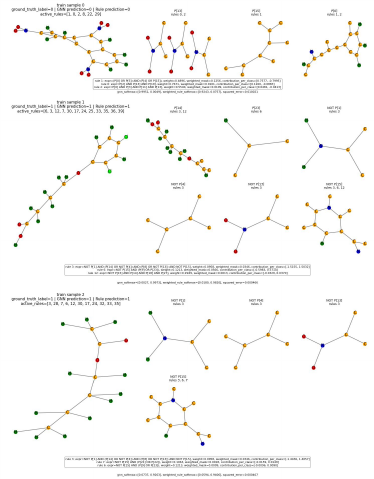}
    \caption{
    Test-time grounded explanations produced by {\mymethod} on Mutagenicity.
    From top to bottom, the panels show one Class-0 molecule and two Class-1 molecules.
    For each test molecule, {\mymethod} instantiates the top active rules, grounds satisfied positive literals to molecular subgraph concepts, visualizes negative literals as absent global concepts, and reports each rule's class-wise logit effect computed by ablation.
    The base GNN softmax, rule-readout softmax, and reconstruction error are shown below each panel.
    The examples include chemically meaningful concepts such as nitro groups \((\mathrm{-NO_2})\), amine-related groups \((\mathrm{-NH_2})\), and ring-like molecular substructures.
    }
    \label{fig:vis_mutagenicity}
\end{figure*}

\paragraph{Test-time grounding on Mutagenicity.}
\cref{fig:vis_mutagenicity} shows that {\mymethod} can provide chemically meaningful test-time rule grounding on real molecular graphs. 
In the Class-0 example, the molecule contains multiple nitro groups \((\mathrm{-NO_2})\), which are related to mutagenic effects. 
TreeX grounds the top active rules to these molecular substructures, and the corresponding ablation effects strongly push the reconstructed logits toward Class 0. 
The rule-readout softmax is also close to the base GNN softmax, indicating that these grounded rules capture the main evidence used by the GNN for this prediction.

The two Class-1 examples illustrate the complementary case. 
They do not contain obvious \(\mathrm{-NO_2}\) or \(\mathrm{-NH_2}\) groups, nor do they contain the larger connected aromatic-ring-like concept represented by \(p_4\) and $p_{15}$. 
TreeX therefore activates rules involving the absence of these concepts, producing logit effects that favor Class 1. 
This demonstrates that {\mymethod} does not only highlight present toxic substructures; it can also explain predictions through the absence of discriminative molecular concepts.

Some active rules contain literals that are only weakly instantiated or cannot be cleanly grounded to a salient local subgraph in the current molecule. 
In these cases, the corresponding activation mask is small, and the rule contributes only mildly to the final rule-readout logits. 
This behavior is desirable: rule-level explanations are weighted by their actual test-time activation, so visually weak or partially matched rules do not dominate the reconstructed prediction.

\subsection{Qualitative Comparison with Related Works} \label{app:gc_baselines}

\cref{sec:exp:compare_baseline} reports quantitative comparisons with class-wise rule-based explainers using Prediction Agreement (PA), the shared metric on which the baselines are designed to perform well. 
We now examine the explanations themselves. 
For rule- and concept-based methods, we compare with GLGExplainer~\cite{azzolin2022global}, LogicXGNN~\cite{geng2026logicxgnn}, and GCNeuron~\cite{xuanyuan2023global}. GraphTrail~\cite{armgaan2024graphtrail} is omitted from visualization because of its high concept-mining cost. 
For generation-based explainers such as XGNN~\cite{yuan2020xgnn}, GNNInterpreter\cite{wang2022gnninterpreter}, PAGE~\cite{shin2024page}, Gen-GraphEx~\cite{2025gengraphex}, and MAGE~\cite{yu2025mage}, the output is class-wise prototype graphs rather than executable rules, rule weights, test-time grounding, or logit contributions. 
Therefore, they are not directly comparable under our rule-level logit reconstruction metrics or prediction agreement metric, but we include examples below to show how their explanations differ from those produced by {\mymethod}. 
GnnXemplar~\cite{armgaan2026gnnxemplar} studies node classification by selecting exemplar nodes and using LLMs to derive natural-language signatures. This direction is complementary to our graph-classification setting, where the goal is to reconstruct graph-level GNN logits from grounded symbolic rules. We do not compare with GnnXemplar because it focuses on node classification, whereas we focus on graph classification.

\begin{figure}[htp]
    \centering
    \includegraphics[width=\textwidth]{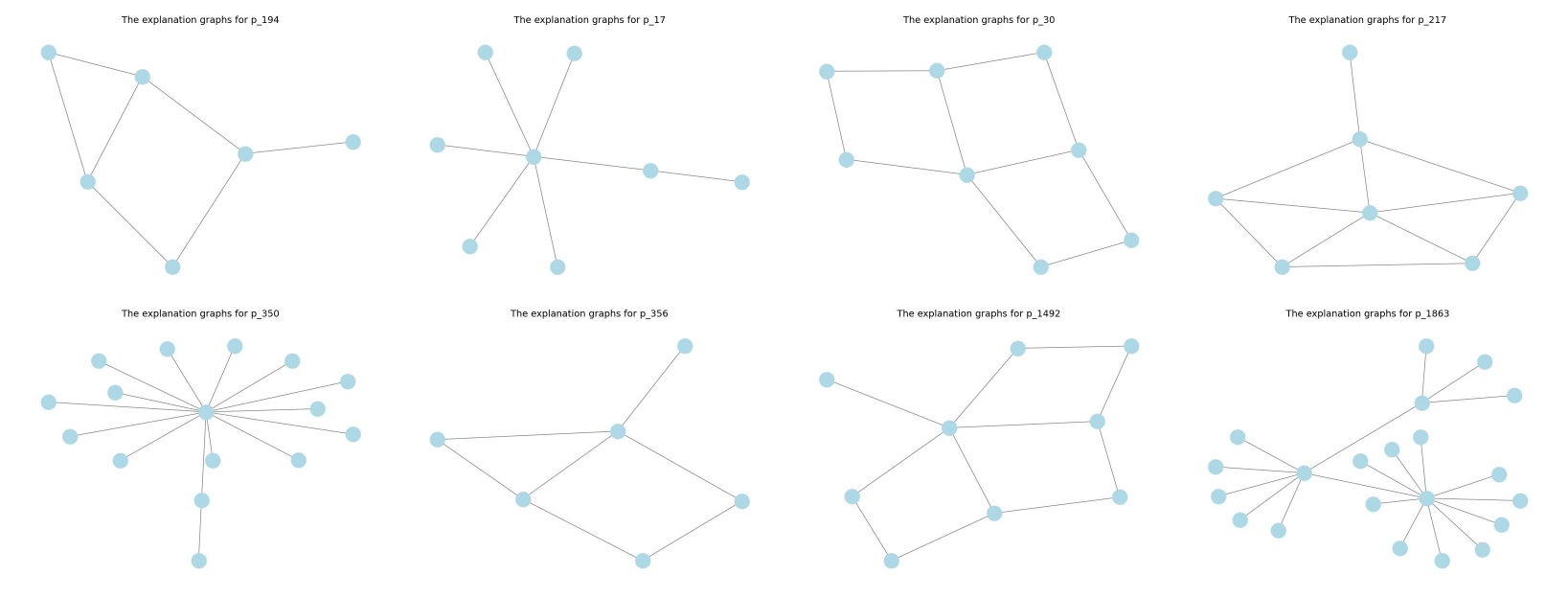}
    \vspace{1mm}
    \begin{small}
    \[
    \begin{array}{l}
    (\neg p_{30} \land \neg p_{217} \land \neg p_{1492} \land \neg p_{356} \land \neg p_{194})\lor (\neg p_{30} \land \neg p_{217} \land \neg p_{1492} \land p_{356} \land \neg p_{17}) \\
    \lor (\neg p_{30} \land \neg p_{217} \land p_{1492} \land p_{350})\lor (\neg p_{30} \land p_{217} \land p_{13} \land \neg p_{564} \land p_{1863})\lor (\neg p_{30} \land p_{217} \land p_{13} \land p_{564})\\ \lor (p_{30} \land \neg p_{24} \land \neg p_{564} \land \neg p_{18} \land p_{505})\lor (p_{30} \land \neg p_{24} \land \neg p_{564} \land p_{18} \land \neg p_{16}) \\ 
    \lor (p_{30} \land \neg p_{24} \land p_{564})\lor (p_{30} \land p_{24}) \Rightarrow{Class\ 0}\\
     \\
    (\neg p_{30} \land \neg p_{217} \land \neg p_{1492} \land \neg p_{356} \land p_{194})\lor (\neg p_{30} \land \neg p_{217} \land \neg p_{1492} \land p_{356} \land p_{17})\\
    \lor(\neg p_{30} \land \neg p_{217} \land p_{1492} \land \neg p_{350})\lor (\neg p_{30} \land p_{217} \land \neg p_{13} \land \neg p_{1492} \land \neg p_{191})\\
    \lor (\neg p_{30} \land p_{217} \land \neg p_{13} \land \neg p_{1492} \land p_{191})\lor (\neg p_{30} \land p_{217} \land \neg p_{13} \land p_{1492})\\
    \lor (\neg p_{30} \land p_{217} \land p_{13} \land \neg p_{564} \land \neg p_{1863})\lor (p_{30} \land \neg p_{24} \land \neg p_{564} \land \neg p_{18} \land \neg p_{505})\\
    \lor (p_{30} \land \neg p_{24} \land \neg p_{564} \land p_{18} \land p_{16}) \Rightarrow{Class\ 1}
    \end{array}
    \]
    \end{small}
    \caption{
    LogicXGNN~\cite{geng2026logicxgnn} explanation on BAMultiShapes.
    Their explanation is a pair of class-wise DNF rules over discrete predicates, with several predicates visualized above.
    Some predicates provide incomplete motif grounding, such as the grid-like predicates \(p_{30}\) and \(p_{1492}\).
    More importantly, parts of the Class-1 rule conflict with the known BAMultiShapes label logic.
    For instance,
    \((\neg p_{30}\land \neg p_{217}\land \neg p_{1492}\land \neg p_{356}\land p_{194})\)
    describes a graph with no grid-like motif, no wheel-like motif, and a house-like motif, which is a single-motif Class-0 case rather than an exactly-two-motif Class-1 case.
    Thus, although the learned rules are grounded, they do not clearly recover the combinatorial distinction between Class 0 and Class 1.
    }
    \label{fig:logicxgnn_bamult}
\end{figure}
\begin{figure}[htp]
    \centering
    \includegraphics[width=\textwidth]{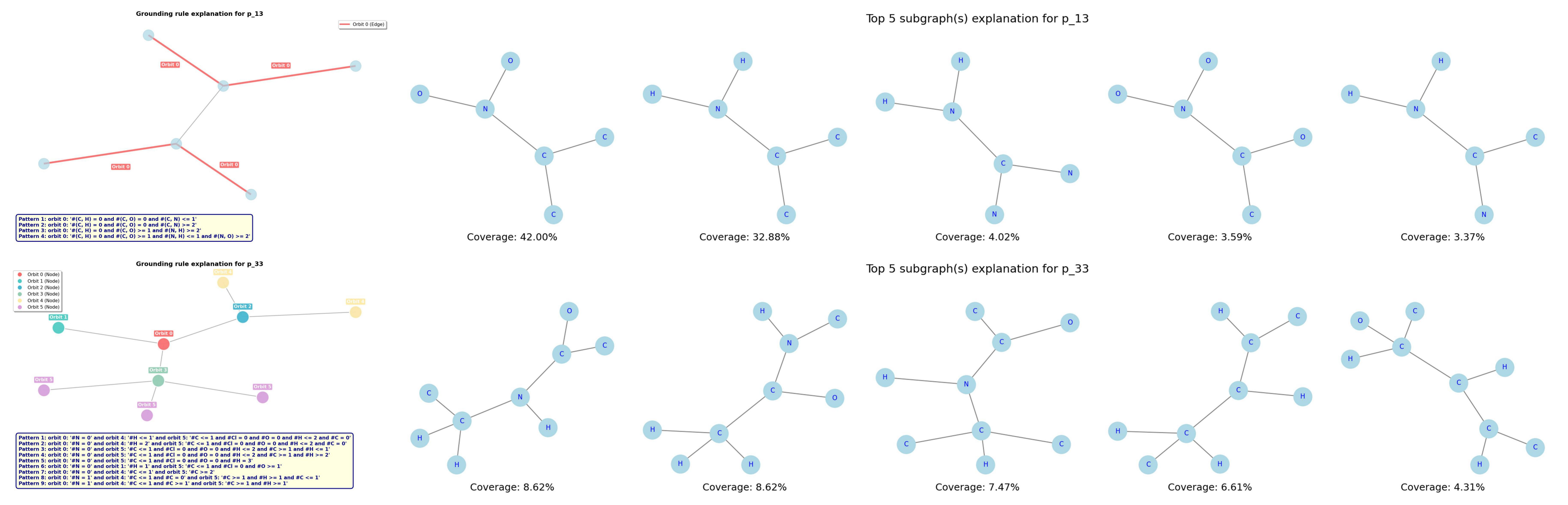}
    \vspace{1mm}
    \caption{
    Examples of LogicXGNN~\cite{geng2026logicxgnn} predicates on Mutagenicity.
    For this real world dataset, LogicXGNN grounds each predicate through topology-orbit conditions and shows multiple matched subgraph exemplars for the same predicate.
    For example, predicate \(p_{13}\) groups chemically different functional groups such as nitro-like \((\mathrm{-NO_2})\) and amine-like \((\mathrm{-NH_2})\) structures because they share similar local topology, while predicate \(p_{33}\) is defined by a complex set of orbit-level atom-count conditions.
    Thus, each predicate requires additional human interpretation to infer its chemical meaning, and the resulting predicate space does not necessarily align with chemically meaningful subgraph concepts.
    }
    \label{fig:logicxgnn_mutagenicity}
\end{figure}

\paragraph{Comparison with LogicXGNN.}
BAMultiShapes has known combinatorial label logic: Class 0 contains plain BA graphs, single motifs, and all three motifs, whereas Class 1 contains exactly two motifs. 
As shown in \cref{fig:logicxgnn_bamult}, LogicXGNN~\cite{geng2026logicxgnn} produces long class-wise DNF rules whose predicates are grounded but not always aligned with this logic. 
Some predicates only partially recover the intended motifs, and most Class-1 conditions correspond to Class-0 configurations. 
For instance, 
\((\neg p_{30}\land \neg p_{217}\land \neg p_{1492}\land \neg p_{356}\land p_{194})\)
describes a house-like motif without grid-like or wheel-like predicates, which matches Class 0 rather than the exactly-two-motif definition of Class 1. 
By contrast, as we discussed in Appendix~\ref{app:gc_my}, {\mymethod} recovers rules whose largest logit contributions align with the ground-truth motif combinations: Class-1 rules correspond to two-motif evidence, while Class-0 rules capture plain BA, single-motif, or all-three-motif cases. 
This comparison highlights the advantage of rule-based logit reconstruction: the rules are not only grounded, but also tied to class-wise logit effects that make the recovered decision logic easier to verify. 

We also examine the predicates extracted by LogicXGNN on Mutagenicity. 
Compared with BAMultiShapes, the predicate construction and grounding procedure on Mutagenicity becomes substantially more dataset-specific: predicates are described through enumeration-based topology-orbit conditions, and each predicate is associated with multiple matched subgraph exemplars rather than a single clean subgraph concept. 
This increases the burden on human users, who must inspect several exemplars and infer what chemical structure the predicate is intended to represent.

As shown in \cref{fig:logicxgnn_mutagenicity}, this issue is visible in predicates such as \(p_{13}\) and \(p_{33}\). 
Predicate \(p_{33}\) is specified by complex orbit-level atom-count conditions, making its semantic meaning difficult to read directly from the symbolic predicate. 
Predicate \(p_{13}\) further illustrates a chemical mismatch: nitro-like \((\mathrm{-NO_2})\) and amine-like \((\mathrm{-NH_2})\) structures are grouped into the same predicate because they share similar local topology. 
This is less aligned with chemical semantics, where such functional groups are usually interpreted separately.
We also do not observe a clean predicate corresponding to ring-like or benzene-like structures, although such structures are common and chemically meaningful in mutagenicity analysis.

From a design perspective, LogicXGNN constructs its predicates by matching a pre-enumerated local topology catalog. 
This may restrict the symbolic evidence space to patterns covered by the catalog, especially when the base GNN relies on structurally novel or mixed evidence at test time. 
This restriction may help explain why some learned rules are not fully aligned with the ground-truth logic on certain benchmarks.

In contrast, as discussed in Appendix~\ref{app:gc_my}, {\mymethod} mines global concepts as concrete subgraph prototypes from the GNN hidden space. 
On Mutagenicity, it separates \(\mathrm{-NO_2}\)-related and \(\mathrm{-NH_2}\)-related concepts and also extracts larger structured molecular subgraphs. 
More importantly, these concepts are not only displayed as visual prototypes: they serve as reusable literals in logical rules, are grounded on test molecules, and contribute quantitatively to reconstructed logits. 
Thus, {\mymethod} reduces the semantic gap between symbolic predicates and human-interpretable molecular evidence while preserving the ability to analyze class-wise logit effects. 

\paragraph{Comparison with GLGExplainer.}
GLGExplainer learns class-wise Boolean formulas over prototypes, but the prototypes are shown through multiple exemplar subgraphs rather than a single explicit grounded concept. 
As a result, the user must infer the meaning of each \(p_i\) by comparing several representatives. 

As shown in \cref{fig:glg_bamult}, on BAMultiShapes, this ambiguity makes the learned formulas difficult to verify against the known label logic. 
The Class-0 rule \(p_3\lor p_2\lor p_5\) does not explicitly distinguish plain BA, single-motif, and all-three-motif cases, while the long Class-1 disjunction is hard to map cleanly to the exactly-two-motif condition. 
In contrast, {\mymethod} grounds concepts as concrete subgraph prototypes and uses them in weighted rules with logit-level contributions, making the recovered combinatorial logic more directly inspectable.

We also examined GLGExplainer on Mutagenicity under the same setting as in their original paper, where only two prototypes are used.
As shown in~\cref{fig:glg_mutag}, unlike on synthetic datasets, where some prototypes may still loosely resemble motifs, the learned prototypes on Mutagenicity do not show a clear recurring substructure pattern.
Across five repeated runs, we did not observe an obvious chemically meaningful concept consistently represented by either \(p_0\) or \(p_1\).
This makes the learned explanation difficult to interpret: the formulas are short, but their semantic content remains unclear because each prototype is only presented through a set of exemplar subgraphs.

Moreover, the learned rule for Class 1, \((p_0 \land p_1)\lor p_0\), is logically reducible to \(p_0\), which further suggests that the final explanation is effectively driven by a single ambiguous prototype.
In contrast, {\mymethod} extracts explicit global subgraph concepts, such as chemically meaningful functional groups or larger structured molecular subgraphs, and reuses them as literals in rules with measurable logit-level contributions.

\begin{figure}[htp]
    \centering
    \vspace{-5mm}
    \includegraphics[width=0.9\textwidth]{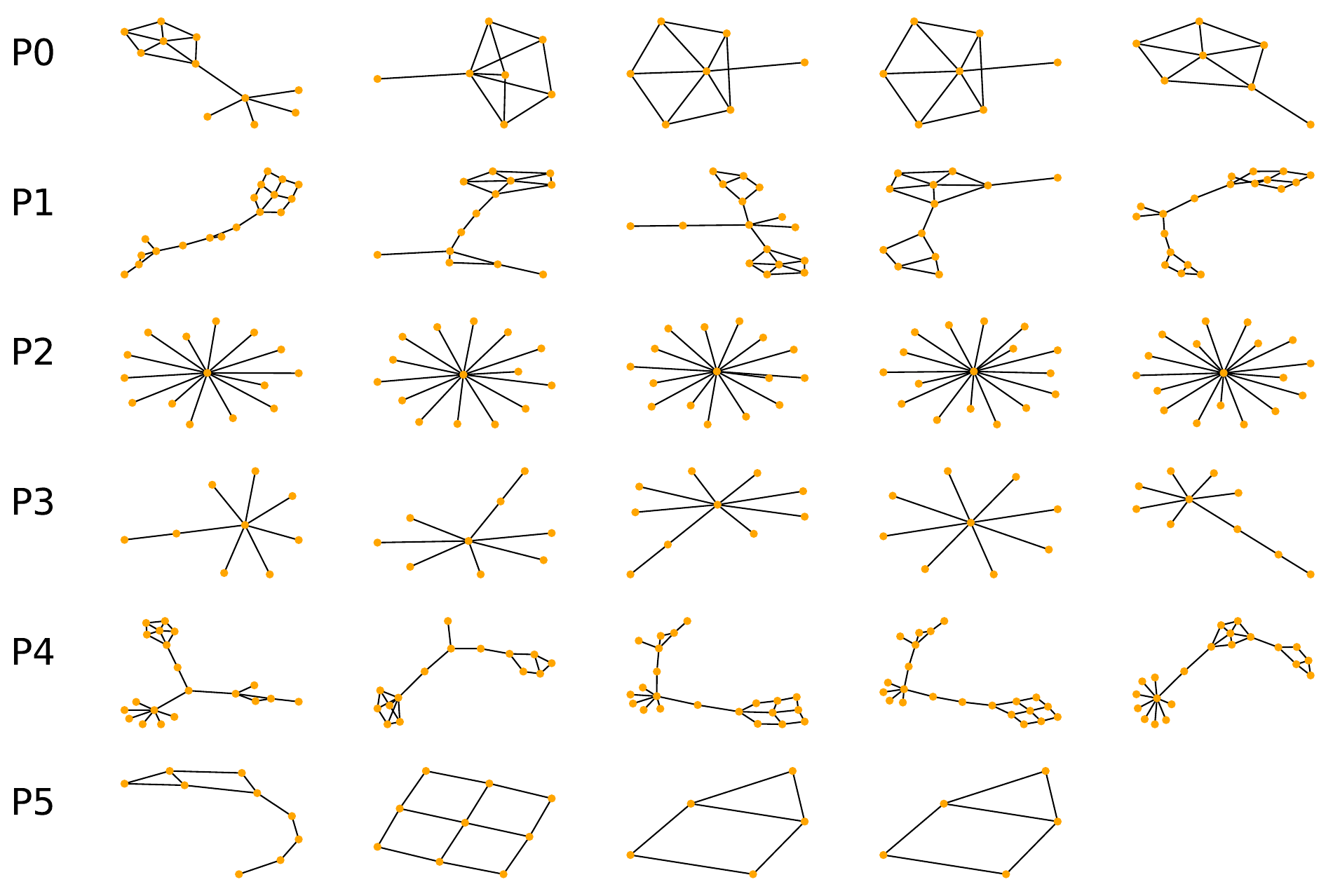}
    \vspace{1mm}
    \begin{small}
    \[
    \begin{array}{l}
    p_3 \lor p_2 \lor p_5
    \Rightarrow Class\ 0\\ \\
    p_4 \lor p_1 \lor p_0 \lor (p_2\land p_0) \lor (p_5\land p_0) \lor (p_4\land p_0) \lor (p_3\land p_0) \lor (p_2\land p_0) \lor (p_3\land p_4) \lor (p_3\land p_5) \\
    \lor (p_3\land p_2\land p_0) \lor (p_3\land p_5 \land p_0) \lor (p_3\land p_1)
    \Rightarrow Class\ 1\\
    \end{array}
    \]
    \end{small}
    \vspace{-3mm}
    \caption{
    GLGExplainer~\cite{azzolin2022global} explanation on BAMultiShapes.
    The learned prototypes \(p_i\) are illustrated by multiple exemplar subgraphs, leaving their semantic meaning to be inferred by the user rather than defined as a single grounded concept.
    The class-wise Boolean formulas below provide a global rule summary, but they do not cleanly match the known BAMultiShapes label logic: Class 0 should include plain BA, single-motif, and all-three-motif graphs, whereas Class 1 should include exactly-two-motif graphs. 
    }
    \label{fig:glg_bamult}
    \centering
    \vspace{5mm}
    \includegraphics[width=0.7\textwidth]{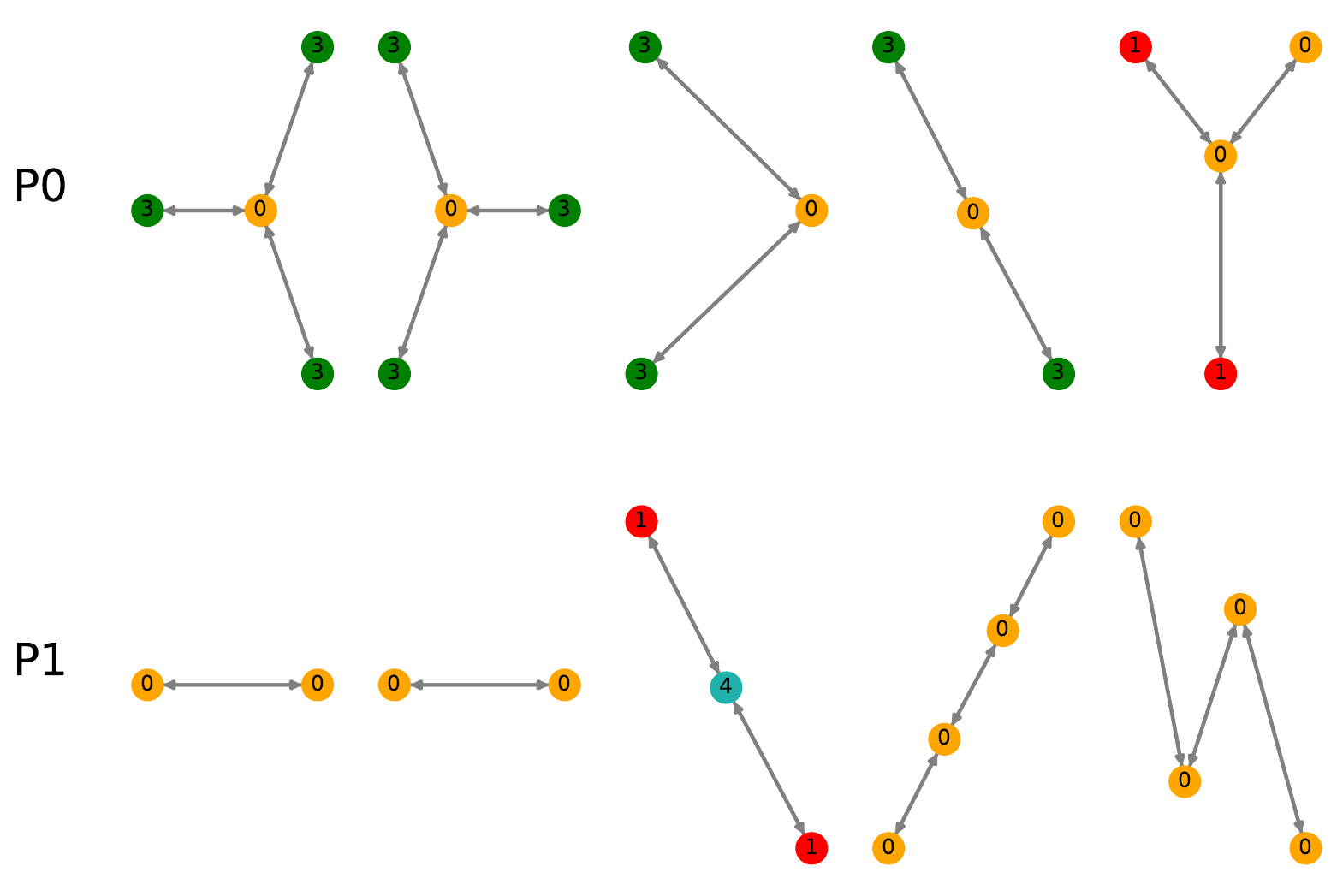}
    \vspace{1mm}
    \begin{small}
    \[
    \begin{array}{l}
    p_1
    \Rightarrow Class\ 0\\
    (p_0\land p_1)\lor p_0
    \Rightarrow Class\ 1\\
    \end{array}
    \]
    \end{small}
    \vspace{-3mm}
    \caption{
    GLGExplainer~\cite{azzolin2022global} explanation on Mutagenicity.
    Following the original setting in their paper for this dataset, we use two prototypes, \(p_0\) and \(p_1\).
    Each prototype is illustrated by multiple exemplar subgraphs rather than by a single grounded molecular concept.
    Across the displayed exemplars, neither \(p_0\) nor \(p_1\) exhibits a clear recurring chemically meaningful pattern, making their semantic meaning difficult to infer.
    }
    \label{fig:glg_mutag}
\end{figure}

\begin{figure}[htp]
    \centering
    \includegraphics[width=\textwidth]{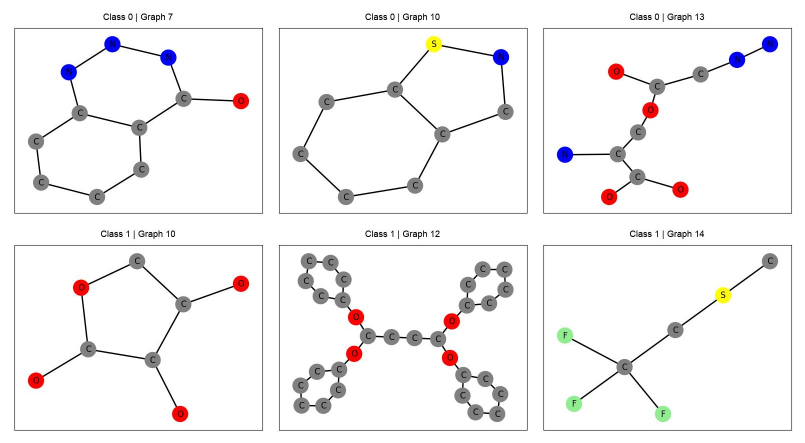}
    \caption{
    Class-wise prototype explanations generated by MAGE~\cite{yu2025mage} on Mutagenicity.
    The upper row corresponds to Class 0 and the lower row corresponds to Class 1.
    Such generation-based explanations visualize graph instances that the base GNN associates with a target class, but they do not expose how reusable concepts are composed into rules or how those rules contribute to the multiclass logits.
    }
    \label{fig:mage_mutag}
\end{figure}

\paragraph{Generation-based explanations.}
Generation-based model-level explainers such as XGNN~\cite{yuan2020xgnn}, GNNInterpreter~\cite{wang2022gnninterpreter}, and MAGE~\cite{yu2025mage} synthesize class-wise graph prototypes that the base GNN predicts as a target class.
As shown in \cref{fig:mage_mutag}, MAGE provides visual molecular instances associated with Class 0 or Class 1 on Mutagenicity.
However, these explanations do not explicitly define reusable logical rules, rule weights, test-time activations, or logit-level contributions.
They therefore answer a different question from {\mymethod}: what graph instances look class-discriminative, rather than how grounded symbolic evidence reconstructs the GNN's multiclass logits.
For this reason, we treat generation-based methods as qualitatively related but not directly comparable under our main rule-level logit reconstruction protocol. 

\begin{figure}[htp]
    \centering
    \begin{small}
    \[
    \begin{array}{lll}
    \textbf{BAMultShapes: } \\
    Class\ 0 & \neg\deg\operatorname{-greater}(16) & Score=0.999\\
    Class\ 0 & \neg\deg\operatorname{-greater}(4)\lor \deg\operatorname{-is}_n(2) & Score=0.457\\
    Class\ 0 & \deg\operatorname{-greater}(4)\land \neg\deg\operatorname{-is}_n(1) & Score=0.556\\
    Class\ 1 & \deg\operatorname{-greater}(7)\land \neg\deg\operatorname{-is}_n(3) & Score=0.416\\
    Class\ 1 & \deg\operatorname{-greater}(4)\land \neg\deg\operatorname{-is}_n(1) & Score=0.517\\
    Class\ 1 & \neg\deg\operatorname{-greater}(7)\land \neg\deg\operatorname{-is}_n(1) & Score=0.323\\ \\
    \textbf{Mutagenicity:} \\
    Class\ 0 & \neg\operatorname{2hop}(\mathrm{P})\lor \neg\operatorname{is}(\mathrm{N}) & Score=0.889\\
    Class\ 0 & \neg\operatorname{next-to}(\mathrm{N})\lor \neg\operatorname{next-to}(\mathrm{O})\lor\operatorname{is}(\mathrm{N}) & Score=0.241\\
    Class\ 0 & \neg\operatorname{next-to}(\mathrm{Cl})\land \neg\operatorname{next-to}(\mathrm{H})\land\operatorname{is}(\mathrm{C}) & Score=0.341\\
    Class\ 1 & \neg\operatorname{2hop}(\mathrm{P})\lor \neg\operatorname{is}(\mathrm{N}) & Score=0.913\\
    Class\ 1 & \neg\operatorname{next-to}(\mathrm{S})\land \neg\operatorname{next-to}(\mathrm{H})\land\operatorname{is}(\mathrm{C}) & Score=0.415\\
    Class\ 1 & \neg\operatorname{next-to}(\mathrm{H})\land(\operatorname{is}(\mathrm{C}) \lor \mathrm{NO}) & Score=0.393\\
    \end{array}
    \]
    \end{small}
    \caption{Example active concepts extracted by GCNeuron~\cite{xuanyuan2023global} on BAMultiShapes and Mutagenicity. 
    GCNeuron expresses explanations through dataset-specific concept templates, such as degree-based predicates on BAMultiShapes and atom-neighborhood predicates on Mutagenicity.
    The reported score measures the association strength between an extracted concept and a target class, but the explanation does not specify how multiple concepts are combined into an executable decision rule or how they contribute to the GNN logits.
    }
    \label{fig:gcneuron}
\end{figure}

\paragraph{Comparison with GCNeuron.}
GCNeuron explains a trained GNN by extracting class-associated active concepts from manually specified, dataset-dependent templates.
As shown in \cref{fig:gcneuron}, the templates are degree-based on BAMultiShapes and atom-neighborhood-based on Mutagenicity.
The reported score measures how strongly a concept is associated with a class, but it does not specify how multiple concepts are combined, how their activations determine the final prediction, or how they affect the multiclass logits.
Thus, GCNeuron provides concept-level summaries rather than executable rule-level explanations.
For instance, on BAMultiShapes, the same concept \(\mathrm{deg\text{-}greater}(4)\land \neg \mathrm{deg\text{-}is}_n(1)\) appears for both Class 0 and Class 1 with similar scores.
This illustrates that concept association alone does not specify a discriminative decision rule.
In contrast, {\mymethod} composes grounded concepts into logical rules, learns rule weights, instantiates rules on test graphs, and quantifies each rule's contribution to the reconstructed logits.



\end{document}